\documentclass[twoside,11pt]{article}

\usepackage{apacite}
\usepackage{jmlr2e} 
\usepackage{graphicx}
\usepackage{arydshln}
\usepackage[table,xcdraw]{xcolor}

\usepackage{multirow, float,bm,amsmath}
\setlength{\parindent}{0pt}
\setlength{\parskip}{1em} 

\usepackage{amssymb,amsmath,multirow, amsfonts}
\usepackage{latexsym,mathtext, array, enumerate, verbatim, hyperref, listings}
\usepackage{color}
\usepackage{dsfont,booktabs}
\usepackage{natbib}
\usepackage[margin=1in]{geometry}
\usepackage{fancyhdr}
\usepackage{pdflscape} 
\usepackage[font=small,skip=0pt]{caption}
\usepackage{subcaption} 
\usepackage{mathtools} 
\usepackage{enumitem}  
\usepackage{float}
\usepackage{lastpage}
\usepackage{tabu}

%

\newcommand{\bb}{{\boldsymbol b}}
\newcommand{\bc}{{\boldsymbol c}}

\newcommand{\be}{{\boldsymbol e}}

\newcommand{\bp}{{\boldsymbol p}}
\newcommand{\bq}{{\boldsymbol q}}

\newcommand{\bs}{{\boldsymbol s}}
\newcommand{\bt}{{\boldsymbol t}}
\newcommand{\bu}{{\boldsymbol u}}
\newcommand{\bv}{{\boldsymbol v}}

\newcommand{\bx}{{\boldsymbol x}}

\newcommand{\bz}{{\boldsymbol z}}

\newcommand{\bbw}{{\boldsymbol W}}

\DeclareMathSymbol{*}{\mathbin}{symbols}{"03}

\newtheorem{assumption}{Assumption}[section]


\jmlrheading{21}{2021}{1-\pageref{LastPage}}{}{}{}{Dimitris Bertsimas et al}
\ShortHeadings{A Robust Optimization Approach to Deep Learning}{Bertsimas et al}
\firstpageno{1}

\begin{document}
\title{\textcolor{black}{Robust Upper Bounds for Adversarial Training}}

\author{\name Dimitris Bertsimas \email dbertsim@mit.edu \\
       \addr Sloan School of Management and Operations Research Center\\
       Massachusetts Institute of Technology\\
       Cambridge, MA 02139, USA
       \AND
       \name Xavier Boix \email xboix@mit.edu \\
       \addr Department of Brain and Cognitive Sciences\\
       Massachusetts Institute of Technology\\
       Cambridge, MA 02139, USA
       \AND
       \name Kimberly Villalobos Carballo \email kimvc@mit.edu \\
       \addr Operations Research Center\\
       Massachusetts Institute of Technology\\
       Cambridge, MA 02139, USA
       \AND
       \name Dick den Hertog \email D.denHertog@uva.nl \\
       \addr Amsterdam Business School\\
       University of Amsterdam\\
       Plantage Muidergracht 12, 1018 TV  Amsterdam, The Netherlands}

\editor{}

\maketitle

\begin{abstract}
Many state-of-the-art adversarial training methods \textcolor{black}{for deep learning} leverage upper bounds of the adversarial loss to provide security guarantees {against adversarial attacks}. Yet, these methods {rely on \textcolor{black}{convex relaxations to propagate} lower and upper bounds for intermediate layers, which affect the tightness of the bound at the output layer}. We introduce a new approach to adversarial training by minimizing an upper bound of the adversarial loss {that is based on a holistic expansion of the network instead of separate bounds for each layer}. This bound is facilitated by state-of-the-art tools from \textcolor{black}{R}obust \textcolor{black}{O}ptimization; it has closed-form \textcolor{black}{ and can be effectively trained using backpropagation}. We derive two new methods with \textcolor{black}{the proposed} approach.  The first method (\textit{Approximated Robust Upper Bound} or aRUB) uses the first order approximation of the network as well as basic tools from \textcolor{black}{L}inear \textcolor{black}{R}obust \textcolor{black}{O}ptimization to obtain an \textcolor{black}{empirical} upper bound of the adversarial loss that can be easily implemented. The second method (\textit{Robust Upper Bound} or RUB),  computes a \textcolor{black}{provable} upper bound of the adversarial loss. Across a variety of tabular and vision data sets we demonstrate the effectiveness of \textcolor{black}{our} approach ---RUB is substantially more robust than state-of-the-art methods for larger perturbations, while aRUB matches the performance of state-of-the-art methods for small perturbations. 
All the code to reproduce the results can be found at \url{https://github.com/kimvc7/Robustness}. 

\end{abstract}

\begin{keywords}
 deep learning, optimization, robustness
\end{keywords}

\section{Introduction}
Robustness of neural networks for classification problems has received increasing attention in the past few years, since it was exposed that these models could be easily fooled by introducing some small perturbation in the input data. These perturbed inputs, which are commonly referred to as \textit{adversarial examples}, are visually indistinguishable from the natural input, and neural networks simply trained to maximize accuracy often assign them to an incorrect class~\citep{szegedy2013intriguing}.  This problem becomes particularly relevant when considering applications related to self-driving cars or medicine, in which adversarial examples represent an important security threat~\citep{kurakin2016adversarial}. 

The Machine Learning community has recently developed multiple heuristics to make neural networks robust. The most popular ones are perhaps those based on \textcolor{black}{training with adversarial examples}, a method first proposed by \cite{goodfellow2015explaining} and which consists in training the neural network using adversarial inputs instead of \textcolor{black}{or in addition to} the standard data. The defense by \cite{madry2019deep}, which finds the adversarial examples with bounded norm using iterative projected gradient descent (PGD) with random starting points, has proved to be one of the most effective methods \citep{ tjeng2019evaluating}, although it comes with a high computational cost. \textcolor{black}{Another more efficient defense was proposed by \cite{wong2020fast}, which uses instead fast gradient sign methods (FGSM) to find the attacks.} Other heuristic defenses rely on preprocessing or projecting the input space \citep{lamb, kabilan, samangouei}, on randomizing the neurons \citep{prakash, xie2017mitigating} or on adding a regularization term to the objective function \citep{ross2017improving, DBLP:journals/corr/HeinA17, yan2018deep}. There is a plethora of heuristics for adversarial robustness by now. Yet, these defenses are only effective to adversarial attacks  of small magnitude and are vulnerable to attacks of larger magnitude or to new attacks~\citep{athalye2018obfuscated}.

\textcolor{black}{Given the lack of an exact and tractable reformulation of the adversarial loss,} a recent strand of research has been to leverage upper bounds to improve adversarial robustness. These upper bounds provide security guarantees against adversarial attacks, even new ones, by finding a mathematical proof that a network is not susceptible to any attack, e.g.~\citep{dathathri2020enabling,raghunathan2018semidefinite,katz2017reluplex,tjeng2019evaluating,bunel2017unified,anderson2020strong,singh2018fast,zhang2018efficient,weng2018towards,gehr2018ai2,dvijotham2018dual,lecuyer2019certified,cohen2019certified}. \textcolor{black}{Replacing the standard loss with these upper bounds during training is a common technique for obtaining adversarial defenses. \cite{wong2018provable} for instance, find an upper bound for the adversarial loss by applying linear relaxations in the network and computing a convex polytope that contains all possible values for the last layer given adversarial examples with bounded norm.} An upper bound on the adversarial loss is also computed in \cite{raghunathan2018certified} by solving instead a semidefinite program. Other more scalable and effective methods based on minimizing an upper bound of the adversarial loss  have been introduced~\citep{gowal2019scalable,balunovic2019adversarial,mirman2018differentiable,dvijotham2018training,wong2018scaling,zhang2019towards}.

While these methods \textcolor{black}{can} provide security guarantees \textcolor{black}{against adversarial examples, most of them rely on \textcolor{black}{convex relaxations} to recursively compute upper and lower bounds for each layer, which introduces gaps that propagate and can affect the final bound for the last layer. For instance, the approach proposed in \cite{gowal2019scalable} computes bounds for each layer by assuming that the worst-case bounds for all previous layers can be achieved simultaneously. This often yields a loose upper bound of the adversarial loss whose minimization can be sensitive to hyperparameters \citep{zhang2019towards}. Another example is the aforementioned defense from \cite{wong2018provable}, where bounds at each layer are computed by solving a linear program that uses the bounds from previous layers for the linear ReLU relaxations. \textcolor{black}{Unlike these approaches, the method proposed in \cite{raghunathan2018certified} does not require computation of intermediate bounds}, however, their proposed upper bound }only works for neural networks with two layers. 

\textcolor{black}{A promising yet under-explored approach is the application of state-of-the-art Robust Optimization (RO) tools~\citep{bertsimasbook2022}. RO has proven to be effective in handling uncertainty in parameters that may result from rounding or implementation errors. Recently, it has also been applied to provide robustness against input perturbations in some machine learning models, such as Support Vector Machines and Optimal Classification Trees~\citep{bertsimas2019robust}, and it could be similarly leveraged for deep learning. While previous works on robustness of neural networks generally formulate the problem in the context of RO, they do not utilize the more advanced tools available in this field. Instead, they mostly depend on linear or convex relaxations and heuristic methods to simplify the original non-convex problem. In this paper, we use state-of-the-art RO tools to derive a new closed-form solution of an upper bound of the adversarial loss.  Our approach is based on a holistic expansion of the network; it does not rely on convex relaxations or separate computation of bounds for each layer of the network, and it can still be effectively trained with backpropagation.}



We develop two new methods for training deep learning models that are robust against input perturbations.  
 The first method (\textit{Approximated Robust Upper Bound} or aRUB),  minimizes an \textcolor{black}{empirical} upper bound of the \textcolor{black}{adversarial loss} for $L_p$ norm bounded uncertainty sets, for general $p$. It is simple to implement and performs similar to \textcolor{black}{state-of-the-art defenses} on small uncertainty sets. The second method (\textit{Robust Upper Bound} or RUB), minimizes a \textcolor{black}{provable} upper bound of the \textcolor{black}{adversarial loss} specifically for $L_1$ norm bounded uncertainty sets. This method shows the best performance for larger uncertainty sets, and more importantly, it provides security guarantees against $L_1$ norm bounded adversarial attacks. 
More concretely, we introduce the following robustness methods:

\begin{itemize}
    \item \textit{Approximated Robust Upper Bound} or aRUB: We develop a simple method to approximate an upper bound of the adversarial loss by adding a regularization term for each \textcolor{black}{target} class separately. As an alternative to standard adversarial training (which relies on linear approximations to find good adversarial attacks), we use the first order approximation of the network to estimate the worst case scenario for each individual class. We then apply standard results from \textcolor{black}{L}inear \textcolor{black}{R}obust \textcolor{black}{O}ptimization to obtain a new objective that behaves like an upper bound of the \textcolor{black}{adversarial loss} and which can be tractably minimized for robust training. This method can be easily implemented and performs very well when the uncertainty set radius $\rho$ is small. 

    \item \textit{Robust Upper Bound} or RUB: We extended state-of-the-art tools from \textcolor{black}{RO} to functions that like neural networks are neither convex nor concave. By splitting each layer of the network as the sum of a convex function and a concave function, we are able to obtain an upper bound of the \textcolor{black}{adversarial loss} for the case in which the uncertainty set is the $L_1$ sphere. Since the dual function of the $L_1$ norm is the $L_{\infty}$ norm, we convert the maximum over the uncertainty set into a maximum over a finite set. In the end, instead of minimizing the worst case loss over an infinite uncertainty set, the new objective minimizes the worst case loss over a discrete set whose cardinality is twice the dimension of the input data. While this represents a significant increase in memory for high dimensional inputs, we show that this approach remains tractable for multiple applications. 
    The main advantage of this method is that it provides security guarantees against adversarial examples bounded in the $L_1$ norm. Additionally, we also show experimentally that this method generally achieves the highest adversarial accuracies for larger uncertainty sets.
    
\end{itemize}

Also, we show that these methods consistently achieve higher standard accuracy (i.e., non adversarial accuracy), than the nominal neural networks trained without robustness. While this result is not true for a general choice of uncertainty set (see for example \cite{ilyas2019adversarial}), we observe that when the uncertainty set has the appropriate size it can significantly improve the classification performance of the network, which is consistent with the results obtained for other classification models like Support Vector Machines, Logistic Regression and Classification Trees \citep{bertsimas2019robust}.

The paper is organized as follows: Section \ref{sec:previous} revisits previous works on \textcolor{black}{RO}, Section \ref{sec:optimization-problem} defines the robust problem, Section \ref{sec:aRUB} presents \textcolor{black}{the} first method (Approximate Robust Upper Bound), and Section \ref{sec:RUB} contains \textcolor{black}{the} second method (Robust Upper Bound). Lastly, Section \ref{sec:experiments} contains the results for the computational experiments.

\section{Previous Works on Robust Optimization}
\label{sec:previous}

Over the last two decades, \textcolor{black}{RO} has become a successful approach to solve optimization problems under uncertainty. For an overview of the primary research in this field we refer the reader to \cite{bertsimas2011theory}. Areas like mathematical programming and engineering have long applied these tools to develop models that are robust against uncertainty in the parameters, which may arise from rounding or implementation errors. For many applications, the robust problem can be reformulated as a tractable optimization problem, which is referred to as the \textit{robust counterpart}. For instance, for several types of uncertainty sets, the robust counterpart of a linear programming problem can be written as a linear or conic programming problem \citep{ben2009robust}, which can be solved with many of the current optimization software. While there is not a systematic way to find robust counterparts for a general nonlinear uncertain problem, multiple techniques have been developed to obtain tractable formulations in some specific nonlinear cases.

As shown in \cite{ben2009robust}, the exact robust counterpart is known for Conic Quadratic problems and  Semidefinite problems in which the uncertainty is finite, an interval or an unstructured norm-bounded set. More generally, it is shown in \cite{ben2015deriving} that for problems in which the objective function or the constraints are concave in the uncertainty parameters, Fenchel duality can be used to exactly derive the corresponding robust counterpart. While the result does not necessarily have a closed-form, the authors show that it yields a tractable formulation for the most common uncertainty sets (e.g. polyhedral and ellipsoidal uncertainty sets).

The problem becomes significantly more complex when the functions in the objective or in the constraints are instead convex in the uncertainty \citep{chassein2019complexity}. Since obtaining \textcolor{black}{provable} robust counterparts in these cases is generally infeasible, safe approximations are considered instead \citep{bertsimasrobust}. For instance, \cite{zhen2017robust} develop safe approximations for the specific cases of second order cone and semidefinite programming constraints with polyhedral uncertainty. These techniques are generalized in \cite{roos2020tractable}, where the authors convert the robust counterpart to an adjustable \textcolor{black}{RO} problem that produces a safe approximation for any problem that is convex in the optimization variables as well as in the the uncertain parameters. 

Even though the approaches mentioned above consider uncertainty in the parameters of the model as opposed to uncertainty in the input data, the same techniques can be utilized for obtaining robust counterparts in the latter case. In fact, the robust optimization \textcolor{black}{RO} methodologies have recently been applied to develop machine learning models that are robust against perturbations in the input data. In \cite{bertsimas2019robust}, for example, the authors consider uncertainty in the data features as well as in the data labels to obtain tractable robust counterparts for some of the major classification methods: support vector
machines, logistic regression, and decision trees. However, due to the high complexity of neural networks as well as the large dimensions of the problems in which they are often utilized, robust counterparts or safe approximations for this type of models have not yet been developed. There are two major challenges with applying \textcolor{black}{RO} tools for training robust neural networks: 

\begin{enumerate}[label=(\roman*)]
    \item \textit{Neural networks are neither convex nor concave functions:} As mentioned earlier, robust counterparts are difficult to find for a general problem. Although plenty of work has been done to find tractable reformulations as well as safe approximations, all of them rely on the underlying function being a convex or a concave function of the uncertainty parameters. Unfortunately, neural networks don't satisfy either condition, which makes it really difficult to apply any of the approaches discussed above. \\
    \item \textit{The robust counterpart needs to preserve the scalability of the model:} Neural networks are most successful in problems involving vision data sets, which often imply large input dimensions and enormous amount of data. For the most part, they can still be successfully trained thanks to the fact that back propagation algorithms can be applied to solve the corresponding unconstrained optimization problem. However, the \textcolor{black}{RO} techniques for both convex and concave cases often require the addition of new constraints and variables for each data sample, increasing significantly the number of parameters of the network and making it very difficult to use standard machine learning software for training.
\end{enumerate} 

A straightforward way to overcome both of these difficulties would be to replace the loss function of the network with its first order approximation. However, this loss function is usually highly nonlinear and therefore the linear approximation is very inaccurate. Our method aRUB explores a slight modification of this approach that significantly improves adversarial accuracy by considering only the linear approximation of the network's output layer. 

Alternatively, a more rigorous approach to overcome problem (i) would be to piece-wise analyze the convexity of the network and apply the \textcolor{black}{RO} techniques in each piece separately, but this approach would introduce additional variables that are in conflict with requirement (ii). For \textcolor{black}{the proposed} RUB method we then develop a general framework to split the network by convexity type, and we show that in the specific case in which the uncertainty set is the $L_1$ norm bounded sphere, we can solve for the extra variables and obtain an unconstrained problem that can be tractably solved using standard gradient descent techniques.

\section{The Robust Optimization problem}\label{sec:optimization-problem}
We consider a classification problem over data points $\bx\in \mathbb{R}^M$ labeled with one of $K$ different classes \textcolor{black}{in [K], where we use the notation [n] to denote the set $\{1, \hdots, n\}$.} Given weight matrices $\bbw^\ell\in \mathbb{R}^{r_{\ell-1}\times r_{\ell}}$ and bias vectors $\bb^\ell\in \mathbb{R}^{r_\ell}$ for $\ell\in [L]$,  such that $r_0= M, r_L = K$, the corresponding feed forward neural network with $L$ layers and ReLU activation function is defined by the equations
\begin{align*}
    \bz^1(\theta, \bx) &= \bbw^1\bx+\bb^1,\\
    \bz^\ell(\theta, \bx) &= \bbw^\ell [\bz^{\ell-1}(\theta, \bx)]^+ + \bb^{\ell}, \quad \forall \: 2\leq \ell \leq L,
\end{align*}
where $\theta$ denotes the set of parameters $(\bbw^\ell, \bb^\ell)$ for all $\ell \in [L]$ and $[\bx]^+$ is the result of applying the ReLU function to each coordinate of $\bx$. For fixed parameters $\theta$, the network assigns a sample $\bx$ to the class $\hat{y}= \arg\max_k \: \bz^L_k(\theta, \bx)$. And given a data set $\{({\bx}_n, {{y}}_n)\}_{n=1}^N$, where $y_n\textcolor{black}{\in [K]}$ is the target class of $\bx_n$, the optimal parameters $\theta$ are usually found by minimizing the empirical loss
\begin{align}\label{eq:nominal}
\min_{\theta}\enspace \frac{1}{N}{\sum_{n=1}^{N} \mathcal{L}({y_n}, \bz^L(\theta, {\bx}_n))},
\end{align}
with respect to a specific loss function \textcolor{black}{$\mathcal{L}: [K]\times \mathbb{R}^K\rightarrow \mathbb{R}_{\geq 0}$.}

In the \textcolor{black}{RO} framework, however, we want to find the parameters $\theta$ by minimizing the worst case loss achieved over an uncertainty set of the input. More specifically, instead of \textcolor{black}{ optimizing the nominal loss in Eq. \eqref{eq:nominal}, we want to optimize the adversarial loss:}
\begin{align}
\min_{\theta}\enspace \frac{1}{N}{\sum_{n=1}^{N} \max_{\boldsymbol{\delta} \in \mathcal{U}} \: \mathcal{L}\left({y_n}, \bz^L(\theta, \:{\bx}_n + \boldsymbol{\delta})\right)},
\label{minmax-problem}
\end{align}
for some uncertainty set $\mathcal{U}\subset \mathbb{R}^M$. 

Unfortunately, a closed-form expression for the inner maximization problem above is unknown and solving the min-max problem is notoriously difficult. \textcolor{black}{RO} provides multiple tools for solving such problems when the loss function is either convex or concave in the input variables. For example, in the case of concave loss functions, a common approach would be to take the dual of the maximization problem so that the problem can be formulated as a single minimization problem \citep{bertsimasbook2022}. If the loss function is instead convex, Fenchel's duality as well as conjugate functions can be used to find upper bounds and lower bounds of the maximization problem. However, there is no general framework developed for loss functions that do not fall into those categories, like in the case of neural networks.  \\

In this paper, we will focus on the specific case in which the uncertainty set is the ball of radius $\rho$ in the $\text{L}_p$ space; i.e. $\mathcal{U} = \{{\boldsymbol{\delta}} : \|{\boldsymbol{\delta}}\|_p \leq \rho\}$, \textcolor{black}{ and we make the following assumptions about the loss function:} 
\textcolor{black}{
 \begin{assumption}\label{assumption:trans-inv}
 The loss function $\mathcal{L}$ is translationally invariant; i.e. for all $y\in[K], \boldsymbol{z}\in \mathbb{R}^K$, it satisfies
 \begin{align}\label{trans-invariant}
     \mathcal{L}(y, \bz)  =  \mathcal{L}(y ,\bz - c\boldsymbol{e}) \quad \forall \: c\in \mathbb{R},
 \end{align}
 where $\boldsymbol{e}\in\mathbb{R}^K$ denotes the vector with value $1$ in all the coordinates.
 \end{assumption}}
 \textcolor{black}{\begin{assumption} \label{assumption:monotone}
 The loss function $\mathcal{L}$ is monotonic; i.e. \textcolor{black}{for all $ \: y\in[K], \: \: \bz,\bz^\prime \in \mathbb{R}^K$} it satisfies 
 \begin{align}\label{monotonic}
    \textcolor{black}{\bigg(\forall  k\in[K],  \bz_k - \bz_y\leq \bz^\prime_k - \bz^\prime_y\bigg)\: \: \implies  \bigg(\mathcal{L}(y, \bz)\leq \mathcal{L}({y, \bz^\prime})\bigg).}
 \end{align}
 \end{assumption}}
\textcolor{black}{
Although some loss functions utilized in deep learning like the squared error or the absolute error do not satisfy these assumptions, the most popular losses for classification problems (softmax with cross-entropy loss, multiclass hinge loss, and hardmax with zero-one loss) do satisfy both of them. Intuitively, Assumption \ref{assumption:trans-inv} implies that the loss function $\mathcal{L}$ takes into account the differences between the coordinates of $\bz$ but not the exact value at each coordinate; while Assumption \ref{assumption:monotone} means that a larger difference between the coordinates of the incorrect classes and the correct class results in a larger loss. These assumptions allow us to obtain the following result:}
\textcolor{black}{
\begin{lemma}\label{lemma:gen_upper_bound}
If the loss function $\mathcal{L}$ satisfies Assumptions \ref{assumption:trans-inv} and \ref{assumption:monotone}, then for all $\bx\in \mathbb{R}^M, y\in[K]$ the adversarial loss can be upper bounded as
\begin{align}
    &\min_\theta \:\max_{\boldsymbol{\delta}\in \mathcal{U}} \: \mathcal{L}(y, \bz^L(\theta,\bx + \boldsymbol{\delta}))\\
    \leq &\min_\theta \: \mathcal{L}\left(y , \bigg(\max_{\boldsymbol{\delta}\in \mathcal{U}} \bz_1^L(\theta, \bx + \boldsymbol{\delta})- \bz_y^L(\theta, \bx + \boldsymbol{\delta}),\dots, \max_{\boldsymbol{\delta}\in \mathcal{U}} \bz_K^L(\theta, \bx + \boldsymbol{\delta}) - \bz_y^L(\theta, \bx + \boldsymbol{\delta})\bigg) \right).
\end{align}
\end{lemma}}
\textcolor{black}{\begin{proof}
See Appendix \ref{A1}.\hfill
\end{proof} The robustness methods we propose in the next sections generate upper bounds for the adversarial differences $\max_{\boldsymbol{\delta}\in \mathcal{U}}\: \bz_k^L(\theta, \bx + \boldsymbol{\delta})- \bz_y^L(\theta, \bx + \boldsymbol{\delta})$ for all $k\in[K]$, and then apply the previous lemma to upper bound the \textcolor{black}{adversarial loss}.}

\section{Approximate Robust Upper Bound for small \texorpdfstring{$\rho$}{Lg}}\label{sec:aRUB}
Perhaps the most intuitive approach to tackle problem \eqref{minmax-problem} is to consider the first order approximation  of the loss function 
\begin{align*}{\mathcal{L}}\left(y, \bz^L(\theta, {\bx} + \boldsymbol{\delta})\right) \approx \mathcal{L}\left(y, \bz^L(\theta, \:{\bx})\right) + \boldsymbol{\delta}^\top \nabla_{\bx}\mathcal{L}\left(y, \bz^L(\theta, \:{\bx})\right), \end{align*}
since the right hand side is a linear function of  $\boldsymbol{\delta}$ and the maximization problem can be more easily solved for linear functions of the uncertainty. For example, it is not hard to see that the first order approximation reaches its maximum value in $\mathcal{U} = \{\boldsymbol{\delta} : \|\boldsymbol{\delta}\|_\infty\leq \rho\}$ exactly at $\boldsymbol{\delta}^\star = \rho \: sign\left(\nabla_{\bx}\mathcal{L}\big(y, \bz^L(\theta \:{\bx})\big)\right)$. This approach is referred to as \textit{fast gradient sign method} and it was first explored in \cite{goodfellow2015explaining}, where the networks are trained with adversarial examples generated as $\boldsymbol{x} + \boldsymbol{\delta}^\star$. A similar approach was proposed in \cite{huang2016learning}, where the authors considered the cross entropy loss and use the linear approximation of the softmax layer instead of the approximation of the entire loss function. In these methods, linear approximations are used to find near optimal perturbations that can produce strong adversarial examples for training, but not to approximate the adversarial loss. 

An alternative to these methods would then be to train the network with the natural data and replace the loss function with its linear approximation, transforming the problem into
\begin{align}
    &\min_{\theta}\enspace \frac{1}{N}{\sum_{n=1}^{N}  \:\max_{\boldsymbol{\delta} \in \mathcal{U}} \: \mathcal{L}\left(y_n, \bz^L(\theta, {\bx}_n)\right) + \boldsymbol{\delta}^\top \nabla_{\bx}\mathcal{L}\left(y_n, \bz^L(\theta, {\bx}_n)\right)} \nonumber\\
    =& \min_{\theta}\enspace \frac{1}{N}{\sum_{n=1}^{N}  \: \mathcal{L}\left(y_n, \bz^L( \theta,{\bx}_n)\right) + \rho \| \nabla_{\bx}\mathcal{L}\left(y_n, \bz^L(\theta, {\bx}_n)\right)\|_q},
\label{simple-approx-problem}
\end{align}
where $\|\:\|_q$ is the dual norm of $\|\:\|_p$, satisfying $\frac{1}{p} + \frac{1}{q} =1$. However, since the loss function is highly nonlinear, this approach \textcolor{black}{(which we call Baseline-$L_{p}$)} generally performs worse than training with adversarial examples (see the Baseline method in Section \ref{sec:experiments}). \\

\textcolor{black}{A more promising approach can be derived by noting} that each component of the network $\bz^L(\theta, \boldsymbol{x})$ is in fact a continuous piecewise linear function \textcolor{black}{(see the network definitions in Section \ref{sec:optimization-problem})}, \textcolor{black}{which suggests} that the first order approximation of $\bz^L$ is more precise than that of $\mathcal{L}(y, \bz^L)$ \textcolor{black}{for small neighborhoods}. In fact, we expect the outputs $\bz^L(\theta, \boldsymbol{x})$ and $\bz^L(\theta, \boldsymbol{x} + \boldsymbol{\delta})$ to be in the same linear piece when $\boldsymbol{x} + \boldsymbol{\delta}$ is close to $\boldsymbol{x}$. In other words, the linear approximation 
\begin{align}
    \bz^L(\theta, \boldsymbol{x} + \boldsymbol{\delta}) \approx \bz^L(\theta, \boldsymbol{x}) +  \boldsymbol{\delta}^\top \nabla_{\bx} \bz^L(\theta, \boldsymbol{x}) 
\end{align}
 is exact for small enough $\boldsymbol{\delta}$. \textcolor{black}{We can then approximately solve the adversarial problem for each class $k$ as 
 \begin{align}
     \max_{\boldsymbol{\delta}\in \mathcal{U}} \: \bz_k^L(\theta, \boldsymbol{x} + \boldsymbol{\delta}) - \bz_y^L(\theta, \boldsymbol{x} + \boldsymbol{\delta}) &\approx \max_{\boldsymbol{\delta}\in \mathcal{U}} \:(\be_k - \be_y)^\top\bz^L(\theta, \boldsymbol{x}) + \boldsymbol{\delta}^\top \nabla_x (\be_k - \be_y)^\top\bz^L(\theta, \boldsymbol{x})\nonumber\\
     & = (\be_k - \be_y)^\top\bz^L(\theta, \boldsymbol{x}) + \rho\|(\be_k - \be_y)^\top\nabla_x\bz^L(\theta, \boldsymbol{x})\|_q, \label{class-adversary}
 \end{align} where  $k, y\in[K]$ and  $\boldsymbol{e}_k$ (respectively $\boldsymbol{e}_y$)  is the one-hot vector with a $1$ in the $k^{th}$ 
 (respectively in the $y^{th}$) coordinate and $0$, everywhere else. Applying the result from Lemma \ref{lemma:gen_upper_bound} and defining $\boldsymbol{c}_k \coloneqq \boldsymbol{e}_k - \boldsymbol{e}_y$ we obtain the approximate robust upper bound:}
\begin{align}
    &\min_\theta \:\max_{\boldsymbol{\delta}\in \mathcal{U}} \: \mathcal{L}(y, \bz^L(\theta,\bx + \boldsymbol{\delta}))\nonumber\\
      \lessapprox &\min_\theta \: \mathcal{L}\left(y , \big( \boldsymbol{c}_1^\top\bz^L(\theta, \bx) + \rho \|\boldsymbol{c}_1^\top\nabla_x \bz^L(\theta, \bx)\|_q, \dots, \boldsymbol{c}_K^\top\bz^L(\theta, \bx) + \rho \|\boldsymbol{c}_K^\top \nabla_x \bz^L(\theta, \bx)\|_q \big)\right).
     \label{eq:approx-upper-bound}\\
     &\hspace{12.7cm}\nonumber
\end{align}
\noindent Therefore, we propose to train the network by minimizing Eq. \eqref{eq:approx-upper-bound} instead of the standard average loss, \textcolor{black}{and we refer to this defense as aRUB-$L_{p}$}. For the particular case of the cross entropy loss with softmax activation function in the output layer, the exact optimization problem to be solved would be the following:
\begin{align}
    \min_{\theta}\: \frac{1}{N}\sum_{n = 1}^N\log\left(\sum_{k} e^{ (\boldsymbol{e}_k - \boldsymbol{e}_{y_n})^\top\bz^L(\theta, \bx ) + \rho \|\nabla_{\bx} (\boldsymbol{e}_k - \boldsymbol{e}_{y_n})^\top\bz^L(\theta, \bx)\|_q}\right).\label{method1}
\end{align}
\textcolor{black}{This expression may not always be an upper bound of the adversarial loss (Eq. \eqref{minmax-problem}); however, we observe across a variety of experiments that usually it is indeed an upper bound (see Table \ref{table:upper_bound}). This suggests that the upper bound provided by Lemma 1 compensates for the errors introduced by the first order approximation of $\bz^L$}. \textcolor{black}{  Additionally, in Figure \ref{fig:loss_plot} we empirically show that Eq. \eqref{eq:approx-upper-bound} is much closer than Eq. \eqref{simple-approx-problem} to the adversarial loss in Eq. \eqref{minmax-problem} for small values of $\rho$.}\\

\begin{figure}[ht]
     \centering
     \begin{subfigure}[b]{0.45\linewidth}
         \centering
         \includegraphics[width=\linewidth]{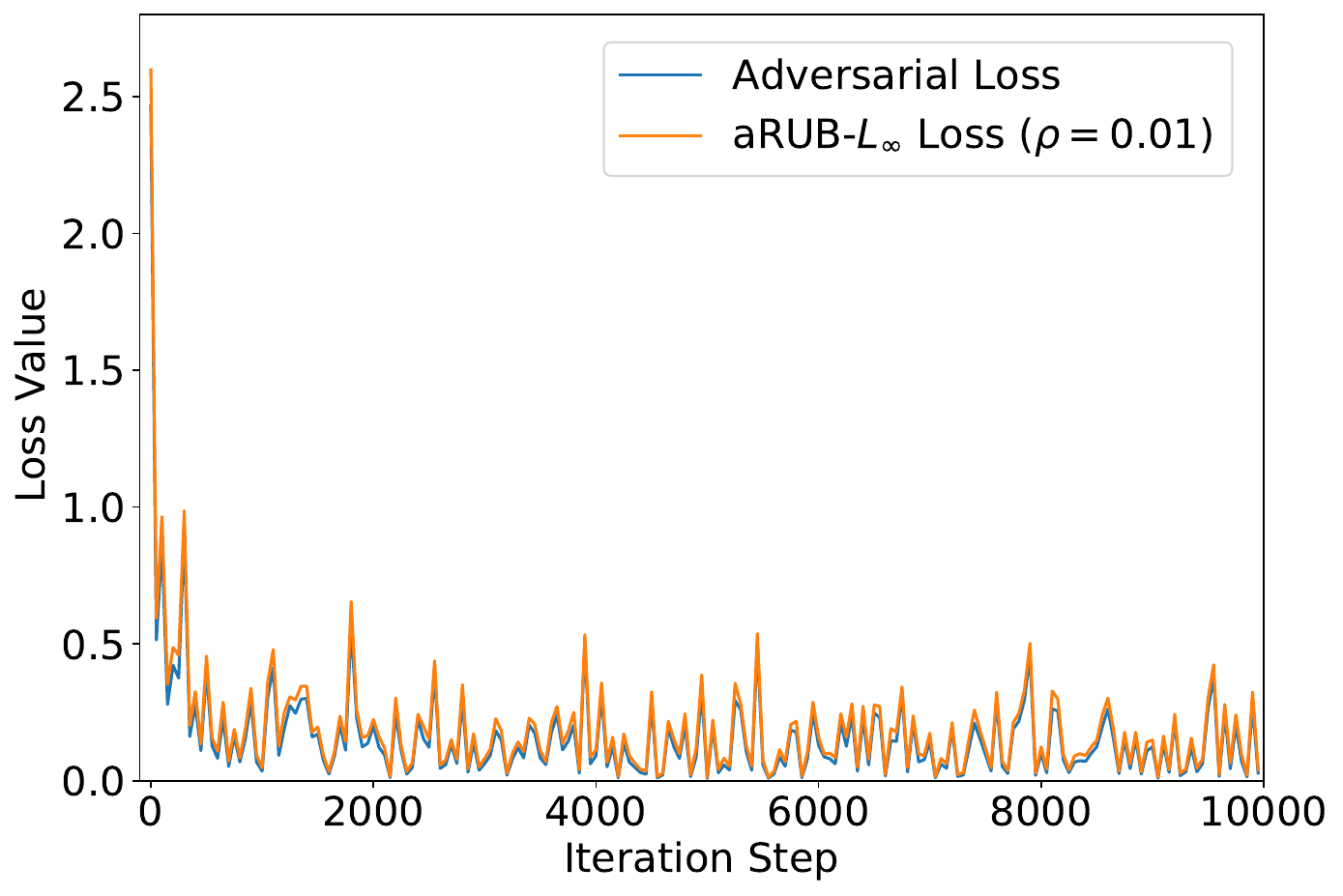}
         \caption{\textcolor{black}{aRUB-$L_{\infty}$, $\rho=0.01$}}
     \end{subfigure}
     \begin{subfigure}[b]{0.45\linewidth}
         \centering
         \includegraphics[width=\textwidth]{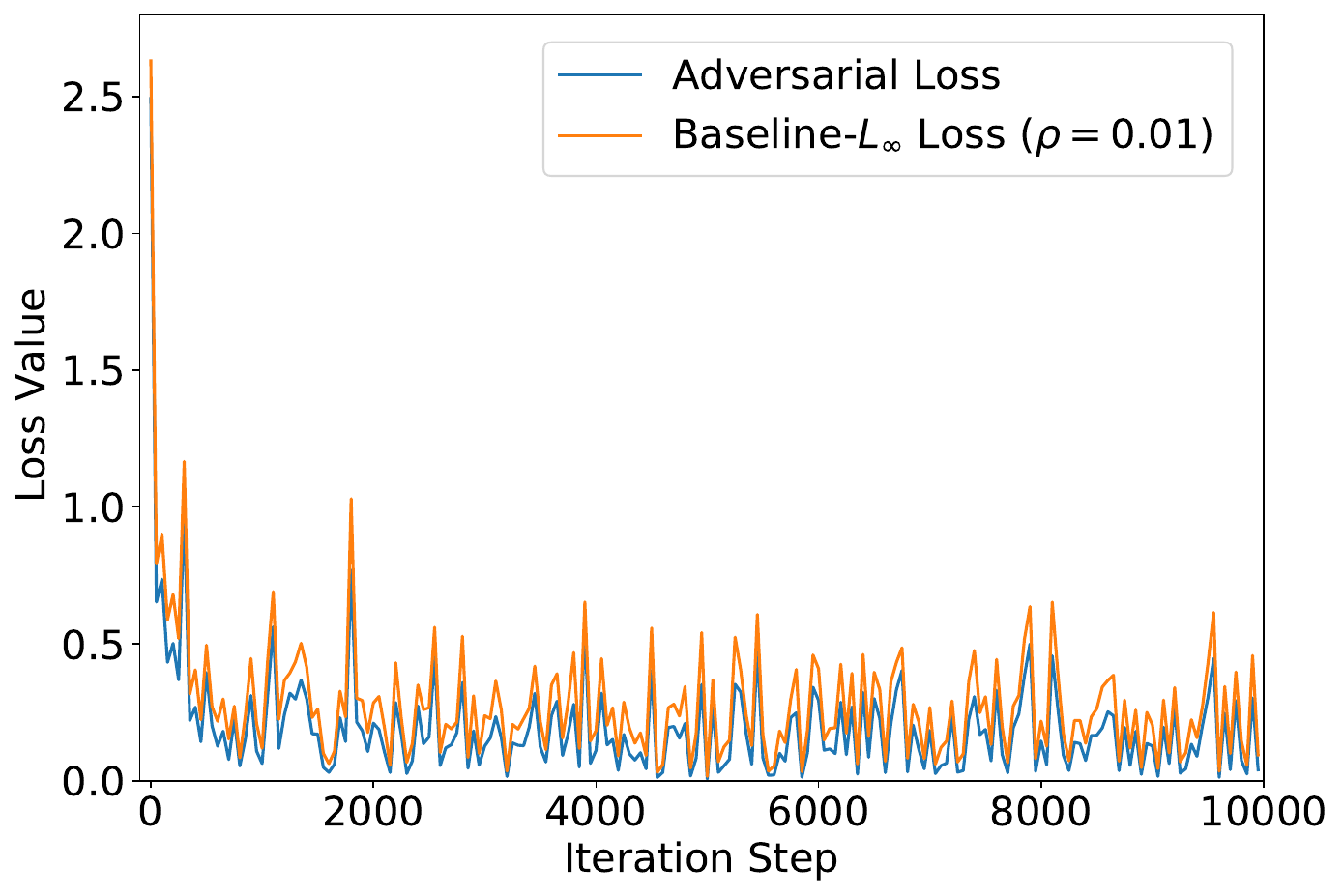}
         \caption{\textcolor{black}{Baseline-$L_{\infty}$, $\rho=0.01$}}
     \end{subfigure}\\
     \hfill\\
     \begin{subfigure}[b]{0.45\linewidth}
         \centering
         \includegraphics[width=\linewidth]{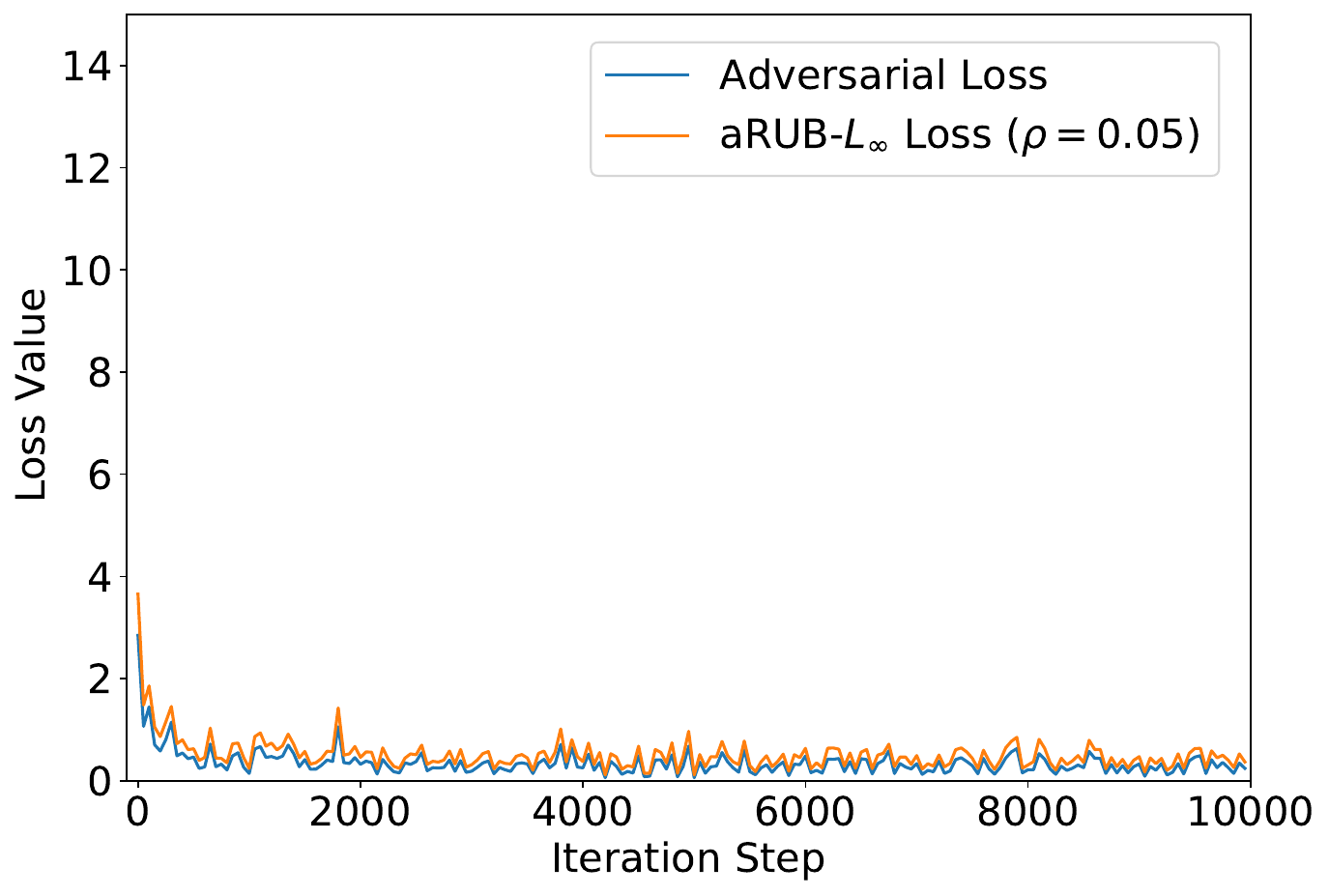}
         \caption{\textcolor{black}{aRUB-$L_{\infty}$, $\rho=0.05$}}
     \end{subfigure}
     \begin{subfigure}[b]{0.45\linewidth}
         \centering
         \includegraphics[width=\textwidth]{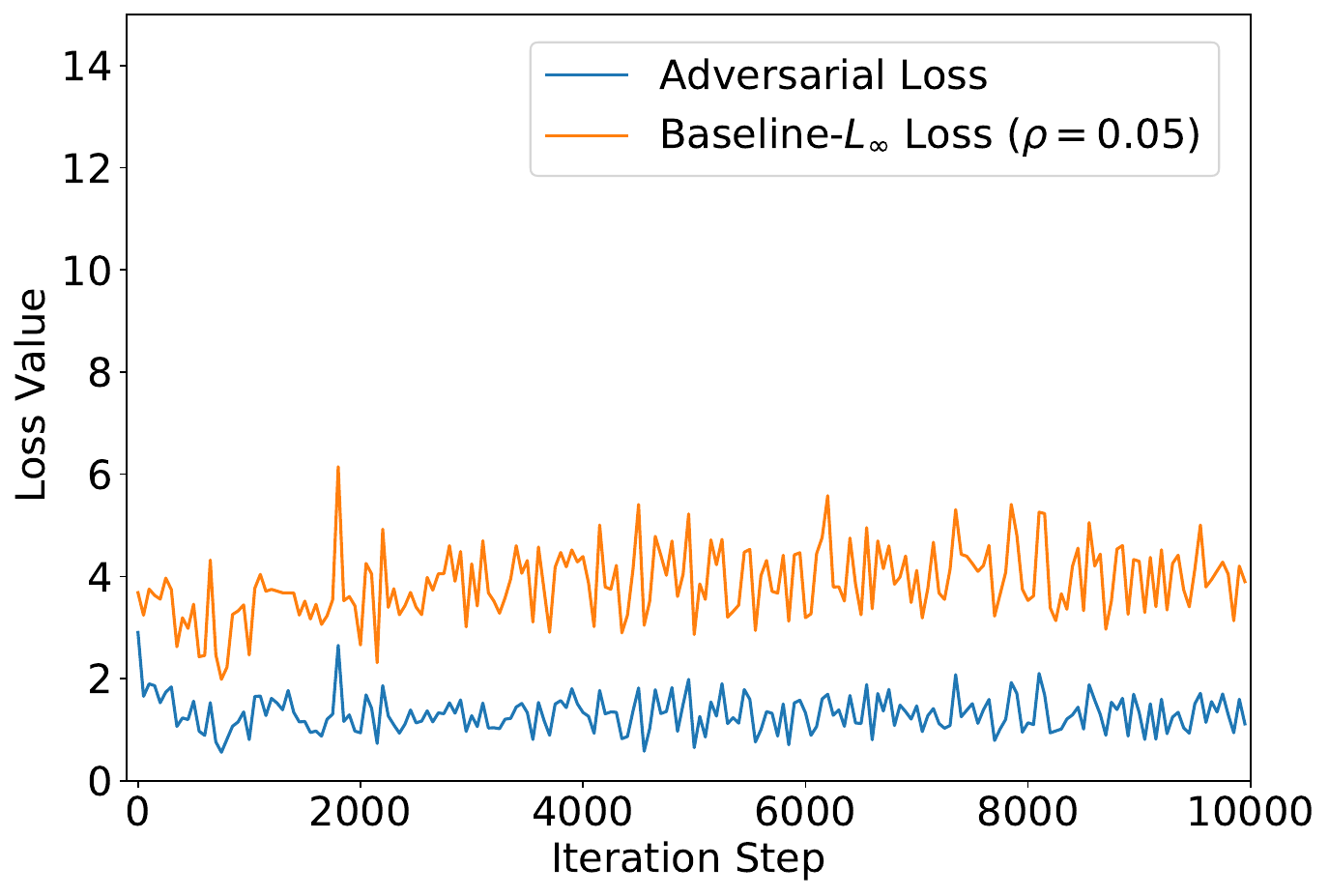}
         \caption{\textcolor{black}{Baseline-$L_{\infty}$, $\rho=0.05$}}
     \end{subfigure}\\
     \hfill \\
     \begin{subfigure}[b]{0.45\linewidth}
         \centering
         \includegraphics[width=\linewidth]{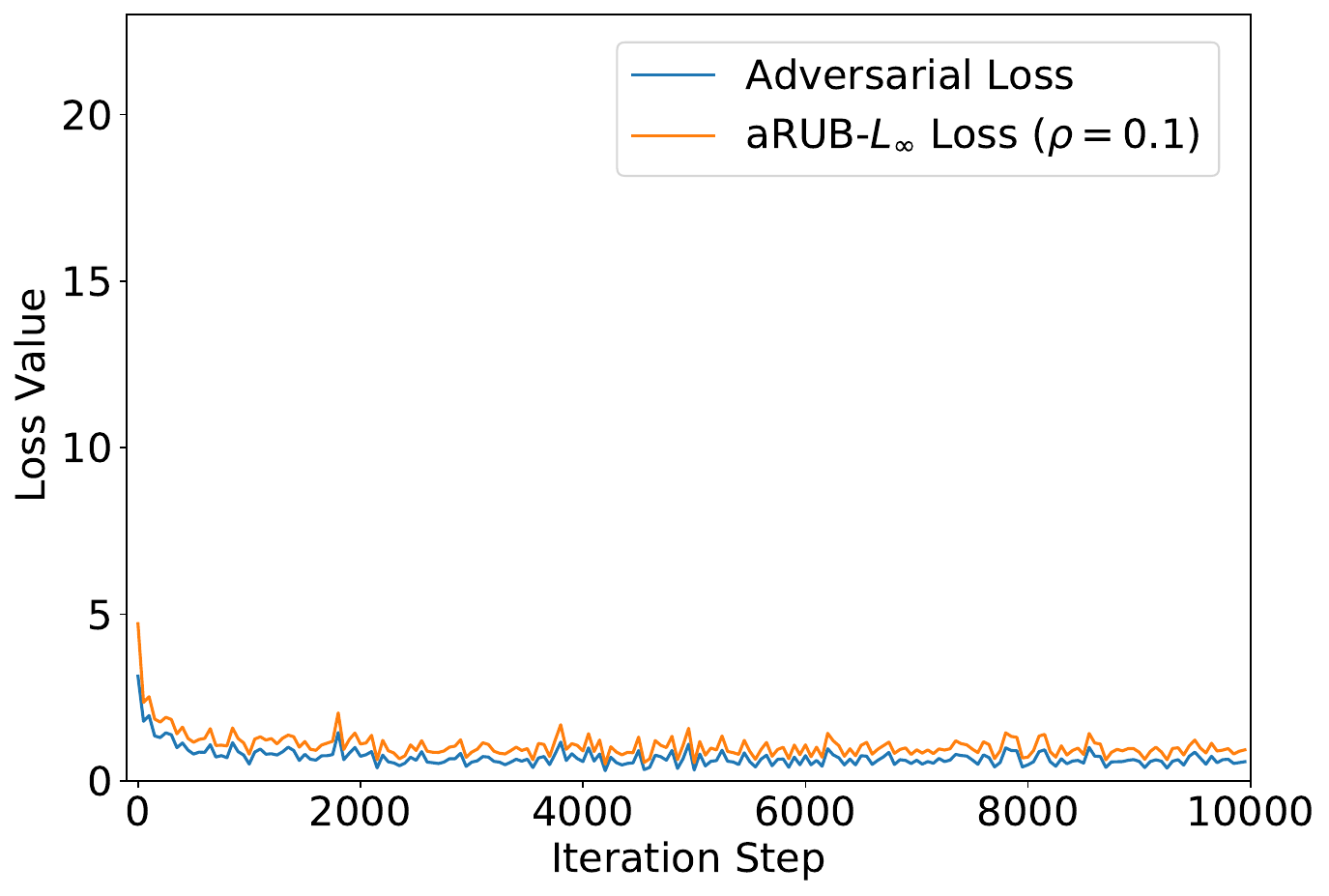}
         \caption{\textcolor{black}{aRUB-$L_{\infty}$, $\rho=0.1$}}
     \end{subfigure}
     \begin{subfigure}[b]{0.45\linewidth}
         \centering
         \includegraphics[width=\textwidth]{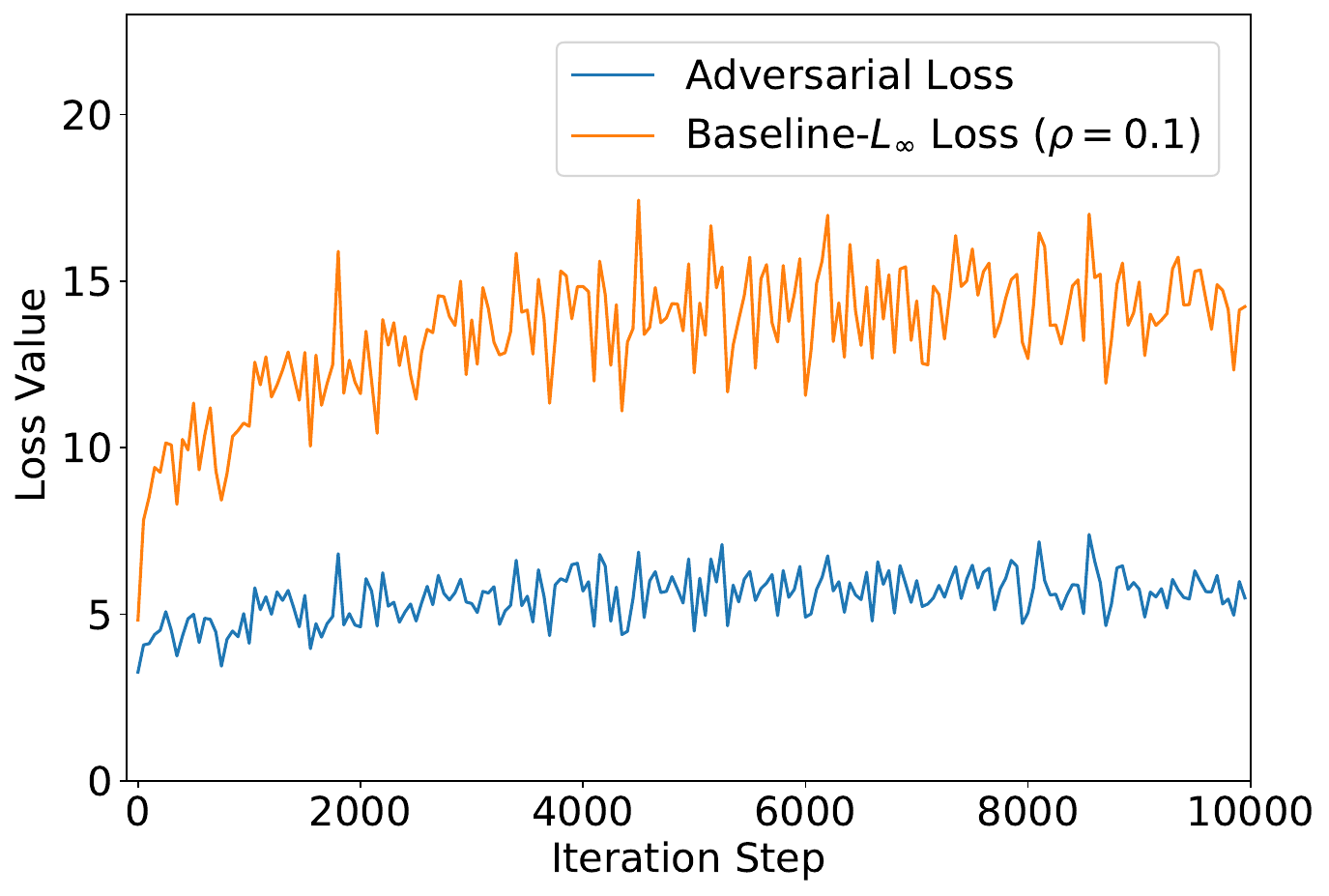}
         \caption{\textcolor{black}{Baseline-$L_{\infty}$, $\rho=0.1$}}
     \end{subfigure}\\
     \hfill \\
        \caption{Adversarial loss (cross entropy loss evaluated at adversarial images bounded in $L_{\infty}$ norm) vs the loss function being minimized across training iterations (aRUB-$L_\infty$ on the left and Baseline-$L_\infty$ on the right); the value of $\rho$ increases for lower rows. \textcolor{black}{Experiments are in the MNIST data set with the infinity norm $(p = \infty)$. We use a feed forward neural network with three hidden layers with the softmax-cross-entropy loss (see subsection~\ref{subsec:61} for details). We used a learning rate of $10^{-3}$ and a batch size of $32$, which we found to work best across the experiments in this figure.}}
        \label{fig:loss_plot}
\end{figure}
\begin{table}[ht]
\centering
\begin{tabular}{lrrrrrrrrrrr}
\toprule
  &    $\boldsymbol{\rho} $ =   $\boldsymbol{0.0}$ &       $\boldsymbol{0.0008}$ &     $\boldsymbol{0.001}$ &    $\boldsymbol{0.0015}$ &     $\boldsymbol{0.002}$ &     $\boldsymbol{0.003}$ &      $\boldsymbol{0.01}$ &       $\boldsymbol{0.1}$ &       $\boldsymbol{0.3}$ &       $\boldsymbol{0.5}$ &       $\boldsymbol{1.0}$ \\
\midrule
 $L_\infty$ &  $94\%$ &   $99\%$ &  $99\%$ &  $99\%$ &  $99\%$ &  $99\%$ &  $99\%$ &  $95\%$ &  $86\%$ &  $81\%$ &  $79\%$ \\
 $L_1$ &  $93.3\%$ &   $99\%$ &  $99\%$ &  $99\%$ &  $99\%$ &  $99\%$ &  $99\%$ &  $99\%$ &  $99\%$ &  $99\%$ &  $98\%$ \\
\bottomrule 
\end{tabular}\caption{Percentage of times when the aRUB approach yields an upper bound of the adversarial loss with respect to PGD attacks, \textcolor{black}{i.e., percentage of times when Eq. \eqref{method1} is larger than Eq. \eqref{minmax-problem} evaluated using PGD-$L_p$ attacks. For each row, aRUB-$L_p$ and the PGD-$L_p$ attacks use the $L_p$ norm indicated on the first column. The loss function utilized is the cross entropy with softmax. Percentages are computed across all networks trained (ie., for all tested hyperparameters) in the 46 UCI data sets, every $500$ training steps \textcolor{black}{(see subsections \ref{subsec:61} and \ref{subsec:UCI} for more details about the networks and data sets)}. In this way, the approximate bound is evaluated in a large number of different conditions, including data sets, training steps and hyperparameters. }}\label{table:upper_bound}
\end{table}

\section{Robust Upper bound for the \texorpdfstring{$L_1$}{Lg} norm and general \texorpdfstring{$\rho$}{Lg}.}\label{sec:RUB}
In this section we derive a \textcolor{black}{provable} upper bound for the robust counterpart of the inner maximization problem in Eq. \eqref{minmax-problem} for the specific case in which the uncertainty set is the ball of radius $\rho$ in the $\text{L}_1$ space; i.e. $\mathcal{U} = \{{\boldsymbol{\delta}} : \|{\boldsymbol{\delta}}\|_1 \leq \rho\}$. 

In the \textcolor{black}{RO} framework, finding a \textcolor{black}{provable upper bound} for a problem that is convex (or concave) in the uncertainty parameters relies on using convex (or concave) conjugate functions to make the problem linear in the uncertainty \citep{bertsimasbook2022}. The following lemmas are a generalization of this approach for the case in which the objective involves the composition of two functions, with the goal of making the problem linear not in the uncertainty but in the inner most function. Lemma \ref{convex-bound} makes this generalization when the outer function is convex while Lemma \ref{concave-bound} focuses on the concave case. \textcolor{black}{The proofs of these lemmas rely on the definition of conjugate functions as well as the Fenchel duality theorem \citep{rockafellar-1970a}, and can be found in Appendix \ref{sec:appendixA}. Together, these lemmas are the core of the methodology presented in this section.}
 
 \begin{lemma} \label{convex-bound}
If $f:A\rightarrow B$ is a convex and closed function then for any function $\bz:\mathcal{U} \rightarrow {A}$, and any function $g:A\rightarrow B$ we have
\begin{align*}
\sup_{\boldsymbol{\delta} \in \mathcal{U}}\: f(\boldsymbol{z}(\boldsymbol{\delta})) + g(\boldsymbol{z}(\boldsymbol{\delta})) = \sup_{\bu\in \text{dom}(f^\star)} \enspace \sup_{\boldsymbol{\delta} \in \mathcal{U}}\:  \boldsymbol{z}(\boldsymbol{\delta})^T\bu - f^\star(\bu) + g(\boldsymbol{z}(\boldsymbol{\delta})),
\end{align*}
where the convex conjugate function $f^\star$ is defined by $f^\star(\bz) = \sup_{\bx\in\text{dom}(f)} \: \bz^T\bx - f(\bx)$.
 \end{lemma}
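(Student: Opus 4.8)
The plan is to derive the identity directly from the Fenchel--Moreau (biconjugation) theorem, with everything else being elementary manipulation of suprema; the function $g$ is simply carried along and needs no structural assumption. \textbf{Step 1 (biconjugation of $f$).} Since $f$ is convex and closed (and, implicitly, proper), it coincides with its biconjugate, so for every $\ba \in A$,
\[
f(\ba) \;=\; f^{\star\star}(\ba) \;=\; \sup_{\bu}\; \ba^\top \bu - f^\star(\bu) \;=\; \sup_{\bu \in \text{dom}(f^\star)}\; \ba^\top \bu - f^\star(\bu),
\]
where the last equality holds because $f^\star(\bu) = +\infty$ for $\bu \notin \text{dom}(f^\star)$, so such points never attain the supremum. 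Specializing to $\ba = \bz(\boldsymbol{\delta})$ for a fixed $\boldsymbol{\delta} \in \mathcal{U}$ gives $f(\bz(\boldsymbol{\delta})) = \sup_{\bu \in \text{dom}(f^\star)}\, \bz(\boldsymbol{\delta})^\top \bu - f^\star(\bu)$.

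\textbf{Step 2 (insert $g$ and reorder the suprema).} Adding the $\bu$-independent quantity $g(\bz(\boldsymbol{\delta}))$ to both sides and pulling it inside the inner supremum yields
\[
f(\bz(\boldsymbol{\delta})) + g(\bz(\boldsymbol{\delta})) \;=\; \sup_{\bu \in \text{dom}(f^\star)}\; \bz(\boldsymbol{\delta})^\top \bu - f^\star(\bu) + g(\bz(\boldsymbol{\delta})).
\]
Taking $\sup_{\boldsymbol{\delta} \in \mathcal{U}}$ of both sides produces on the right a double supremum over $(\boldsymbol{\delta}, \bu)$, which may be evaluated in either order; performing the supremum over $\bu$ first gives exactly the right-hand side of the lemma.

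\textbf{Main obstacle.} I do not expect a genuine difficulty: the only points requiring care are the hypotheses of Fenchel--Moreau --- $f$ must additionally be proper, which I would either assume throughout or handle the degenerate cases ($f \equiv +\infty$, or $f$ attaining $-\infty$) separately --- and bookkeeping with extended-real values, which introduces no $\infty - \infty$ indeterminacy as long as $g$ is real-valued, since each step above merely adds a fixed finite quantity or reorders suprema. The conceptual content is just that biconjugation linearizes $f$ in its argument, turning the composition $f \circ \bz$ into something affine in $\bz(\boldsymbol{\delta})$, which is precisely what makes the later robust-optimization derivations possible.
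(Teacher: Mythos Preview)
Your proposal is correct and follows essentially the same argument as the paper: invoke the biconjugation identity $f = (f^\star)^\star$ to write $f(\bz(\boldsymbol{\delta})) = \sup_{\bu\in\text{dom}(f^\star)} \bz(\boldsymbol{\delta})^\top\bu - f^\star(\bu)$, add $g(\bz(\boldsymbol{\delta}))$, take the supremum over $\boldsymbol{\delta}$, and swap the two suprema. Your additional remarks on properness and extended-real bookkeeping are more careful than the paper's own proof, which simply cites Rockafellar for $f=(f^\star)^\star$ and proceeds.
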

 \vspace{-0.5cm}
 \begin{proof}
 See Appendix \ref{A2}.\hfill
 \end{proof}
 \begin{lemma} \label{concave-bound}
 Let $g:A \rightarrow B$ be a concave and closed function. If a function $\bz:\mathcal{U} \rightarrow {A}$ satisfies  $g(\bz(\boldsymbol{\delta}))<\infty$ for all $\boldsymbol{\delta}\in \mathcal{U}$, then
 \[
\sup_{\boldsymbol{\delta} \in \mathcal{U}}\: g(\bz(\boldsymbol{\delta})) = \inf_{\bv\in \text{dom}(g_\star)} \enspace  \sup_{\boldsymbol{\delta} \in \mathcal{U}}\:  \bz(\boldsymbol{\delta})^T\bv - g_\star(\bv),
\]
where the concave conjugate function is defined by $g_\star(\bz) = \inf_{\bx\in \text{dom}(g)} \: \bz^T\bx - g(\bx)$.
 \end{lemma}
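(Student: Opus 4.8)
The plan is to recognize this identity as a Fenchel--Moreau biconjugation statement for the closed concave function $g$, followed by an interchange of the supremum over $\boldsymbol{\delta}$ with the infimum over the conjugate variable $\bv$. Structurally this is the same idea as in Lemma~\ref{convex-bound}, with one essential twist: biconjugating a \emph{concave} function produces an infimum rather than a supremum, so the interchange is no longer a harmless reordering of two suprema but a genuine strong--duality (minimax) statement, and that is where the closedness hypothesis on $g$ has to do its work.

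First I would record the concave version of $f=f^{\star\star}$: for $g$ closed concave and any $\bx$ with $g(\bx)<\infty$,
\[
g(\bx) \;=\; \inf_{\bv \in \text{dom}(g_\star)} \bigl[\bx^\top \bv - g_\star(\bv)\bigr],
\]
which can be deduced by applying the ordinary convex Fenchel--Moreau theorem to $-g$. The hypothesis $g(\bz(\boldsymbol{\delta}))<\infty$ for all $\boldsymbol{\delta}\in\mathcal{U}$ is exactly what lets us instantiate this identity at $\bx=\bz(\boldsymbol{\delta})$ for each $\boldsymbol{\delta}$. Substituting and taking $\sup_{\boldsymbol{\delta}\in\mathcal{U}}$ then gives
\[
\sup_{\boldsymbol{\delta} \in \mathcal{U}} g(\bz(\boldsymbol{\delta})) \;=\; \sup_{\boldsymbol{\delta} \in \mathcal{U}} \; \inf_{\bv \in \text{dom}(g_\star)} \bigl[\bz(\boldsymbol{\delta})^\top \bv - g_\star(\bv)\bigr],
\]
so the whole game reduces to moving the infimum outside the supremum. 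One direction is free: the Fenchel--Young inequality for the concave conjugate, $g(\bx) \le \bx^\top \bv - g_\star(\bv)$ valid for every $\bv \in \text{dom}(g_\star)$, shows that for each fixed $\bv$ we have $\sup_{\boldsymbol{\delta}} g(\bz(\boldsymbol{\delta})) \le \sup_{\boldsymbol{\delta}} [\bz(\boldsymbol{\delta})^\top \bv - g_\star(\bv)]$, and taking the infimum over $\bv$ yields ``$\le$'' in the claimed identity. This weak--duality inequality is already the direction that the RUB construction exploits as an upper bound.

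The reverse inequality $\inf_{\bv} \sup_{\boldsymbol{\delta}}[\,\cdot\,] \le \sup_{\boldsymbol{\delta}} \inf_{\bv}[\,\cdot\,]$ --- i.e., tightness of the conjugate bound --- is where I expect the real work to be, and it is the part with genuine content beyond the sup--sup reordering of Lemma~\ref{convex-bound}. Since nothing is assumed about $\mathcal{U}$ or about the map $\boldsymbol{\delta}\mapsto\bz(\boldsymbol{\delta})$, a minimax theorem in the $\boldsymbol{\delta}$ variable is not available, so the argument must instead be powered by the closedness of $g$. The route I would take is to fix $\bv$, use that $g_\star$ (an infimum of affine functions, hence concave) does not depend on $\boldsymbol{\delta}$ to write $\sup_{\boldsymbol{\delta}}[\bz(\boldsymbol{\delta})^\top \bv - g_\star(\bv)] = \sigma(\bv) - g_\star(\bv)$ with $\sigma(\bv):=\sup_{\boldsymbol{\delta}\in\mathcal{U}}\bz(\boldsymbol{\delta})^\top \bv$ a sublinear (hence convex) function of $\bv$, so that the right-hand side becomes $\inf_{\bv}[\sigma(\bv) - g_\star(\bv)]$; then apply Fenchel's duality theorem to this convex minimization, whose dual --- using $(g_\star)_\star = g$, valid because $g$ is closed --- is a maximization of $g$ over the closed convex hull of $\bz(\mathcal{U})$. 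The main obstacle is then twofold: checking the constraint qualification (a relative--interior condition on the domains) needed for strong duality to hold with no gap, and verifying that this dual maximization of $g$ may be restricted back to $\{\bz(\boldsymbol{\delta}) : \boldsymbol{\delta} \in \mathcal{U}\}$ rather than taken over its convexified enlargement --- this second point is the delicate one, and is precisely the place where the proof must invoke whatever additional structure is in force.
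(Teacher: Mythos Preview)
Your approach is sound and ultimately rests on the same Fenchel duality that the paper invokes, but the paper's argument is more direct and sidesteps the sup--inf swap you treat as a separate obstacle. Rather than starting from the biconjugate representation $g=(g_\star)_\star$ and then working to interchange $\sup_{\boldsymbol{\delta}}$ with $\inf_{\bv}$, the paper introduces the image set $\mathcal{Z}=\{\bz(\boldsymbol{\delta}):\boldsymbol{\delta}\in\mathcal{U}\}$ and its indicator $\gamma(\cdot\mid\mathcal{Z})$, rewrites the left-hand side as $\sup_{\bz}\bigl[g(\bz)-\gamma(\bz\mid\mathcal{Z})\bigr]$, and applies the Fenchel duality theorem in one stroke to obtain $\inf_{\bv}\bigl[\gamma^\star(\bv\mid\mathcal{Z})-g_\star(\bv)\bigr]$; since $\gamma^\star(\bv\mid\mathcal{Z})=\sup_{\bz\in\mathcal{Z}}\bz^\top\bv=\sup_{\boldsymbol{\delta}\in\mathcal{U}}\bz(\boldsymbol{\delta})^\top\bv$, the identity follows immediately. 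Your support function $\sigma$ is exactly this $\gamma^\star$, so your route eventually lands on the same Fenchel dual pair, just after the detour through biconjugation and an explicit minimax step.

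The technical concerns you flag --- the constraint qualification needed for strong Fenchel duality, and the fact that $\gamma(\cdot\mid\mathcal{Z})$ is convex only when $\mathcal{Z}$ is convex (so that, absent convexity, the dual recovers the supremum over $\overline{\mathrm{conv}}(\mathcal{Z})$ rather than over $\mathcal{Z}$ itself) --- are legitimate, and the paper's proof does not address them either: it cites Rockafellar and moves on. In the paper's actual uses of the lemma, $\bz(\boldsymbol{\delta})$ is affine in $\boldsymbol{\delta}$ over a norm ball, so $\mathcal{Z}$ is convex and compact and both issues evaporate; but in the generality of the lemma as stated, your caution is well placed and applies equally to the paper's own argument.
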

  \vspace{-0.5cm}
 \begin{proof}
 See Appendix \ref{A3}.\hfill
 \end{proof}
From the lemmas above we can observe that to apply them we will need to compute convex and concave conjugate functions. The next lemma facilitates \textcolor{black}{these} computations for neural networks with ReLU activation functions.
 \begin{lemma}\label{conjugate-computation}
If $\bu, \bp, \bq\geq \boldsymbol{0}$, then the functions $f(\bx) = \bp^\top [\bx]^+$ and $g(\bx) = \bx^T\bu - \bq^\top[\bx]^+$ satisfy
\begin{align*}
    a) \: f^\star(\bz) = \begin{cases} 0& \quad \text{if    } \boldsymbol{0}\leq \bz \leq \bp, \\ \infty& \quad \text{otherwise.} \end{cases} \quad \text{and} \quad b) \:g_\star(\bz) = \begin{cases} 0& \quad \text{if    } \bu-\bq\leq \bz \leq \bu, \\ -\infty& \quad \text{otherwise.} \end{cases}
\end{align*}
 \end{lemma}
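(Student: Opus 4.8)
The plan is to exploit the fact that both $f$ and $g$ are \emph{separable} over coordinates, so that each (convex resp.\ concave) conjugate decomposes into a sum of scalar conjugates, and then to evaluate each scalar conjugate by splitting $\mathbb{R}$ into the two half-lines $t\ge 0$ and $t\le 0$ on which the ReLU is affine.

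For part (a): writing $f(\bx)=\sum_i p_i[x_i]^+$, the conjugate of a separable function is the sum of the conjugates, so $f^\star(\bz)=\sum_i f_i^\star(z_i)$ with $f_i(t)=p_i[t]^+$. For the scalar problem $f_i^\star(s)=\sup_{t\in\mathbb{R}}\bigl(st-p_i[t]^+\bigr)$, on $t\ge 0$ the objective is the linear map $(s-p_i)t$, whose supremum over $[0,\infty)$ is $0$ (attained at $t=0$) when $s-p_i\le 0$ and $+\infty$ otherwise; on $t\le 0$ it is $st$, whose supremum over $(-\infty,0]$ is $0$ when $s\ge 0$ and $+\infty$ otherwise. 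Since $\mathbb{R}$ is the union of these two rays, $f_i^\star(s)$ is finite --- and then equal to $0$ --- exactly when \emph{both} conditions hold, i.e.\ when $0\le s\le p_i$, and is $+\infty$ otherwise. The hypothesis $p_i\ge 0$ guarantees this interval is nonempty. Summing over $i$ gives (a).

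Part (b) is the mirror image with infima in place of suprema. I would rewrite $g_\star(\bz)=\inf_{\bx}\bigl(\bz^\top\bx-g(\bx)\bigr)=\inf_{\bx}\bigl((\bz-\bu)^\top\bx+\bq^\top[\bx]^+\bigr)$, which is again separable, with scalar pieces $\inf_{t\in\mathbb{R}}\bigl((z_i-u_i)t+q_i[t]^+\bigr)$. On $t\ge 0$ this is $(z_i-u_i+q_i)t$, with infimum $0$ if $z_i-u_i+q_i\ge 0$ and $-\infty$ otherwise; on $t\le 0$ it is $(z_i-u_i)t$, with infimum $0$ if $z_i-u_i\le 0$ and $-\infty$ otherwise. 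Hence the scalar infimum over $\mathbb{R}$ equals $0$ (attained at $t=0$) precisely when $u_i-q_i\le z_i\le u_i$, and equals $-\infty$ otherwise; $q_i\ge 0$ makes this interval nonempty. Summing over coordinates gives (b).

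The one place that deserves care --- and the nearest thing to an obstacle --- is the logical step that the scalar optimum over all of $\mathbb{R}$ is finite if and only if \emph{both} one-sided affine branches are bounded in the outward direction: a single unbounded ray already forces the full optimum to $\pm\infty$, so the admissible region is the \emph{intersection} of the two half-line conditions rather than their union. Once that is stated cleanly, everything else is the direct scalar computation above; the sign hypotheses $\bp,\bq\ge\boldsymbol{0}$ enter only to ensure that the resulting boxes (equivalently $\text{dom}(f^\star)$ and $\text{dom}(g_\star)$) are nonempty, so that the conjugates are genuinely $+\infty$ resp.\ $-\infty$ off those boxes.
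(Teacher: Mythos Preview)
Your proposal is correct and follows essentially the same approach as the paper: both arguments reduce to a coordinate-by-coordinate analysis, showing that a single unbounded ray forces the optimum to $\pm\infty$ (the paper does this by choosing $\bx$ supported on one coordinate, you by invoking separability of conjugates), and that inside the box the value is $0$ attained at $\bx=\boldsymbol{0}$. The only cosmetic difference is that for the finite case the paper uses the one-line inequality $\bz^\top\bx-\bp^\top[\bx]^+\le \bz^\top(\bx-[\bx]^+)\le 0$ rather than your explicit piecewise-linear split, but the content is identical.
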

  \vspace{-0.5cm}
 \begin{proof}
 See Appendix \ref{A4}.\hfill
 \end{proof}
 \hfill \\

\textcolor{black}{As observed in \textcolor{black}{Lemma \ref{lemma:gen_upper_bound}}, we can obtain an upper bound of the min-max problem in Eq. \eqref{minmax-problem} by finding instead upper bounds for \textcolor{black}{$\max_{\boldsymbol{\delta}\in \mathcal{U}}\: \boldsymbol{z}_k^L(\theta, \bx + \boldsymbol{\delta}) - \boldsymbol{z}_y^L(\theta, \bx + \boldsymbol{\delta})$} for each class $k$. We will find these upper bounds in the following three steps:
\begin{itemize} 
\item \textit{Step \#1 - Linearize the uncertainty:} We first make the maximization problem over the uncertainty set $\mathcal{U}$ linear in the first layer of the network (and therefore also linear in the uncertainty $\boldsymbol{\delta}$). Starting from the last layer we recursively split the objective function as the sum of a convex function and a concave function, and we then apply \textcolor{black}{L}emmas \ref{convex-bound} and \ref{concave-bound} to make the maximization problem over $\mathcal{U}$ linear in the previous layer. 
\item \textit{Step \#2 - Optimize over the uncertainty set:} Then, we solve the maximization problem over $\mathcal{U}$. This problem can be exactly solved because the first layer of the network is linear in the uncertainty and the dual function of the $L_1$ norm is the $L_\infty$ norm (a maximum over a finite set). 
\item \textit{Step \#3 - Backtrack:} Finally, we backtrack to solve for the variables $\bu, \bv$ that \textcolor{black}{L}emmas \ref{convex-bound} and \ref{concave-bound} introduce.
\end{itemize}}
 For simplicity, we develop and prove the upper bound for the robust counterpart assuming that the neural network has only two layers; however, the results can be extended to the general case as shown in Appendix \ref{sec:appendixB}. In addition, all the theorems can be generalized for convolution\textcolor{black}{al} neural networks, since convolutions are a special case of matrix multiplication.
 
\subsection{\textbf{\textit{Step \#1 - Linearize the uncertainty}}}

\textcolor{black}{\noindent The following theorem shows how the maximization problem over $\mathcal{U}$ can be transformed from a linear problem in the second layer to a linear problem in the first layer of the network. The proof relies on Lemma \ref{convex-bound} and Lemma \ref{concave-bound}}.
 
\begin{theorem}
The maximum difference between the output of the correct class and the output of any other class $k$ can be written as
 \begin{align}
     \textcolor{black}{\sup_{{\boldsymbol{{\delta}}} \in \mathcal{U}} \: \boldsymbol{z}_k^2(\theta, \bx + \boldsymbol{\delta}) - \boldsymbol{z}_y^2(\theta, \bx + \boldsymbol{\delta}) =} &\quad \sup_{{\boldsymbol{{\delta}}} \in \mathcal{U}} \: \bc_k^\top \bz^2(\theta, \bx + \boldsymbol{{\delta}}) \\
     \begin{split}
     =  &  \sup_{0\leq {\bs}\leq 1} \inf_{0\leq {\bt}\leq 1}\sup_{\boldsymbol{{\delta}} \in \mathcal{U}}  \: (\bp - \bq)^\top\: \bz^{1}(\theta, \bx + {\boldsymbol{{\delta}}} ) + \bc_k^\top\bb^2\\
     & \hspace{1.5cm}\text{s.t.} \quad \bp = [(\bbw^2)^\top\bc_k]^+\odot \bs\\
     &  \hspace{2.4cm} \bq = [-(\bbw^2)^\top\bc_k]^+\odot \bt, 
     \label{induction-problem1}
     \end{split}
 \end{align}
 where $\odot$ corresponds to entry-wise multiplication. \label{general-decomposition1}
 \end{theorem}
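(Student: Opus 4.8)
The plan is to strip off the network's outer ReLU layer by writing the ReLU--composed linear form as a convex part plus a concave part, to slide $\sup_{\boldsymbol{\delta}\in\mathcal{U}}$ through each part using Lemmas~\ref{convex-bound} and~\ref{concave-bound} (with the two conjugates supplied by Lemma~\ref{conjugate-computation}), and finally to reparametrize the two dual variables that appear so that they become the $\bp$ and $\bq$ of the statement. Concretely, set $\ba := (\bbw^2)^\top\bc_k$, so that $\bc_k^\top\bz^2(\theta,\bx+\boldsymbol{\delta}) = \ba^\top[\bz^1(\theta,\bx+\boldsymbol{\delta})]^+ + \bc_k^\top\bb^2$; the constant $\bc_k^\top\bb^2$ merely rides along, so the task reduces to rewriting $\sup_{\boldsymbol{\delta}\in\mathcal{U}}\ \ba^\top[\bz^1(\theta,\bx+\boldsymbol{\delta})]^+$. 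Using $\ba = [\ba]^+-[-\ba]^+$, decompose $\ba^\top[\bx]^+ = f(\bx)+h(\bx)$ with $f(\bx) := ([\ba]^+)^\top[\bx]^+$ and $h(\bx) := -([-\ba]^+)^\top[\bx]^+$. Since $[\ba]^+\ge\boldsymbol{0}$, $f$ is a nonnegative combination of the convex maps $x_i\mapsto[x_i]^+$, hence convex and (being continuous) closed, while $h$ is concave.

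Apply Lemma~\ref{convex-bound} with this $f$, with $g = h$, and with $\bz(\cdot) = \bz^1(\theta,\bx+\cdot)$. By part~(a) of Lemma~\ref{conjugate-computation}, taken with coefficient vector $[\ba]^+\ge\boldsymbol{0}$, the conjugate $f^\star$ vanishes identically on $\mathrm{dom}(f^\star) = \{\bu : \boldsymbol{0}\le\bu\le[\ba]^+\}$, so this rewrites the quantity as $\sup_{\boldsymbol{0}\le\bu\le[\ba]^+}\ \sup_{\boldsymbol{\delta}\in\mathcal{U}}\ \big((\bz^1)^\top\bu - ([-\ba]^+)^\top[\bz^1]^+\big)$. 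For each fixed $\bu$, the inner expression is $g_{\bu}(\bz^1)$ for the function $g_{\bu}(\bx) = \bx^\top\bu - \bq^\top[\bx]^+$ of part~(b) of Lemma~\ref{conjugate-computation}, with $\bq := [-\ba]^+\ge\boldsymbol{0}$; $g_{\bu}$ is concave and closed, and $\bz^1(\theta,\bx+\cdot)$ is affine in $\boldsymbol{\delta}$ over the convex set $\mathcal{U}$, so Lemma~\ref{concave-bound} applies, and part~(b) of Lemma~\ref{conjugate-computation} gives $(g_{\bu})_\star\equiv 0$ on $\{\bv : \bu-[-\ba]^+\le\bv\le\bu\}$. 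Chaining the two steps,
\[
\sup_{\boldsymbol{\delta}\in\mathcal{U}}\ \ba^\top[\bz^1]^+ \;=\; \sup_{\boldsymbol{0}\le\bu\le[\ba]^+}\ \inf_{\bu-[-\ba]^+\le\bv\le\bu}\ \sup_{\boldsymbol{\delta}\in\mathcal{U}}\ \big(\bz^1(\theta,\bx+\boldsymbol{\delta})\big)^\top\bv .
\]

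To match \eqref{induction-problem1}, put $\bp := \bu$ and $\bq := \bp-\bv$, so $\bv = \bp-\bq$. The outer constraint $\boldsymbol{0}\le\bu\le[\ba]^+$ is exactly $\boldsymbol{0}\le\bp\le[(\bbw^2)^\top\bc_k]^+$, which coordinatewise is the same as $\bp = [(\bbw^2)^\top\bc_k]^+\odot\bs$ for some $\boldsymbol{0}\le\bs\le\boldsymbol{1}$; and the inner constraint $\bu-[-\ba]^+\le\bv\le\bu$ becomes $\boldsymbol{0}\le\bq\le[-(\bbw^2)^\top\bc_k]^+$, i.e.\ $\bq = [-(\bbw^2)^\top\bc_k]^+\odot\bt$ for some $\boldsymbol{0}\le\bt\le\boldsymbol{1}$, and after this substitution the inner box for $\bq$ no longer depends on the outer variable. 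Because $[\ba]^+$ and $[-\ba]^+$ have disjoint supports, this is a genuine change of variables compatible with the nesting of the quantifiers --- coordinates $i$ with $a_i>0$ are chosen by the outer $\sup$, those with $a_i<0$ by the inner $\inf$, the rest are pinned to $0$ --- so the double optimization over $(\bu,\bv)$ equals $\sup_{\boldsymbol{0}\le\bs\le\boldsymbol{1}}\inf_{\boldsymbol{0}\le\bt\le\boldsymbol{1}}\sup_{\boldsymbol{\delta}\in\mathcal{U}}(\bp-\bq)^\top\bz^1(\theta,\bx+\boldsymbol{\delta})$. Restoring $\bc_k^\top\bb^2$ yields \eqref{induction-problem1}.

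I expect the main obstacle to be the bookkeeping, concentrated in two places rather than in any single deep idea. First, one must certify the hypotheses of Lemmas~\ref{convex-bound} and~\ref{concave-bound} for the concrete $f$, $g_{\bu}$ and $\bz^1(\theta,\bx+\cdot)$ thrown up by the split --- closedness of $f$; concavity, closedness and finiteness of $g_{\bu}$ along $\bz^1$, together with the affine/convex structure that legitimizes the concave minimax identity. Second, and this is the real crux, the reparametrization: the domain $\{\bv : \bu-[-\ba]^+\le\bv\le\bu\}$ of the inner infimum genuinely depends on the outer variable $\bu$, and it is precisely the disjoint support of $[\ba]^+$ and $[-\ba]^+$ that allows this dependence to factor into the clean product form $\bp = [(\bbw^2)^\top\bc_k]^+\odot\bs$, $\bq = [-(\bbw^2)^\top\bc_k]^+\odot\bt$ with the $\sup$ and $\inf$ in the correct order; keeping track of which coordinates are free and which are dummy in this step is where an argument of this kind is most likely to go wrong.
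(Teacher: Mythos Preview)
Your proposal is correct and follows essentially the same route as the paper's own proof: the same convex/concave split $\ba^\top[\cdot]^+ = ([\ba]^+)^\top[\cdot]^+ - ([-\ba]^+)^\top[\cdot]^+$, Lemma~\ref{convex-bound} applied to the convex piece, Lemma~\ref{concave-bound} applied to the resulting $g_{\bu}(\bx)=\bx^\top\bu-([-\ba]^+)^\top[\bx]^+$, both conjugates read off from Lemma~\ref{conjugate-computation}, and then the same substitution $\bp=\bu$, $\bq=\bu-\bv$. Your disjoint-support argument for why the $\bu$-dependent box $\{\bv:\bu-[-\ba]^+\le\bv\le\bu\}$ can be replaced by the independent parametrization $\bq=[-(\bbw^2)^\top\bc_k]^+\odot\bt$ without disturbing the $\sup$--$\inf$ order is a point the paper leaves implicit, so keeping it is worthwhile.
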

 
\begin{proof}
 By definition of the two layer neural network $\bz^2$, we have 
 \begin{align*}
     \bc_k^\top\bz^2(\theta, \bx + {\boldsymbol{\delta}})  &= \bc_k^\top  \bbw^2 [\bz^{1}(\theta, \bx + {\boldsymbol{\delta}})]^+ + \bc_k^\top\bb^2 \\
     &= f_+(\bz^{1}(\theta, \bx + {\boldsymbol{\delta}}))  - f_{-}(\bz^{1}(\theta, \bx + {\boldsymbol{\delta}}))+ \bc_k^\top\bb^2,
 \end{align*}
where $f_+, f_{-}$ are the convex functions defined by 
\[f_+(\bx) = [ \bc_k^\top  \bbw^2 ]^+[\bx]^+, \quad \text{ and } \quad   f_{-}(\bx) = [ -\bc_k^\top \bbw^2 ]^+[\bx]^+.\] 
Applying Lemma \ref{convex-bound} to the function $f_+$ we then have
 \begin{align}
     &\sup_{{\boldsymbol{\delta}} \in \mathcal{U}} \: \bc_k^\top \bz^2(\theta, \bx + {\boldsymbol{\delta}} ), \nonumber\\
     =& \sup_{{\boldsymbol{\delta}} \in \mathcal{U}} \: f_+(\bz^{1}(\theta, \bx + {\boldsymbol{\delta}}))  - f_{-}(\bz^{1}(\theta, \bx + {\boldsymbol{\delta}})) + \bc_k^\top\bb^2,\nonumber \\
     =& \sup_{\bu\in \text{dom}(f_+^\star)} \: \sup_{{\boldsymbol{\delta}}  \in \mathcal{U}} \: \:  \bu^\top \bz^{1}(\theta, \bx + {\boldsymbol{\delta}}) - f_+^\star(\bu)- f_{-}(\bz^{1}(\theta, \bx + {\boldsymbol{\delta}}))+ \bc_k^\top\bb^2,\quad& &(\text{By Lemma \ref{convex-bound}}),\\
     =& \sup_{\bu\in \text{dom}(f_+^\star)} \: \sup_{{\boldsymbol{\delta}}  \in \mathcal{U}} \: \:  \bu^\top \bz^{1}(\theta, \bx + {\boldsymbol{\delta}}) - f_{-}(\bz^{1}(\theta, \bx + {\boldsymbol{\delta}}))+ \bc_k^\top\bb^2,\quad& &(\text{By Lemma \ref{conjugate-computation}a)}.\label{eq:concave-reduction}
 \end{align}
Defining the concave function $g(\bx) = \bu^\top \bx - f_{-}(\bx)$, and applying Lemma \ref{concave-bound} to the function $g$ we obtain
 \begin{align}
     &\sup_{{\boldsymbol{\delta}} \in \mathcal{U}} \: \bc_k^\top\bz^2(\theta, \bx + {\boldsymbol{\delta}}) \nonumber \\
     =& \sup_{\bu\in \text{dom}(f_+^\star)} \:  \sup_{{\boldsymbol{\delta}} \in \mathcal{U}}\:  g(\bz^1(\theta, \bx +\boldsymbol{\delta})) + \bc_k^\top\bb^2\\
     =& \sup_{\bu\in \text{dom}(f_+^\star)} \: \inf_{\bv \in \text{dom}(g_\star)}\: \sup_{{\boldsymbol{\delta}} \in \mathcal{U}}\:   \bv^\top\bz^{1}(\theta, \bx + {\boldsymbol{\delta}}) - g_\star(\bv) + \bc_k^\top\bb^2\quad&  &(\text{By Lemma \ref{concave-bound}}),\\
     =& \sup_{\bu\in \text{dom}(f_+^\star)} \: \inf_{\bv \in \text{dom}(g_\star)}\: \sup_{{\boldsymbol{\delta}} \in \mathcal{U}}\:   \bv^\top\bz^{1}(\theta, \bx + {\boldsymbol{\delta}}) + \bc_k^\top\bb^2\quad&  &(\text{By Lemma \ref{conjugate-computation}b}).\label{concave-part1}
 \end{align}
Lastly, by Lemma \ref{conjugate-computation}a and \ref{conjugate-computation}b, we know that the variables $\bu$ and $\bv$ can be parameterized as
\begin{align*}
    \bu &= [(\bbw^2)^\top \bc_k ]^+\odot \bs, \\
    \bv &= [ (\bbw^2)^\top \bc_k ]^+\odot \bs - [ -(\bbw^2)^\top \bc_k ]^+\odot \bt
\end{align*}
with $0\leq \bs, \bt\leq 1$. Substituting these values in Eq. \eqref{concave-part1} we obtain Eq. \eqref{induction-problem1}, as desired. \hfill 
\end{proof}
\subsection{\textbf{\textit{Step \#2 - Optimize over the uncertainty set}}}

\noindent Notice that the objective in Eq. \eqref{induction-problem1} is linear in $\bz^1$ and therefore it is also linear in ${\boldsymbol{\delta}}$, which facilitates the computation of the exact value of the supremum over $\mathcal{U}$, as shown in the next corollary.

\begin{corollary} \label{cor:p-norm1}
If $\mathcal{U} = \{\boldsymbol{\delta} : \|\boldsymbol{\delta}\|_p \leq \rho\}$, then:
 \begin{align}
     &\sup_{{\boldsymbol{\delta}} \in \mathcal{U}} \: \bc_k^\top\bz^2(\theta, \bx + {\boldsymbol{\delta}})  \\
     \begin{split}
     =& \sup_{0\leq {\bs}\leq 1} \inf_{0\leq {\bt}\leq 1} \: \rho \|(\bp - \bq)^\top\bbw^1\|_{q} + (\bp - \bq)^\top(\bbw^1\bx+ \bb^1) +  \bc_k^\top\bb^2  \\
     & \hspace{1.5cm}\text{s.t.} \quad \bp = [(\bbw^2)^\top\bc_k]^+\odot \bs\\
     &  \hspace{2.4cm} \bq = [-(\bbw^2)^\top \bc_k]^+\odot \bt,
     \label{conjugate_problem1}
     \end{split}
 \end{align}
 where $\|\cdot \|_q$ is the dual norm of $\|\cdot\|_p$, with $\frac{1}{p} + \frac{1}{q} = 1$.
 \end{corollary}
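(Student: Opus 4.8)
The plan is to take the closed form already produced by Theorem~\ref{general-decomposition1} and simply evaluate its innermost supremum over $\boldsymbol{\delta}\in\mathcal{U}$ in closed form, exploiting the fact that the first layer is affine in $\boldsymbol{\delta}$. The crucial structural point is that in Eq.~\eqref{induction-problem1} the supremum over $\boldsymbol{\delta}$ already sits \emph{inside} both the $\sup_{0\leq\bs\leq 1}$ and the $\inf_{0\leq\bt\leq 1}$, so it can be resolved pointwise for fixed $\bs,\bt$ (equivalently, for fixed $\bp,\bq$); no further minimax interchange or strong-duality argument is needed, and the outer $\sup_{\bs}\inf_{\bt}$ layer together with the constraints tying $\bp,\bq$ to $\bs,\bt$ is carried through untouched.

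First I would substitute the definition of the first layer, $\bz^1(\theta,\bx+\boldsymbol{\delta}) = \bbw^1(\bx+\boldsymbol{\delta})+\bb^1 = (\bbw^1\bx+\bb^1) + \bbw^1\boldsymbol{\delta}$, into the inner objective $(\bp-\bq)^\top\bz^1(\theta,\bx+\boldsymbol{\delta})$. This splits it into the $\boldsymbol{\delta}$-independent term $(\bp-\bq)^\top(\bbw^1\bx+\bb^1)$ plus the linear term $\big((\bp-\bq)^\top\bbw^1\big)\boldsymbol{\delta}$, so that
\[
\sup_{\boldsymbol{\delta}\in\mathcal{U}}(\bp-\bq)^\top\bz^1(\theta,\bx+\boldsymbol{\delta}) = (\bp-\bq)^\top(\bbw^1\bx+\bb^1) + \sup_{\|\boldsymbol{\delta}\|_p\leq\rho}\big((\bp-\bq)^\top\bbw^1\big)\boldsymbol{\delta}.
\]
Next I would apply the standard dual-norm identity $\sup_{\|\boldsymbol{\delta}\|_p\leq\rho}\ba^\top\boldsymbol{\delta} = \rho\|\ba\|_q$ (immediate from the definition of the dual norm and positive homogeneity, using $\tfrac1p+\tfrac1q=1$) with $\ba^\top = (\bp-\bq)^\top\bbw^1$, which turns the remaining supremum into $\rho\|(\bp-\bq)^\top\bbw^1\|_q$.

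Substituting this closed form back into the expression from Theorem~\ref{general-decomposition1} then yields exactly Eq.~\eqref{conjugate_problem1}. I do not anticipate any genuine obstacle here; the only points deserving care are the bookkeeping of which norm is dual to which (the $L_p$-ball constraint on $\boldsymbol{\delta}$ produces the $L_q$ norm of the coefficient vector in the objective) and the observation above that the supremum over $\boldsymbol{\delta}$ being innermost lets us evaluate it without disturbing the $\sup_{\bs}\inf_{\bt}$ structure. For the case $p=1$ that drives the remainder of Section~\ref{sec:RUB}, we have $q=\infty$, so $\rho\|(\bp-\bq)^\top\bbw^1\|_\infty$ becomes a maximum over a finite set of size $2M$, which is precisely the structure exploited subsequently.
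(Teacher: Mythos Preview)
Your proposal is correct and follows exactly the paper's own argument: the paper's proof simply says to apply Theorem~\ref{general-decomposition1} and then use the dual-norm identity $\sup_{\|\boldsymbol{\delta}\|_p\leq\rho}\bc^\top\boldsymbol{\delta}=\rho\|\bc\|_q$, and you have spelled out those two steps in full. Your remark that the innermost position of the $\sup_{\boldsymbol{\delta}}$ allows pointwise evaluation without disturbing the $\sup_{\bs}\inf_{\bt}$ structure is the only subtlety worth noting, and you have handled it correctly.
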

 Before proceeding to the proof of the corollary, notice that we can recover the approximation method developed in the previous section by setting 
 \begin{align}
     \bs = \bt = [sign(\bz^1(\theta, \bx))]^+\label{linear-approx-vals}
 \end{align} in the objective of problem \eqref{conjugate_problem1} to obtain \begin{align}  \rho\|((\bbw^2)^\top \bc_k\odot[sign(\bz^1(\theta, \bx))]^+)^\top \bbw^1 \|_q +\bc_k^\top\bbw^2[\bz^1(\theta, \bx)]^+ + \bc_k^\top\bb^2,\end{align} which is the same as the linear approximation of $\bc_k^\top\bz^2(\theta + \boldsymbol{\delta})$ obtained in Eq. \eqref{class-adversary}.
 
 \begin{proof}
  The proof follows directly after applying Theorem \ref{general-decomposition1}  and using the fact that for all vectors $\boldsymbol{\bc}$ we have
  \begin{align}
      \sup_{{\boldsymbol{\delta}} : \|\boldsymbol{\delta}\|_p \leq \rho} \bc^\top \boldsymbol{\delta}  = \rho\|\bc\|_q.\label{robust_trick}
  \end{align}
 \end{proof}

\subsection{\textbf{\textit{Step \#3 - Backtrack}}}
Since neural networks are trained by minimizing the empirical loss over the parameters $\theta$, we want to avoid the computation of supremums in the objective. While the previous corollary shows how to solve the supremum over the uncertainty set, a new supremum was introduced in Theorem \ref{general-decomposition1} over the variables $\bs$. The next theorem tells us how we can remove this new supremums for the specific case $p = 1$.

\begin{theorem}\label{last} The maximum difference between the output of the correct class and the output of any other class $k$ can be upper bounded by
\begin{align}
\sup_{{\boldsymbol{\delta}}: \|\boldsymbol{\delta}\|_1 \leq \rho} &\: \bc_k^\top\bz^2(\theta, \bx + {\boldsymbol{\delta}})   \nonumber \\
       \leq &  \inf_{0\leq {\bt}\leq 1} \: \max_{m\in [M]} \: \max\bigg\{\  g^2_{k, m}(\theta, \bt,\bx, \rho), g^2_{k, m}(\theta, \bt, \bx, -\rho)\bigg\},
\end{align}
where the new network $g$ is defined by the equations
\begin{align*}
    g^1_m(\theta, a) &= a \bbw^1\boldsymbol{e}_m + \bbw^1\bx + \bb^1,\\
    g^2_{k,m}(\theta, \bt, \bx, a)& = [\bc_k^\top \bbw^2]^+[g^1_m(\theta, a)]^+ - [-\bc_k^\top \bbw^2]^+[g^1_m(\theta, a)]\odot \bt + \bc_k^\top\bb^2, 
\end{align*}
for $a = \rho, -\rho$.
\end{theorem}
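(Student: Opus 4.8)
The plan is to start from Corollary~\ref{cor:p-norm1} specialized to $p=1$ (so that the relevant dual norm is $q=\infty$) and then eliminate the outer supremum over $\bs$ by a weak minimax inequality. Corollary~\ref{cor:p-norm1} gives
\[
\sup_{\|\boldsymbol{\delta}\|_1\leq\rho}\bc_k^\top\bz^2(\theta,\bx+\boldsymbol{\delta}) \;=\; \sup_{0\leq\bs\leq1}\inf_{0\leq\bt\leq1}\ \rho\|(\bp-\bq)^\top\bbw^1\|_\infty + (\bp-\bq)^\top(\bbw^1\bx+\bb^1) + \bc_k^\top\bb^2,
\]
with $\bp=[(\bbw^2)^\top\bc_k]^+\odot\bs$ and $\bq=[-(\bbw^2)^\top\bc_k]^+\odot\bt$. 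The first step is purely notational: using $\rho\|\bv\|_\infty=\max_{m\in[M]}\max_{a\in\{\rho,-\rho\}}a\,v_m$ applied to $\bv=(\bbw^1)^\top(\bp-\bq)$ and absorbing the $(m,a)$-independent term $(\bp-\bq)^\top(\bbw^1\bx+\bb^1)$ into the maximum, I rewrite the objective as $\max_{m\in[M]}\max_{a\in\{\rho,-\rho\}}(\bp-\bq)^\top g^1_m(\theta,a) + \bc_k^\top\bb^2$, which is exactly where the auxiliary first layer $g^1_m(\theta,a)=a\bbw^1\be_m+\bbw^1\bx+\bb^1$ comes from.

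Next I apply the weak minimax inequality $\sup_{\bs}\inf_{\bt}(\cdot)\leq\inf_{\bt}\sup_{\bs}(\cdot)$; this is the only inequality in the whole argument and is precisely the reason the theorem is an upper bound rather than an equality. After the swap it remains, for each fixed $\bt$, to evaluate $\sup_{0\leq\bs\leq1}\max_{m,a}\big[(\bp(\bs)-\bq)^\top g^1_m(\theta,a)\big]+\bc_k^\top\bb^2$. Since the supremum over $\bs$ commutes with a maximum over the finite product index set $\{(m,a):m\in[M],\,a\in\{\rho,-\rho\}\}$, I pull $\sup_\bs$ inside, and for each $(m,a)$ optimize the map $\bs\mapsto\bp(\bs)^\top g^1_m(\theta,a)=\sum_j[(\bbw^2)^\top\bc_k]^+_j\,s_j\,(g^1_m(\theta,a))_j$ coordinatewise over the box $[0,1]^{r_1}$: because $[(\bbw^2)^\top\bc_k]^+\geq\boldsymbol{0}$, the optimal $s_j$ is $1$ when $(g^1_m(\theta,a))_j>0$ and $0$ otherwise, so $\sup_{0\leq\bs\leq1}\bp(\bs)^\top g^1_m(\theta,a)=[\bc_k^\top\bbw^2]^+[g^1_m(\theta,a)]^+$. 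The term $-\bq^\top g^1_m(\theta,a)=-[-\bc_k^\top\bbw^2]^+\big(g^1_m(\theta,a)\odot\bt\big)$ does not involve $\bs$ and is unchanged, so the inner supremum is exactly $\max_m\max_a g^2_{k,m}(\theta,\bt,\bx,a)$ by the definition of $g^2$; taking $\inf_{0\leq\bt\leq1}$ back outside yields the stated bound.

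The routine parts are the coordinatewise box optimization and verifying that the resulting expression matches the definitions of $g^1_m$ and $g^2_{k,m}$ verbatim. The only genuinely lossy step is the minimax inequality, and the main thing to be careful about is the order of quantifiers: $\sup_\bs$ may be exchanged freely with the finite $\max_{m,a}$, but it must be moved past $\inf_\bt$ via weak duality rather than an exchange of equalities. A secondary point to watch is keeping the factor $\odot\bt$ attached to the non-rectified $g^1_m(\theta,a)$ (not to $[g^1_m(\theta,a)]^+$), since $\bt$ is the infimum variable and is not optimized at that stage, so no coordinatewise rectification occurs on the $\bq$-term.
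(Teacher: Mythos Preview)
Your proposal is correct and follows essentially the same route as the paper: invoke Corollary~\ref{cor:p-norm1} with $p=1$, apply the weak min--max inequality $\sup_{\bs}\inf_{\bt}\leq\inf_{\bt}\sup_{\bs}$, expand $\rho\|\cdot\|_\infty$ as a finite maximum over $m\in[M]$ and $a\in\{\rho,-\rho\}$, commute $\sup_{\bs}$ with that finite maximum, and then solve the box-constrained sup over $\bs$ coordinatewise to recover $[\bc_k^\top\bbw^2]^+[g^1_m(\theta,a)]^+$. The only cosmetic difference is that you rewrite the $\ell_\infty$ norm as a finite max \emph{before} applying weak duality whereas the paper does it after; since the rewriting is a pointwise identity this is immaterial.
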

\begin{proof}
Applying Corollary \ref{cor:p-norm1} with $p =1$ and using the min-max inequality we obtain
\begin{align}
     &\sup_{{\boldsymbol{\delta}}: \|{\boldsymbol{\delta}}\|_1 \leq \rho} \: \bc_k^\top\bz^2(\theta, \bx + {\boldsymbol{\delta}})  \\
     \begin{split}
     \leq & \inf_{0\leq {\bt}\leq 1} \sup_{0\leq {\bs}\leq 1}  \rho \|(\bp - \bq)^\top\bbw^1\|_{\infty} + (\bp - \bq)^\top(\bbw^1\bx+ \bb^1) + \bc_k^\top\bb^2\\
     & \hspace{1.5cm}\text{s.t.} \quad \bp = [(\bbw^2)^\top\bc_k]^+\odot \bs\\
     &  \hspace{2.4cm} \bq = [-(\bbw^2)^\top \bc_k]^+\odot \bt.  
     \label{conjugate_problem_proof}
     \end{split}
 \end{align}
Defining $\bp(\bs) = [(\bbw^2)^\top\bc_k]^+\odot \bs$ and  $\bq(\bt) = [-(\bbw^2)^\top \bc_k]^+\odot \bt$, we have that for fixed ${\bt}$ it holds\\
\begin{align*}
    &\sup_{0\leq {\bs}\leq 1} \: \rho \|(\bp(\bs) - \bq(\bt))^\top\bbw^1\|_{\infty} + (\bp(\bs) - \bq(\bt))^\top(\bbw^1\bx+ \bb^1) + \bc_k^\top\bb^2\\
    =& \max_{m\in [M]}\: \max\bigg\{\sup_{0\leq {\bs}\leq 1}\: ( \bp(\bs) -\bq(\bt))^\top(\bbw^1(\bx + \rho \be_m) + \bb^1)+ \bc_k^\top\bb^2,\\
    & \hspace{2.1cm}\sup_{0\leq {\bs}\leq 1} \: (\bp(\bs) - \bq(\bt))^\top(\bbw^1(\bx - \rho \be_m) + \bb^1)+ \bc_k^\top\bb^2\bigg\}\\
    &\text{\hfill }\\
    =& \max_{m\in [M]} \: \max\bigg\{\  [\bc_k^\top\bbw^2]^+ [\bbw^1(\bx + \rho \be_m) + \bb^1]^+- \bq(\bt)^\top(\bbw^1(\bx + \rho \be_m) + \bb^1)+ \bc_k^\top\bb^2, \\
    & \hspace{2.1cm} [(\bc_k^\top\bbw^2]^+[\bbw^1(\bx - \rho \be_m) + \bb^1]^+ - \bq(\bt)^\top (\bbw^1(\bx - \rho \be_m) + \bb^1)+ \bc_k^\top\bb^2\bigg\}\\
    =& \max_{m\in [M]} \: \max\{g^2_{k,m}(\theta, \bt, \bx, \rho), g^2_{k,m}(\theta, \bt, \bx, -\rho)\}.
\end{align*}
The theorem then follows after applying the $\inf$ over $0\leq \bt\leq 1$.
\end{proof}
Notice that in the previous proof it was crucial to use $p=1$, since the dual of the $L_1$ norm is the $L_\infty$ norm, which can be written as a maximum over a finite set. With a different $p$ we would not have been able to apply the same technique to remove the variables $\bs$. However, for the chosen uncertainty set we obtain an upper bound of Eq. \eqref{minmax-problem} by applying the result from the previous Theorem to \textcolor{black}{Lemma \ref{lemma:gen_upper_bound}}.

While we could include the variables $\bt$ in the minimization problem over $\theta$, we instead use fixed values $\bt = [sign(\bz^1(\theta, \bx))]^+$ based on the linear approximation of $\bc_k^\top\bz^2(\theta, \bx + {\boldsymbol{\delta}})$, as described in Eq. \eqref{linear-approx-vals}. Notice that setting specific values for $\bt$ does not affect the inequalities: since the upper bound includes the infimum over $\bt$, any $0\leq \bt\leq 1$ yields an upper bound of the robust problem. For the specific case of the cross entropy loss function, the proposed upper bound for the min-max robust problem is 
\begin{align}
\min_\theta \frac{1}{N} \sum_{n = 1}^N \log\bigg(\sum_k e^{\big({\max_{m\in[M]} \max\{g^2_{k,m}(\theta, [sign(\bz^1(\theta, \bx))]^+, \bx, \rho), g^2_{k,m}(\theta, [sign(\bz^1(\theta, \bx))]^+, \bx, -\rho)\}}\big)}\bigg).\label{eq:upper-bound}
\end{align}

\section{Experiments}\label{sec:experiments}
In this section, we demonstrate the effectiveness of the proposed methods in practice. We first introduce the experimental setup and then we compare the robustness of several defenses.
\subsection{Experimental details}\label{subsec:61}
\paragraph{Data sets.} We use 46 data sets from the UCI collection~\citep{uci}, which correspond to classification tasks with a diverse number of features that are not categorical. For each data set we do a $80\% / 20\%$ split for training/testing sets, and we further reserve  $25\%$  of each training set for validation. In addition, we use three popular computer vision data sets, namely the MNIST~\citep{deng2012mnist}, Fashion MNIST~\citep{xiao2017fashionmnist} and CIFAR~\citep{Krizhevsky09learningmultiple} data sets. 
\paragraph{Pre-processing.} All input data has been previously scaled, which facilitates the comparison of the adversarial attacks across data sets. For the UCI data sets, each feature is standarized using the statistics of the training set, while for the vision data sets each image channel is normalized to be between $0$ and $1$, or standarized, depending on what leads to best robustness.

\paragraph{Attacks.} We use the implementation provided by the foolbox library~\citep{rauber2017foolbox, rauber2017foolboxnative} using the default parameters. We evaluate attacks using projected gradient descent and fast gradient methods. More specifically, we use the following adversarial attacks:
\begin{itemize}
    \item PGD-$L_p$: Attack bounded in $L_p$ norm and found using Projected Gradient Descent.
    \item FGM-$L_p$: Attack bounded in $L_p$ norm and found using Fast Gradient Method for $p=1,2$ and Fast Gradient Sign Method for $p =\infty$.
\end{itemize}

\paragraph{Defenses.} Our comparisons include different defenses denoted as follows:
\begin{itemize}
    \item aRUB-$L_p$: Approximate Robust Upper Bound method described in section \ref{sec:aRUB} using the $L_1$ or $L_\infty$ sphere as the uncertainty set.
    \item RUB:  Robust Upper Bound method described in section \ref{sec:RUB} using the $L_1$ sphere as the uncertainty set.
    \item PGD-$L_\infty$: Adversarial training method in which the \textcolor{black}{network is trained using} attacks that are bounded in the $L_\infty$ norm and found using Projected Gradient Descent.
    \item Baseline-$L_{\infty}$: Simple approximation method resulting from minimizing Eq. \eqref{simple-approx-problem} using the $L_p$ sphere as the uncertainty set.
    \item Nominal: Standard vanilla training with no robustness ($\rho = 0$).
\end{itemize}

\paragraph{Architecture.} We evaluate a neural network with three dense hidden layers with 200 neurons in each hidden layer. For the vision data sets, we also provide results with Convolutional Neural Networks (\emph{CNNs}) in Appendix \ref{sec:appendixD}. The architecture has two convolutional layers alternated with pooling operations, and two dense layers, as in~\cite{madry2019deep}. \textcolor{black}{The parameters of the networks were initialized with the Glorot initialization~\citep{glorot2010understanding}.}

\paragraph{Hyperparameter Tuning.} Each network and defense is trained for different learning rates ($\{1, 10^{-1}, 10^{-2}, 10^{-3}, 10^{-4}, 10^{-5}, 10^{-6}\}$). For the UCI data set we use a batch size of $256$ and for the vision data sets we try a batch size of $32$ and $256$.
For the $L_\infty$ based training methods we try all values of $\rho$ from the set ($\{10^{-4}, 10^{-5}, 10^{-3}, 10^{-2}, 0.1, 0.3$, $0.5, 1,$ $3, 5,10\}$). For the methods based on the $L_1$ norm, we scale those values of $\rho$ by a factor of $\sqrt{m}$, since $\|\bx\|_\infty \leq \|\bx\|_2$ and $\|\bx\|_1\leq \sqrt{m}\|\bx\|_2$ for any $\bx\in \mathbb{R}^m$.  In this way we ensure that the $L_1$ spheres and the $L_\infty$ spheres contain the same $L_2$ spheres, allowing for a fair comparison of all methods in terms of adversarial attacks that are bounded in the $L_2$ norm.  All networks trained using the UCI
data sets are trained for $5000$ iterations, and all vision data sets are trained for
$10000$ \textcolor{black}{iterations}. \textcolor{black}{For each network, data set, batch size, and defense radius $\rho$, we have verified that for at least one of the learning rates the validation accuracy converges with the aforementioned number of training iterations.}  Finally, for each attack type with radius $\rho$, we select on the validation set the best hyperparameters for each defense, \textcolor{black}{i.e., given a data set, an attack type and its radius $\rho$, the hyperparameters of a defense (network, learning rate, batch size, normalization and defense radius $\rho$) are the ones that lead to the highest adversarial robustness in the validation set. In total we trained more than $40,000$ networks across all tested data sets and defenses.}

\subsection{UCI data sets}\label{subsec:UCI}
We run experiments on the $46$ UCI data sets using different methods for robust training and compare the adversarial accuracies achieved with multiple types of adversarial attacks. For each data set we rank every training method, where the method with rank $1$ corresponds to the one with highest adversarial accuracy. The \textcolor{black}{average ranks and the corresponding $95\%$ confidence intervals} are shown in Figure \ref{fig:ranks}, where we observe a similar pattern across all types of attacks, namely, we see that the best ranks are achieved with aRUB-$L_\infty$ and PGD-$L_{\infty}$ when $\rho$ is smaller than $10^{-1}$; next there is a small range in which PGD-$L_\infty$ does best and finally for larger values of $\rho$ the best rank is that of RUB. \textcolor{black}{In addition, for large values of $\rho$ we observe better results with Baseline-$L_\infty$ than with aRUB-$L_\infty$, suggesting that the linear approximation of the network becomes inaccurate and leads to a large change in the loss function.} We also highlight that looking at $\rho = 0$, it is clear that robust training methods achieve better natural accuracy than the one obtained with Nominal training.

\begin{figure}[H]
    \centering
    \begin{subfigure}[b]{0.49\textwidth}
        \centering
        \includegraphics[width=\textwidth]{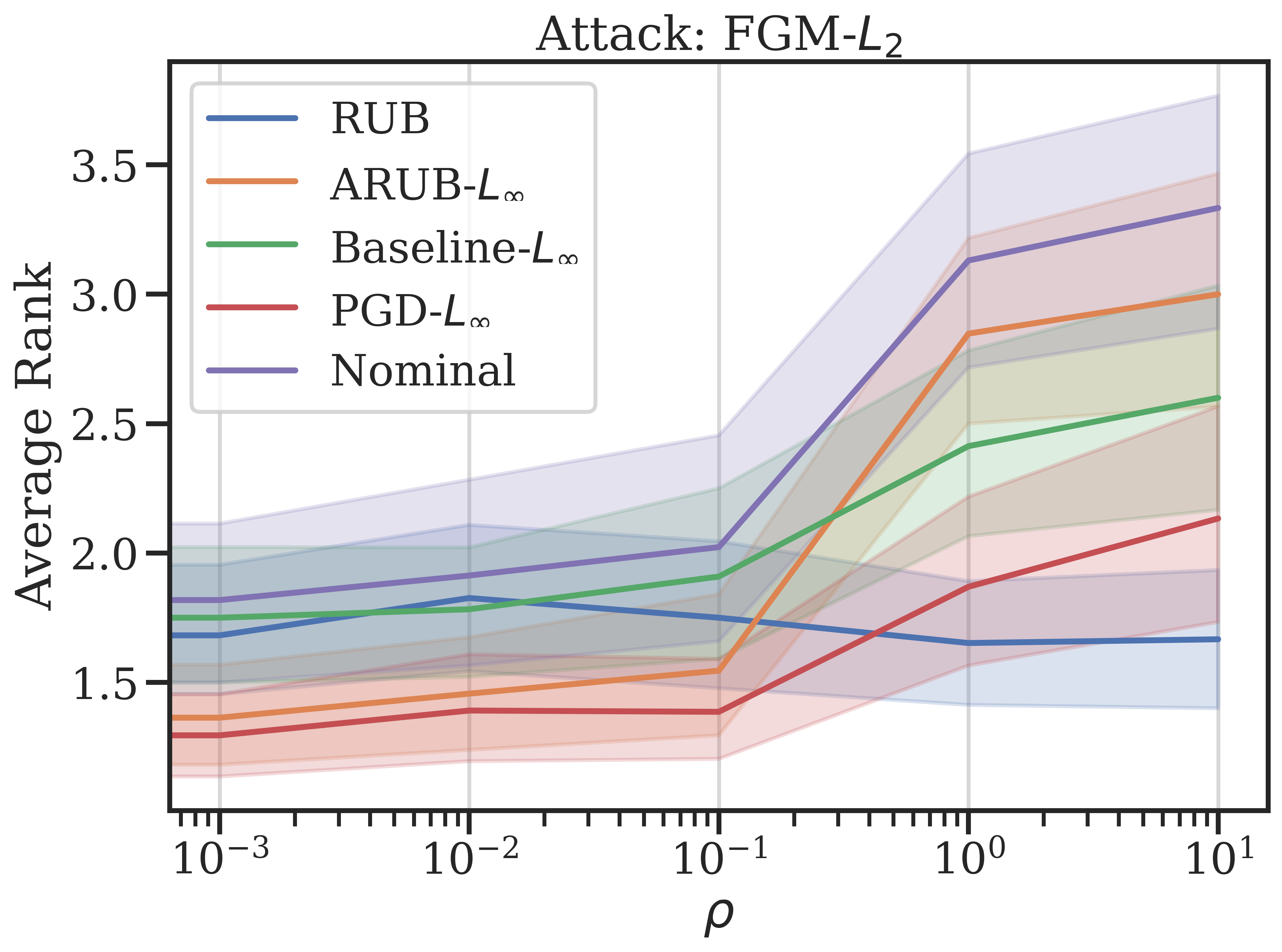}
        \vspace{-2\baselineskip}
        \caption{}
        \label{fig:FGM_l2_ranks}
    \end{subfigure}
    \hfill
    \begin{subfigure}[b]{0.49\textwidth}
        \centering
        \includegraphics[width=\textwidth]{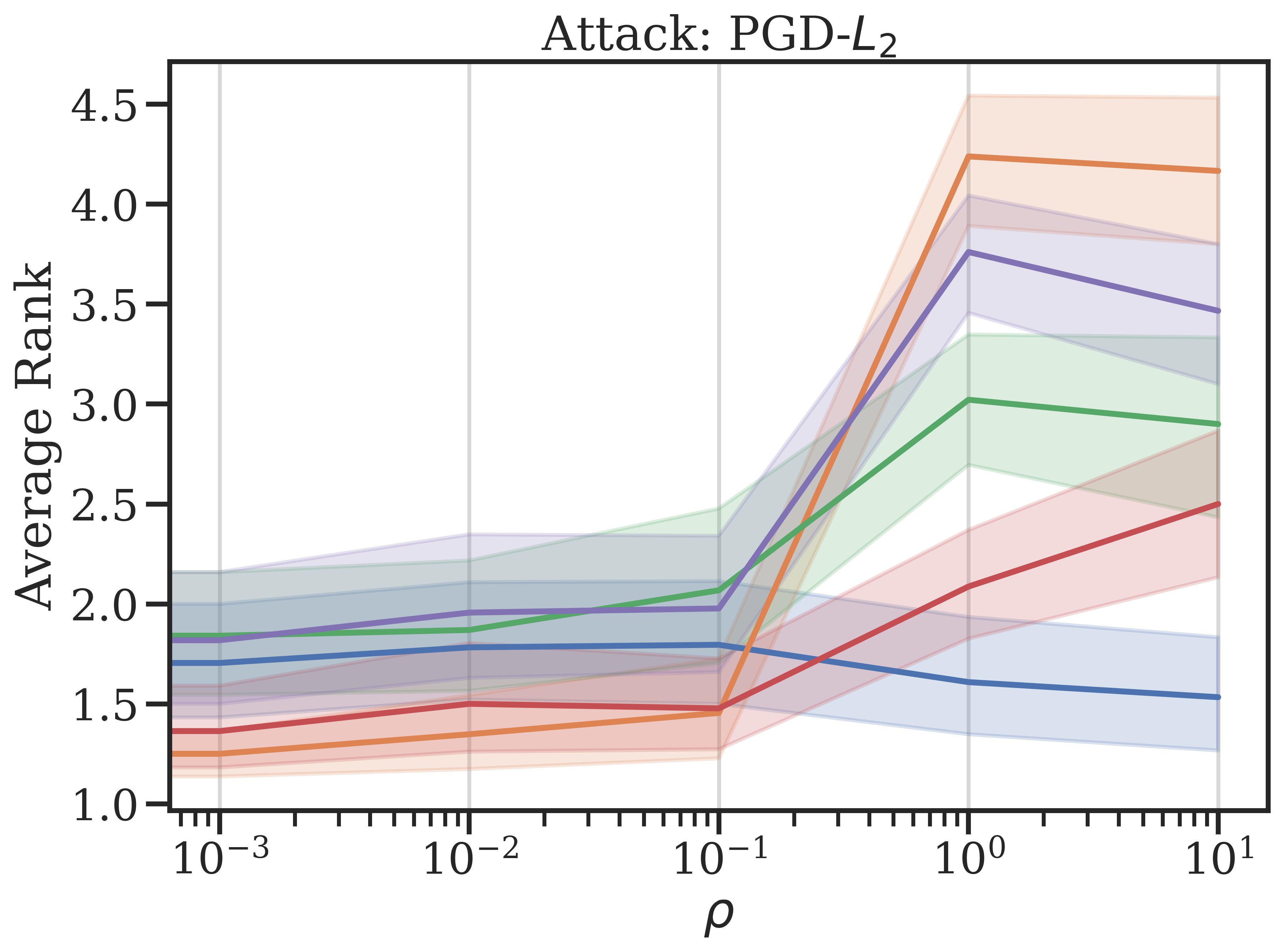}
        \vspace{-2\baselineskip}
        \caption{}
        \label{fig:PGD_l2_ranks}
    \end{subfigure}\\
    \hfill \\
    \begin{subfigure}[b]{0.49\textwidth}
        \centering
        \includegraphics[width=\textwidth]{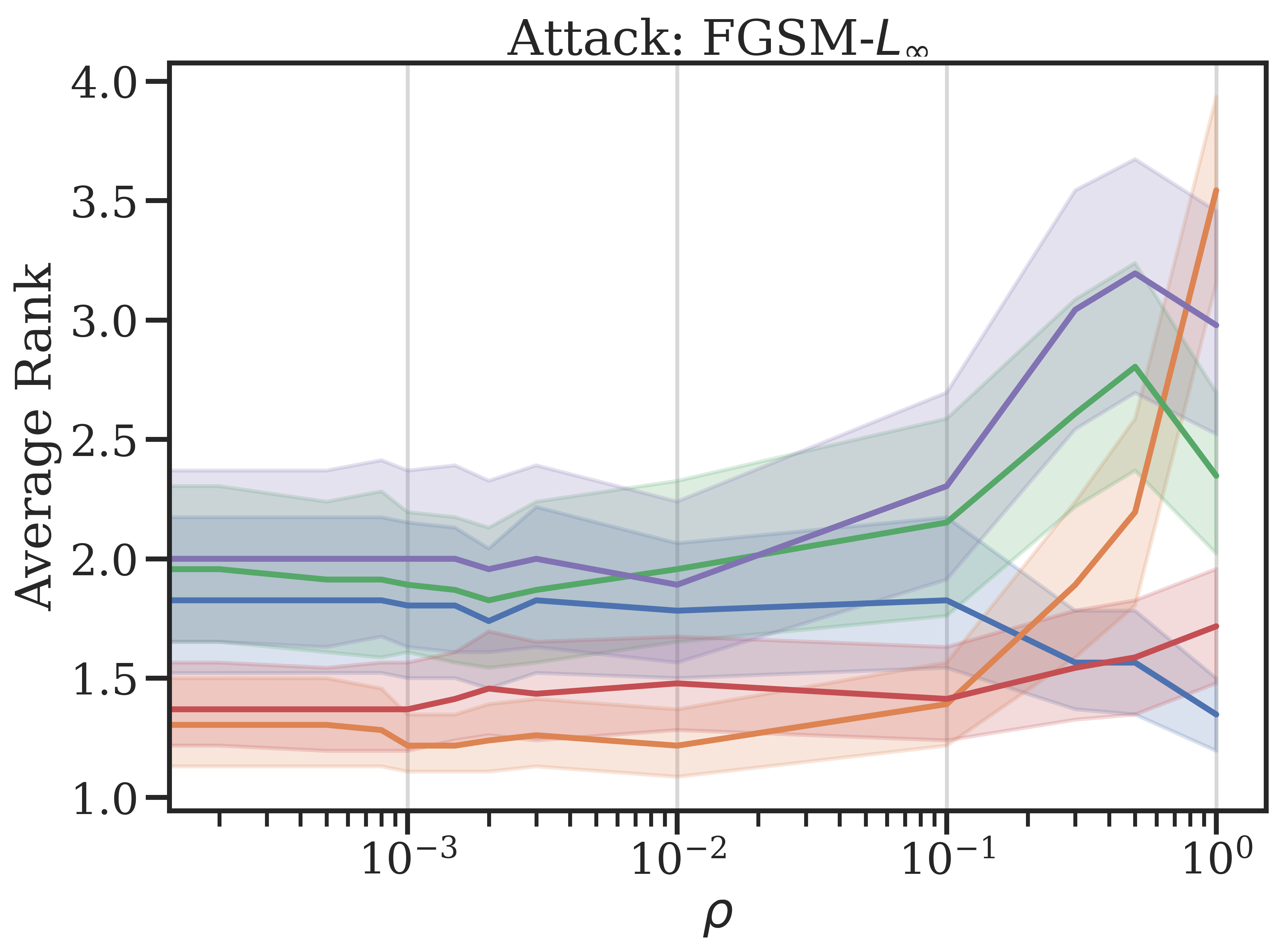}
        \vspace{-2\baselineskip}
        \caption{}
        \label{fig:FGSM_linf_ranks}
    \end{subfigure}
    \hfill
    \begin{subfigure}[b]{0.49\textwidth}
        \centering
        \includegraphics[width=\textwidth]{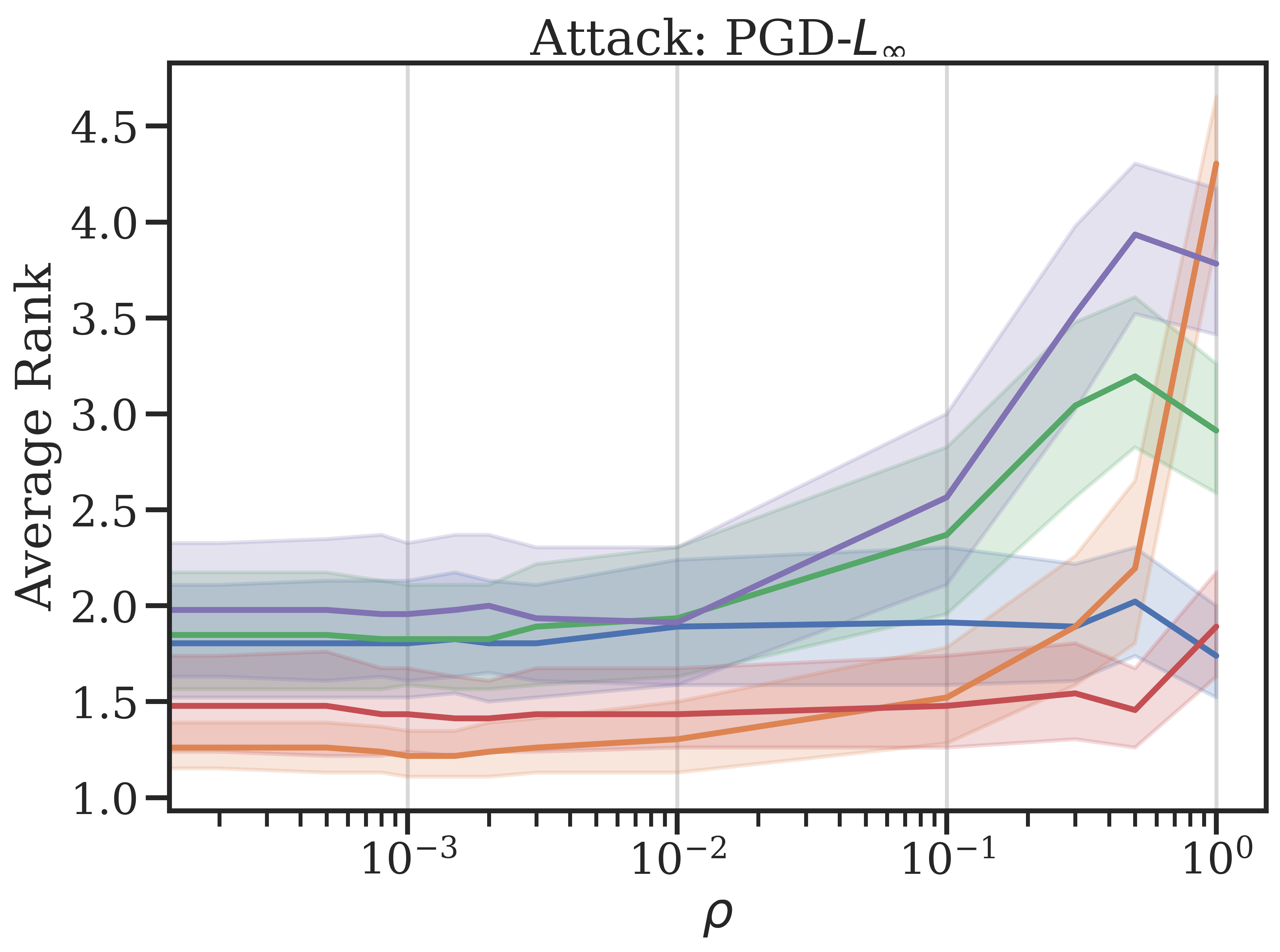}
        \vspace{-2\baselineskip}
        \caption{}
        \label{fig:PGD_linf_ranks}
    \end{subfigure}\\
    \hfill \\
    \begin{subfigure}[b]{0.49\textwidth}
        \centering
        \includegraphics[width=\textwidth]{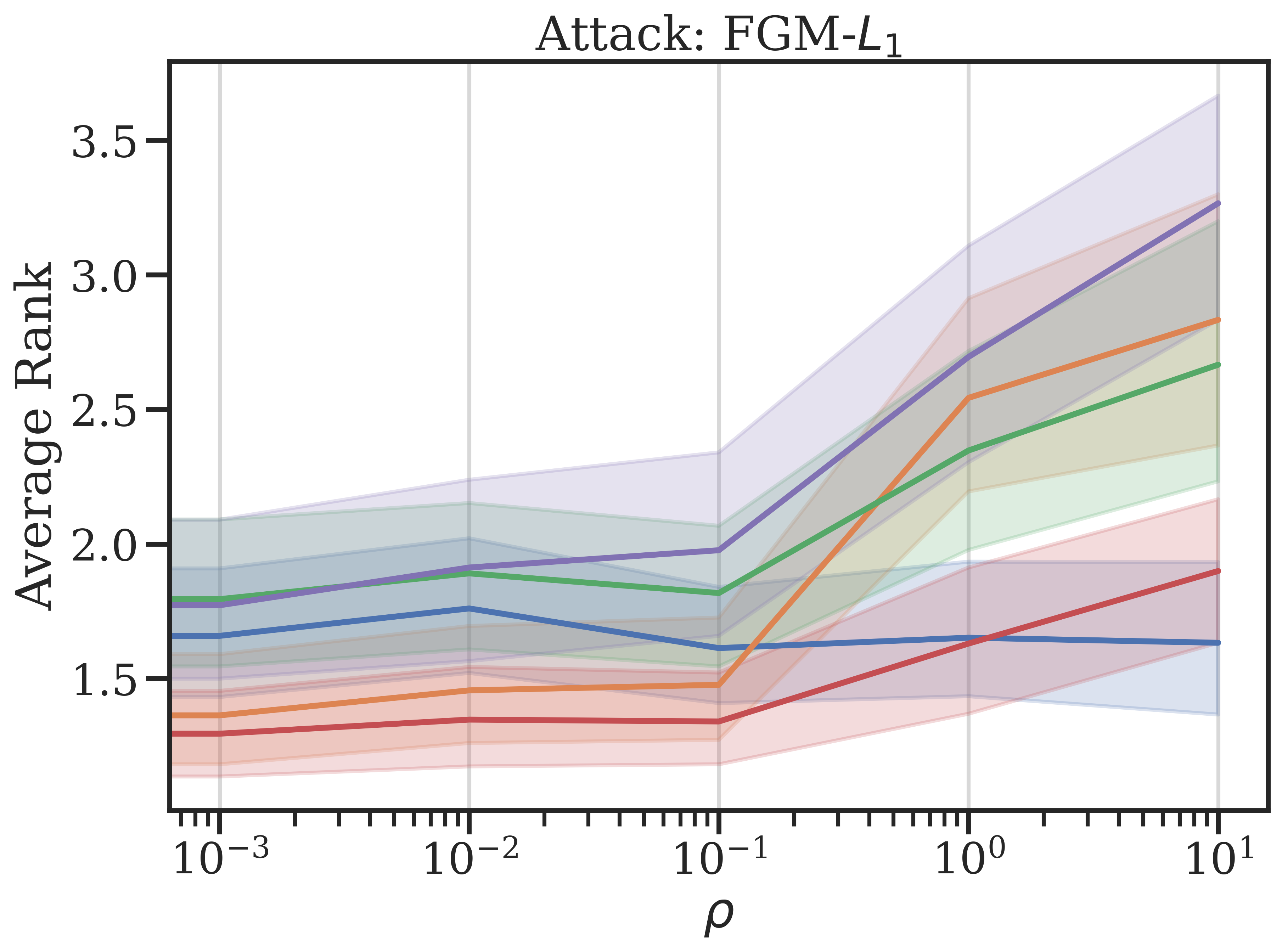}
        \vspace{-2\baselineskip}
        \caption{}
        \label{fig:FGSM_l1_ranks}
    \end{subfigure}
    \hfill
    \begin{subfigure}[b]{0.49\textwidth}
        \centering
        \includegraphics[width=\textwidth]{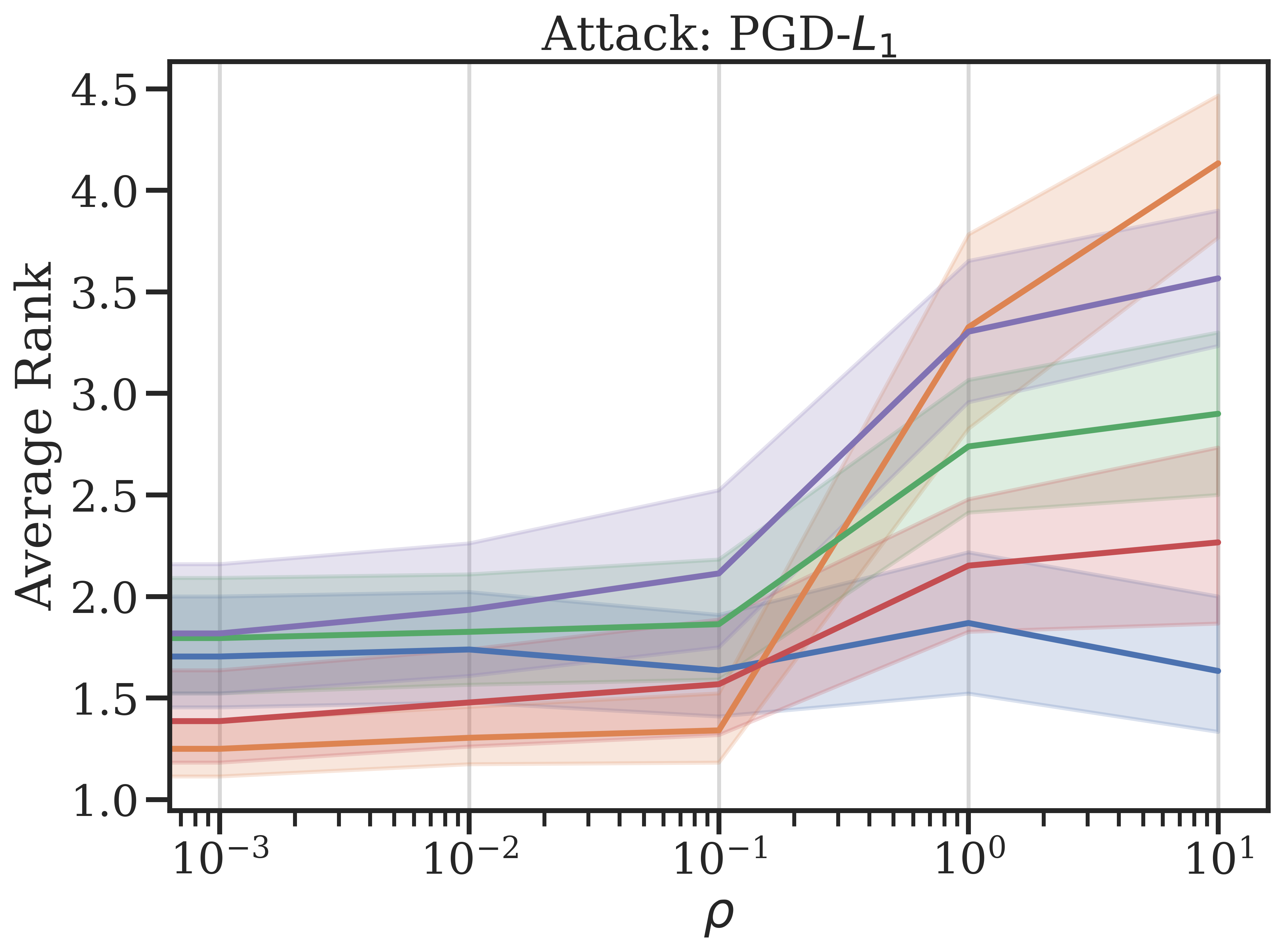}
        \vspace{-2\baselineskip}
        \caption{}
        \label{fig:PGD_l1_ranks}
    \end{subfigure}
       \caption{Average rank \textcolor{black}{ and the corresponding $95\%$ confidence interval for} each method across the 46 UCI data sets for adversarial attacks bounded in $L_2, L_{\infty}$ and $L_1$ norm, respectively from top to bottom. The figures on the left use attacks based on Fast Gradient methods, and the figures on the right use instead attacks based on Projected Gradient Descent.}
       \label{fig:ranks}
\end{figure}
\newpage

\begin{figure}[h!]
    \centering
    \begin{subfigure}[b]{0.32\textwidth}
        \centering
        \includegraphics[width=\textwidth]{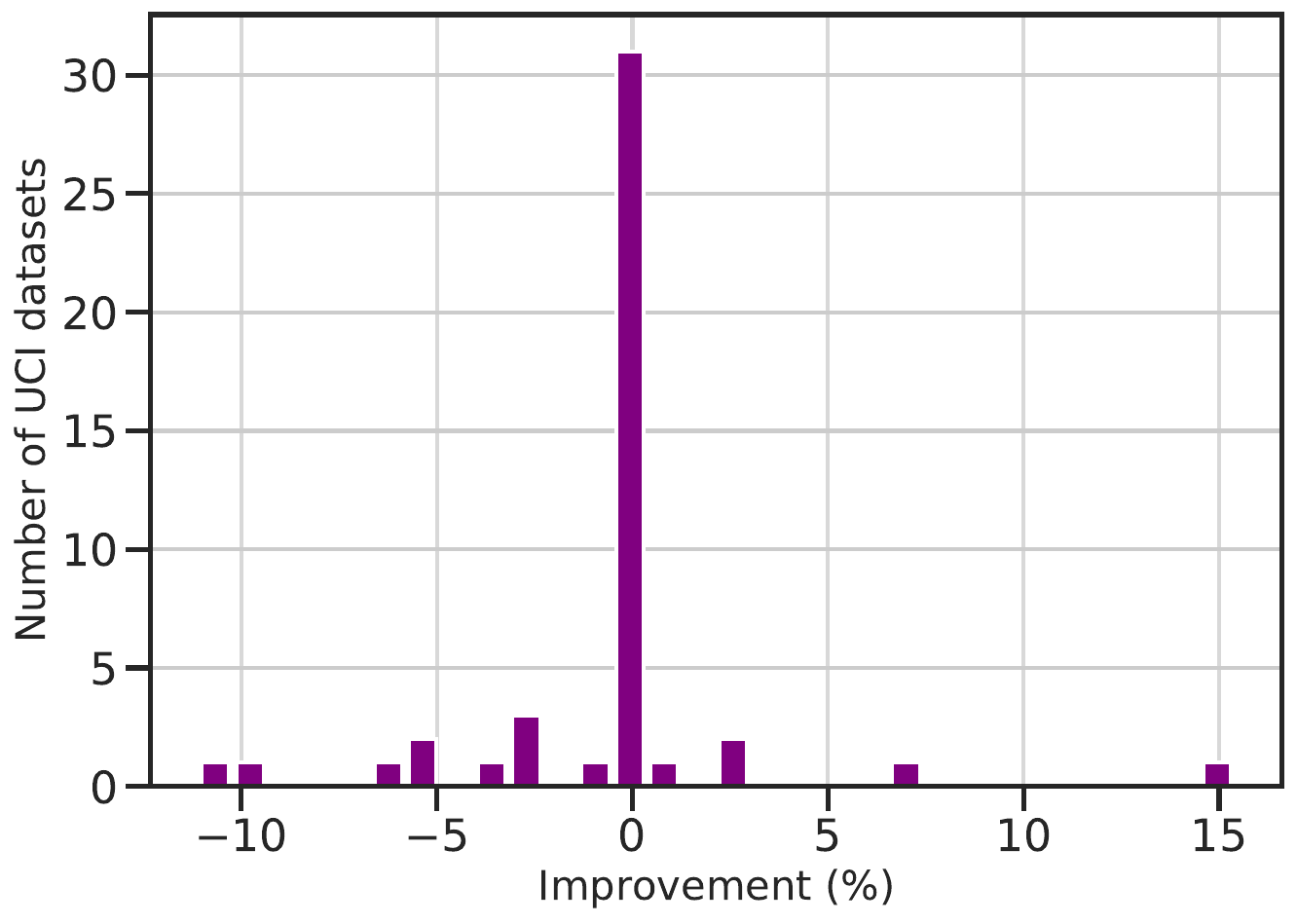}
        \caption{$\rho = 0.01$}
        \label{fig:UCI_RUB_pct_improved_01}
    \end{subfigure}
    \begin{subfigure}[b]{0.32\textwidth}
        \centering
        \includegraphics[width=\textwidth]{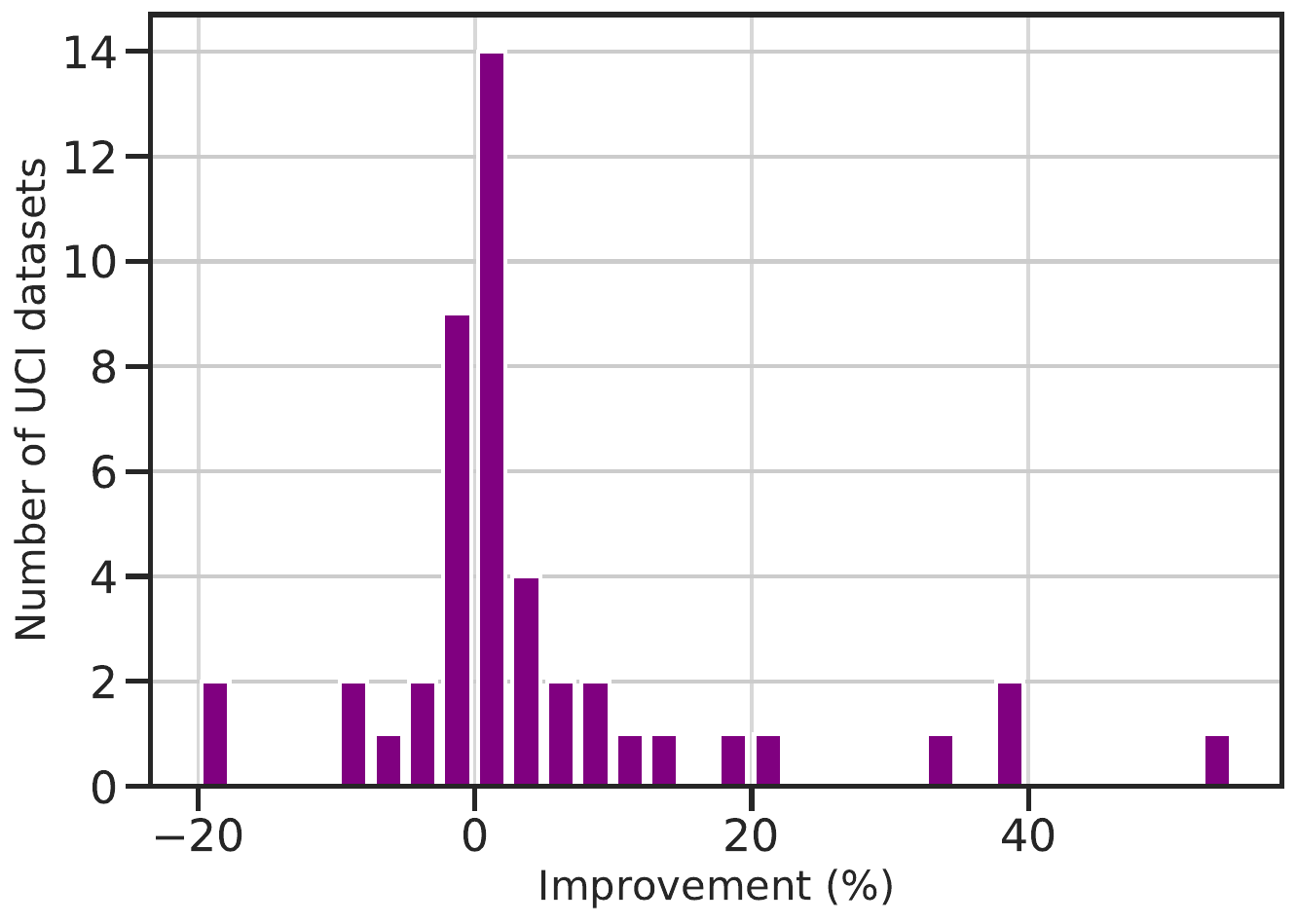}
        \caption{$\rho = 1$}
        \label{fig:UCI_RUB_pct_improved_1}
    \end{subfigure}
    \begin{subfigure}[b]{0.32\textwidth}
        \centering
        \includegraphics[width=\textwidth]{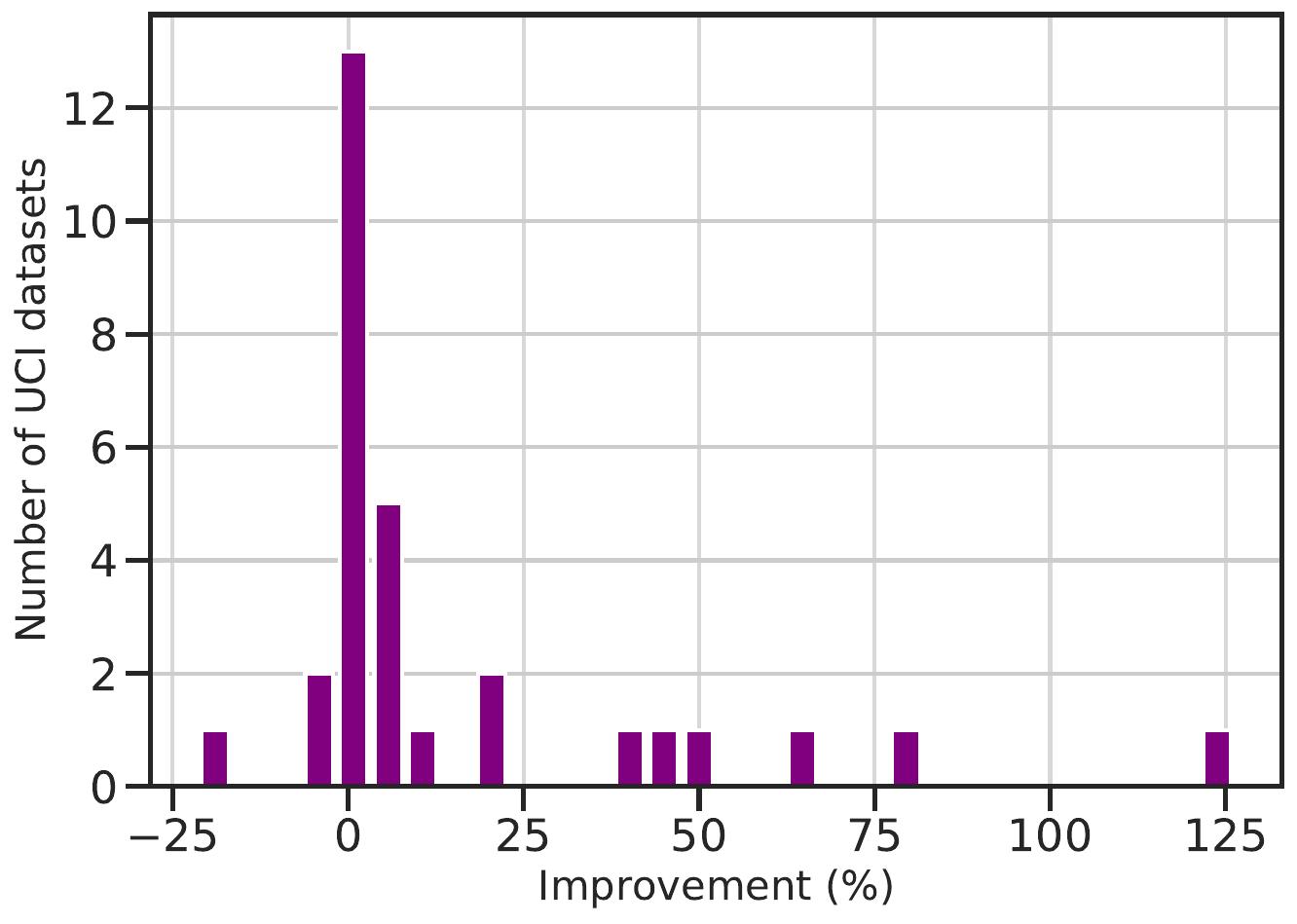}
        \caption{$\rho = 10$}
        \label{fig:UCI_RUB_pct_improved_10}
    \end{subfigure}
       \caption{Number of UCI data sets for which RUB-$L_1$ improves adversarial accuracy over PGD-$L_\infty$ by a specific percentage. Figures a), b) and c) show the corresponding plots for PGD-$L_2$ adversarial examples with different values of $\rho$.}
       \label{fig:UCI_RUB_pct_improved_}
\end{figure}
\begin{figure}[h!]
    \centering
    \begin{subfigure}[b]{0.32\textwidth}
        \centering
        \includegraphics[width=\textwidth]{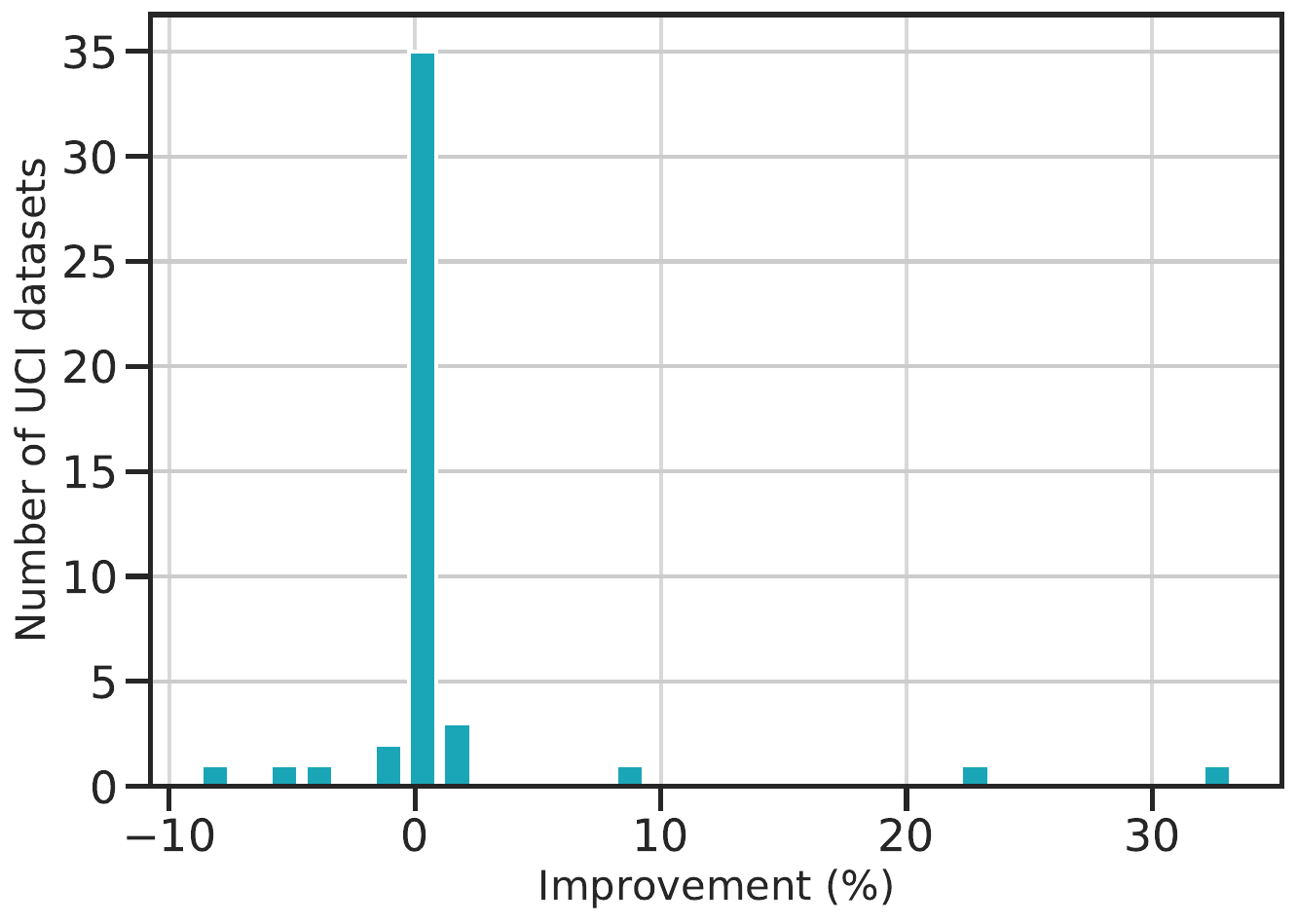}
        \caption{$\rho = 0.001$}
        \label{fig:UCI_aRUB_pct_improved_0001}
    \end{subfigure}
    \begin{subfigure}[b]{0.32\textwidth}
        \centering
        \includegraphics[width=\textwidth]{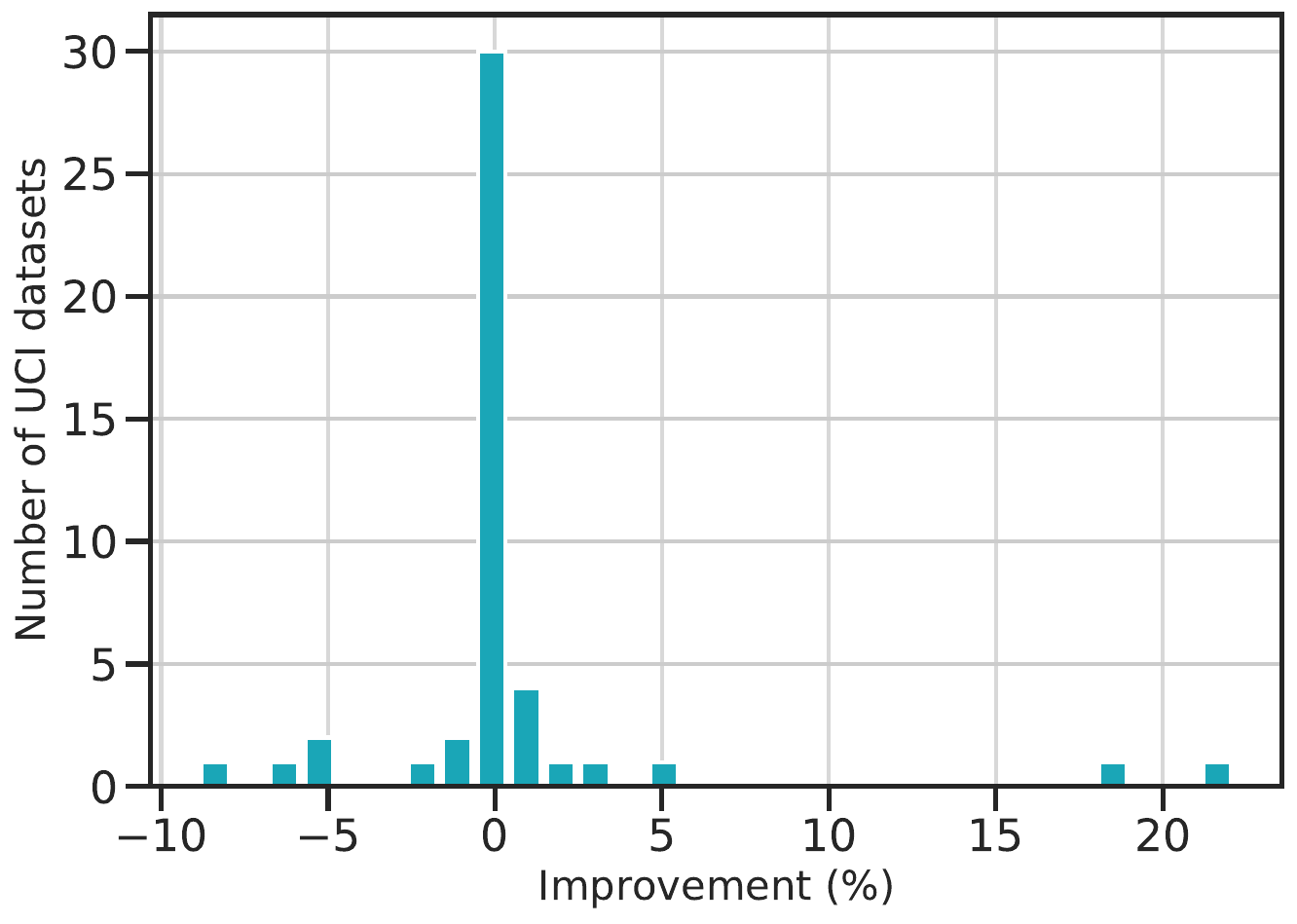}
        \caption{$\rho = 0.1$}
        \label{fig:UCI_aRUB_pct_improved_01}
    \end{subfigure}
    \begin{subfigure}[b]{0.32\textwidth}
        \centering
        \includegraphics[width=\textwidth]{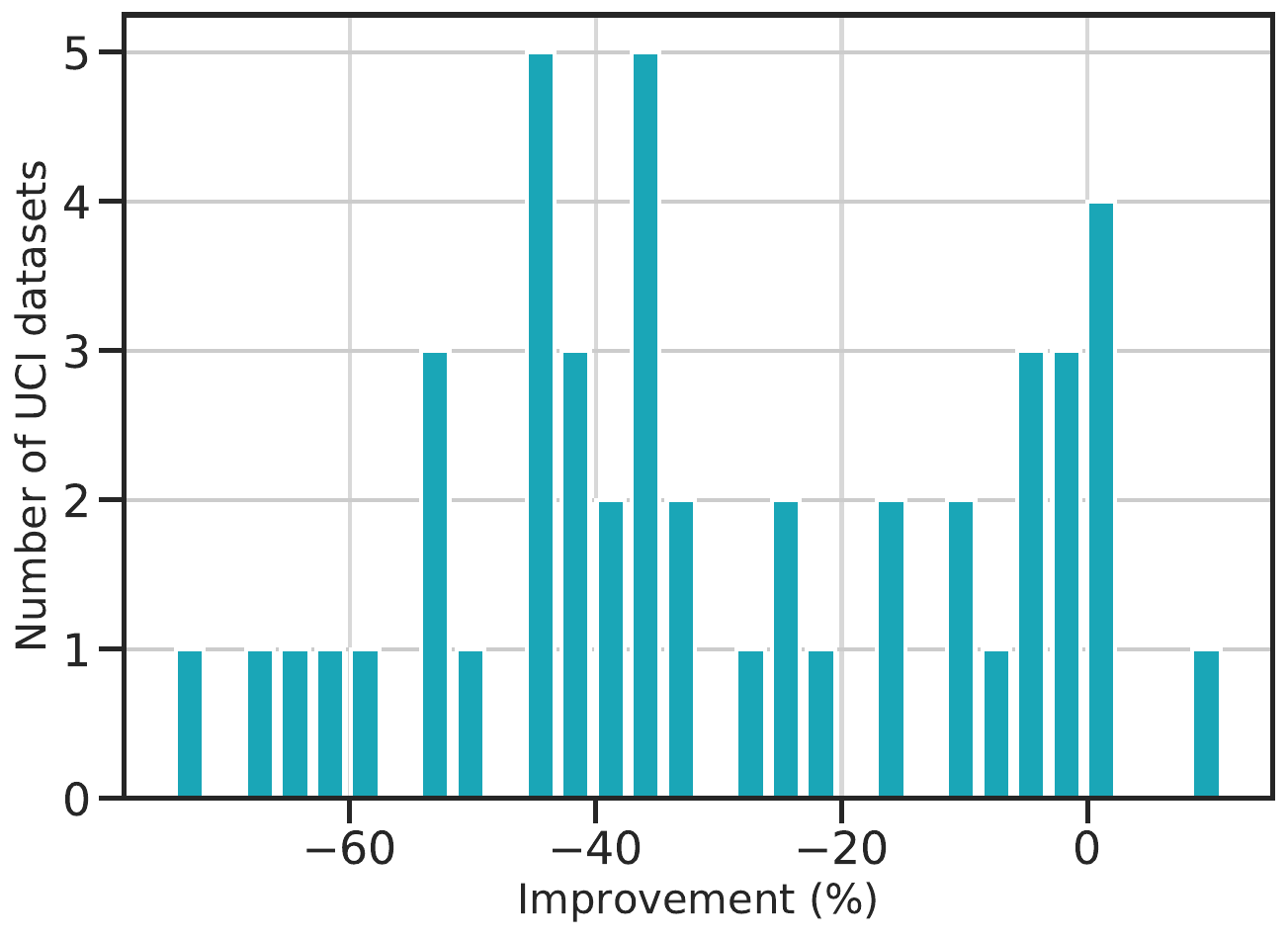}
        \caption{$\rho = 1$}
        \label{fig:UCI_aRUB_pct_improved_1}
    \end{subfigure}
       \caption{Number of UCI data sets for which aRUB-$L_\infty$ improves adversarial accuracy over PGD-$L_\infty$ by a specific percentage. Figures a), b) and c) show the corresponding plots for PGD-$L_\infty$ adversarial examples with different values of $\rho$.}
       \label{fig:UCI_aRUB_pct_improved_}
\end{figure}
\begin{table}[h!]
   \centering
   \begin{tabular}{|c|c|c|}
   \hline
         & \textbf{Avg no. \textcolor{black}{batches} per second} & \textbf{Standard Deviation}  \\
        \hline
        \hline
        RUB& \textcolor{black}{$65.1$}	& \textcolor{black}{$12.6$}\\
        \hline
        \textcolor{black}{aRUB -$L_1$} & \textcolor{black}{$17.5$}	 & \textcolor{black}{$0.5$}\\
        \hline
        aRUB -$L_\infty$ & \textcolor{black}{$18.8$	} & \textcolor{black}{$0.4$}\\
        \hline
        Baseline-$L_\infty$ &\textcolor{black}{ $465$}	 & \textcolor{black}{$56.4$}\\
        \hline
        PGD -$L_\infty$ & \textcolor{black}{$4.5$	} & \textcolor{black}{$0.1$}\\
        \hline
        Nominal & \textcolor{black}{$712.6$}	 & \textcolor{black}{$87.8$}\\
        \hline
   \end{tabular}\caption{Average number of \textcolor{black}{batches} processed per second across the 46 UCI data sets, as well as the corresponding standard deviations. }\label{Table:UCI_time}
\end{table}

To better compare the performances of RUB \textcolor{black}{ and aRUB against} PGD-$L_\infty$, \textcolor{black}{we also analyze the percentage by which each method improves adversarial accuracy across data sets}. In Figure \ref{fig:UCI_RUB_pct_improved_} we show the number of data sets for which RUB improves $L_2$ adversarial accuracy over PGD-$L_\infty$ by a specific percentage. We observe that the improvement becomes \textcolor{black}{larger} as $\rho$ increases, and in particular, for $\rho=10$ we observe that RUB only lowers adversarial accuracy for $3$ data sets, while it shows more than $15\%$ improvement over PGD-$L_\infty$ for $8$ of the data sets. \textcolor{black}{Similarly, Figure \ref{fig:UCI_aRUB_pct_improved_} displays the number of data sets for which aRUB-$L_\infty$ improves adversarial accuracy over PGD-$L_\infty$ by some percentage; we observe that for small perturbations ($\rho = 0.001, \rho =0.1$) aRUB seems to slightly improve over PGD-$L_\infty$, while this last defense has a clear advantage for larger perturbations ($\rho = 1$).}

Lastly, in Table \ref{Table:UCI_time} we display the {average number of \textcolor{black}{batches} processed per second as well as the corresponding standard deviation for each method across the 46 UCI data sets}. As expected, we see that Nominal is the method that processes the largest number of \textcolor{black}{batches} per second, and all defense methods except Baseline-$L_\infty$ are much slower than Nominal. \textcolor{black}{However, RUB, RUB-$L_1$ and aRUB-$L_{\infty}$ all process more batches per second compared to PGD-$L_\infty$.}

\begin{table}[t!]
    \centering
    \begin{tabular}{llllllllllllll}
    \toprule
    \toprule
    \multirow{9}*{\rotatebox{90}{$L_2$ Attacks}}
    &\hfill $\boldsymbol{\rho}=$ &         0.00  &           0.01  &           0.06  &           0.28  &           2.80  &           8.40  &           14.00 &           28.00 & 56.00 \\   
    \cline{2-11}
    &RUB&           \cellcolor{green!20}{89.88} &           89.22 &           \cellcolor{green!20}{88.95} &   \cellcolor{green!20}{\textbf{86.80}} &           60.51 &   \cellcolor{green!20}{\textbf{55.47}} &   \cellcolor{green!20}{\textbf{55.47}} &   \cellcolor{green!20}{\textbf{55.27}} &   \cellcolor{green!20}{\textbf{49.41}} \\
    &aRUB-$L_1$          &  \cellcolor{green!20}{\textbf{90.31}} &  \cellcolor{green!20}{\textbf{90.04}} &  \cellcolor{green!20}{\textbf{89.45}} &           85.98 &           63.98 &           29.10 &           18.24 &           15.04 &            9.96\\
    &aRUB-$L_\infty$ &           89.18 &           89.06 &           88.75 &           85.98 &           65.43 &           24.18 &           18.24 &           16.02 &            9.96\\
     \cdashline{2-11}
    &Baseline-$L_\infty$         &           89.38 &           89.02 &           88.32 &           85.04 &           48.20 &           29.88 &           28.71 &           26.52 &           18.55 \\
    &PGD-$L_\infty$      &           89.61 &           89.38 &           88.01 &           85.23 &   \cellcolor{green!20}{\textbf{68.20}} &           31.60 &           25.90 &           22.85 &           19.10 \\
    &\textcolor{black}{FGSM-$L_\infty$}      &          {89.41} &           {87.81} &           {87.19} &           {84.06} &   \cellcolor{green!20}{{67.81}} &           {35.23} &           {27.11} &          {25.39} &           {19.06} \\
    &\textcolor{black}{COAP-$L_\infty$ }      &           89.14 &           88.48 &           87.11 &           82.58 &           30.51 &           20.94 &           19.69 &           19.69 &           19.69\\
    &Nominal         &           87.70 &           88.40 &           88.01 &           85.35 &           46.60 &           19.92 &           16.84 &           15.39 &           15.39\\
    \midrule
    \midrule
    \multirow{9}*{\rotatebox{90}{$L_1$ Attacks}}
    & \textbf{\hfill $\boldsymbol{\rho}=$} &  {0.00}  &          { 0.01 } &         {  0.06}  &         {  0.28}  &          { 2.80 } &        {   8.40}  &          { 14.00} &        {   28.00 } & 140.00\\
    \cline{2-11}
    &RUB  &  \cellcolor{green!20}{\textbf{89.84}} &  \cellcolor{green!20}{\textbf{89.80}} &  \cellcolor{green!20}{\textbf{89.73}} &  \cellcolor{green!20}{\textbf{89.41}} &  \cellcolor{green!20}{\textbf{87.93}} &           \cellcolor{green!20}{84.18} &           81.72 &           \cellcolor{green!20}{75.70} & \cellcolor{green!20}{\textbf{55.51}} \\
    &aRUB-$L_1$   &           89.02 &           89.02 &           88.98 &           88.71 &           87.34 &  \cellcolor{green!20}{\textbf{84.57}} &  \cellcolor{green!20}{\textbf{82.34}} &  \cellcolor{green!20}{\textbf{75.86}} & 41.21		 \\
    &aRUB-$L_\infty$   &           89.02 &           89.02 &           89.02 &           88.79 &           87.03 &           82.85 &           79.77 &           71.25 & 30.63\\
     \cdashline{2-11}
     &Baseline-$L_\infty$ &           88.83 &           88.20 &           88.20 &           87.85 &           85.74 &           82.11 &           78.09 &           68.28 & 30.59	\\
    &PGD-$L_\infty$         &           89.02 &           89.02 &           88.95 & \cellcolor{green!20}{88.95} &           87.11 &  \cellcolor{green!20}{\textbf{84.57} }&           81.37 &           73.16 & 39.69\\
    &\textcolor{black}{FGSM-$L_\infty$}      &          \cellcolor{green!20}{{89.80}} &           \cellcolor{green!20}{{89.77}} &           \cellcolor{green!20}{\textbf{{89.73}}} &           {89.26} &  {85.00} &          {82.19} &          {78.44} &          {73.36} &          {35.62} \\
    &COAP-$L_\infty$ &           89.22 &           89.22 &           89.22 &           88.67 &           82.58 &           73.01 &           60.47 &           48.59 & 22.19	 \\
    &Nominal         &           86.99 &           86.95 &           86.91 &           86.91 &           85.00 &           82.54 &           78.83 &           68.98 & 21.88 \\
    \midrule
    \midrule
    \multirow{9}*{\rotatebox{90}{$L_\infty$ Attacks}}
    & \textbf{\hfill $\boldsymbol{\rho}=$} &  {0.000}  &          { 0.001 } &         {  0.003}  &         {  0.010}  &          { 0.100 } &        {   0.300}  &          { 0.500} &        {   1.00 } & 5.00\\
    \cline{2-11}
    &RUB &              89.38 & {89.18} &           88.52 &           86.45 &           65.08 &           56.17 &           56.13 &  \cellcolor{green!20}{\textbf{55.78}} & \cellcolor{green!20}{\textbf{37.77}} \\
    &aRUB-$L_1$  &  \cellcolor{green!20}{\textbf{90.00}} &            \cellcolor{green!20}{\textbf{89.73}} &  \cellcolor{green!20}{\textbf{89.06}} &           86.05 &           65.82 &           28.95 &           11.02 &           10.00 & 10.00 \\
    &aRUB-$L_\infty$  &           \cellcolor{green!20}{89.77} &           89.22 &           \cellcolor{green!20}{88.71} &           86.68 &           79.80 &           67.93 &           52.81 &           12.15 & 10.00 \\
    \cdashline{2-11}
    &Baseline-$L_\infty$ &           89.14 &           88.63 &           86.99 &           85.55 &          56.80 &           30.20 &           29.06 &           27.19 & 19.06	 \\
    &PGD-$L_\infty$           &           89.02 &           89.02 &           88.24 &  \cellcolor{green!20}{\textbf{87.77}} &  \cellcolor{green!20}{80.43} &  \cellcolor{green!20}{\textbf{73.09}} &  \cellcolor{green!20}{\textbf{65.74}} &           31.21 & 18.52	\\
    &\textcolor{black}{FGSM-$L_\infty$}      &           \cellcolor{green!20}{{89.61}} &           {89.10} &           {87.97} &           {86.91} &  \cellcolor{green!20}{\textbf{{80.55}}} &           {62.93} &         {53.16} &          {25.74} &           {18.12} \\
    &COAP-$L_\infty$ &           89.18 &           88.24 &           86.76 &           85.66 &           67.15 &           23.87 &           17.19 &           17.19 & 16.21 \\
    &Nominal         &           88.12 &           87.93 &           87.03 &           85.27 &           58.32 &           21.80 &           18.79 &           16.37 & 10.94	 \\
    \bottomrule
    \bottomrule
    \end{tabular}\caption{\textcolor{black}{Adversarial Accuracy ($\%$) for Fashion MNIST. For each choice of $L_p$ norm, defense method and noise radius $\rho$, we report the minimum accuracy achieved with PGD and FGM attacks. Colored cells correspond to accuracies that are within $0.5$ percentage units of the best result in each column, which has bold font.}}\label{Table:FASHION_3L}
    \end{table}

\subsection{Vision Data Sets}
\label{sec:vision-attacks}
We next show experiment results for the three vision data sets. Specifically, we compare for different training methods their performance against adversarial attacks as well as the security guarantees obtained from applying the upper bound from Eq. \eqref{eq:upper-bound}. 
Since the proposed RUB method significantly increases memory requirements for inputs with large dimensions, we compare this method against other defenses using the feed forward architecture with three hidden layers. We do include results using a CNN architecture for all other methods in Appendix \ref{sec:appendixD}, where we also explain how to extend the theory of the RUB method for convolutional layers with ReLU and MaxPool activation functions.

\begin{table}[t]
    \centering
    \begin{tabular}{llllllllllllll}
    \toprule
    \toprule
    \multirow{9}*{\rotatebox{90}{$L_2$ Attacks}}
    &\hfill $\boldsymbol{\rho}=$ &         0.00  &           0.01  &           0.06  &           0.28  &           2.80  &           8.40  &           14.00 &           28.00 & 56.00 \\                  
    \cline{2-11}
    &RUB&           97.93 &           97.85 &           97.07 &           97.07 &           79.30 &  \cellcolor{green!20}{\textbf{66.48}} &  \cellcolor{green!20}{62.89} &  \cellcolor{green!20}{\textbf{55.00}} &  \cellcolor{green!20}{\textbf{38.20}}  \\
    &aRUB-$L_1$          &  \cellcolor{green!20}{\textbf{98.63}} &  \cellcolor{green!20}{\textbf{98.52}} &           97.73 &           97.46 &           80.82 &           57.58 &           55.04 &           46.60 &           29.96 \\
    &aRUB-$L_\infty$     &           97.97 &           97.97 &           \cellcolor{green!20}{97.81} &  \cellcolor{green!20}{97.81} &           91.17 &           51.80 &           49.53 &           43.40 &           31.13 \\
     \cdashline{2-11}
    &Baseline-$L_\infty$      &           97.58 &           97.50 &           97.50 &           97.11 &           72.85 &           64.18 &           61.29 &           52.81 &           34.80 \\
    &PGD-$L_\infty$    &           \cellcolor{green!20}{98.24} &           \cellcolor{green!20}{98.24} &  \cellcolor{green!20}{\textbf{98.24}} &           \cellcolor{green!20}{\textbf{98.09}} &  \cellcolor{green!20}{{92.70}} &           \cellcolor{green!20}{66.05} &           {62.46} &           54.26 &           35.94 \\
    &\textcolor{black}{FGSM-$L_\infty$}      &     {97.81} & {97.81} & {97.50} & {97.50} & \cellcolor{green!20}{\textbf{{93.01}}} & {64.96} & \cellcolor{green!20}{\textbf{{63.20}}} &{53.40} & {36.52}\\
    &\textcolor{black}{COAP-$L_\infty$ }     &           98.01 &           97.97 &           97.54 &           94.88 &           54.73 &           31.29 &           31.29 &           28.98 &           28.48\\
    &Nominal         &           97.62 &           97.46 &           97.34 &           96.41 &           69.61 &           19.57 &           15.90 &           15.35 &           11.25\\
    \midrule
    \midrule
    \multirow{9}*{\rotatebox{90}{$L_1$ Attacks}}
    & \textbf{\hfill $\boldsymbol{\rho}=$} &  {0.00}  &          { 0.01 } &         {  0.06}  &         {  0.28}  &          { 2.80 } &        {   8.40}  &          { 14.00} &        {   28.00 } & 140.00\\
    \cline{2-11}
    &RUB& \cellcolor{green!20}{98.01} &          \cellcolor{green!20}{97.97} &          97.93 &          97.89 &          97.03 &          97.03 &          96.25 &          91.84 & \cellcolor{green!20}{\textbf{66.95}} \\
    &aRUB-$L_1$          &         \cellcolor{green!20}{98.28} &          \cellcolor{green!20}{98.28} &          \cellcolor{green!20}{98.28} &          \cellcolor{green!20}{98.28} &         \cellcolor{green!20}{ 97.89} &          \cellcolor{green!20}{97.46} &          \cellcolor{green!20}{96.52} & \cellcolor{green!20}{94.49} &          58.20 \\
    &aRUB-$L_\infty$     & \cellcolor{green!20}{98.40} &          \cellcolor{green!20}{98.40} &          \cellcolor{green!20}{98.40} &          \cellcolor{green!20}{98.24} & \cellcolor{green!20}{\textbf{98.24}} & \cellcolor{green!20}{\textbf{97.58}} & \cellcolor{green!20}{\textbf{96.91}} &          93.16 &          51.05 \\
     \cdashline{2-11}
    &Baseline-$L_\infty$      &        \cellcolor{green!20}{98.05} &          \cellcolor{green!20}{98.05} &          \cellcolor{green!20}{98.05} &          \cellcolor{green!20}{98.01} &          97.58 &          96.52 &          95.47 &          90.16 &          63.83 \\
    &PGD-$L_\infty$    &        \cellcolor{green!20}{98.20} & \cellcolor{green!20}{98.20} & \cellcolor{green!20}{98.20} & \cellcolor{green!20}{98.20} &         \cellcolor{green!20}{98.20} &          \cellcolor{green!20}{97.50} &          \cellcolor{green!20}{96.68} &          93.87 &          \cellcolor{green!20}{66.84} \\
    &\textcolor{black}{FGSM-$L_\infty$}      &      \cellcolor{green!20}{\textbf{{98.44}}} & \cellcolor{green!20}{\textbf{{98.44}}} &\cellcolor{green!20}{\textbf{{98.44}}} & \cellcolor{green!20}{\textbf{{98.44}}} & {97.19} & \cellcolor{green!20}{{97.19}} & {95.00} & \cellcolor{green!20}{\textbf{{94.53}}} & {64.02}\\
    &\textcolor{black}{COAP-$L_\infty$ }     &          97.81 &          97.81 &          97.81 &          97.81 &          96.99 &          92.50 &          83.12 &          69.30 &          30.47 \\
    &Nominal         &            97.58 &          97.58 &          97.58 &          97.54 &          97.30 &          95.94 &          94.49 &          89.96 &          19.38 \\
    \midrule
    \midrule
    \multirow{9}*{\rotatebox{90}{$L_\infty$ Attacks}}
    & \textbf{\hfill $\boldsymbol{\rho}=$} &  {0.000}  &          { 0.001 } &         {  0.003}  &         {  0.010}  &          { 0.100 } &        {   0.300}  &          { 0.500} &        {   1.00 } & 5.00\\
    \cline{2-11}
    &RUB& \cellcolor{green!20}{98.44} &          \cellcolor{green!20}{98.32} &          97.93 &          97.70 &          86.09 &          68.52 &          67.15 &          61.56 & \cellcolor{green!20}{\textbf{27.15}} \\
    &aRUB-$L_1$       &   \cellcolor{green!20}{98.52} &          \cellcolor{green!20}{98.36} &          \cellcolor{green!20}{98.28} &          \cellcolor{green!20}{98.09} &          86.84 &          61.88 &          59.61 &          54.77 &          21.64 \\
    &aRUB-$L_\infty$     & \cellcolor{green!20}{\textbf{98.71}} & \cellcolor{green!20}{\textbf{98.59}} & \cellcolor{green!20}{98.36} & \cellcolor{green!20}{98.36} &          96.95 &          89.96 &          78.16 &          49.69 &          23.24 \\
     \cdashline{2-11}
    &Baseline-$L_\infty$      &             \cellcolor{green!20}{98.24} &          97.81 &          97.81 &          97.66 &          84.69 &          65.08 &          63.24 &          59.49 &          24.73 \\
    &PGD-$L_\infty$    &         98.20 &         \cellcolor{green!20}{98.20}  &          \cellcolor{green!20}{98.20}  &          \cellcolor{green!20}{98.20}  & {{96.99}} & \cellcolor{green!20}{\textbf{93.16}} & \cellcolor{green!20}{\textbf{86.80}} & \cellcolor{green!20}{\textbf{64.57}} &          25.35 \\
    &\textcolor{black}{FGSM-$L_\infty$}      &   \cellcolor{green!20}{{98.59}} & \cellcolor{green!20}{\textbf{{98.59}}} & \cellcolor{green!20}{\textbf{{98.59}}} & \cellcolor{green!20}{\textbf{{98.59}}} & \cellcolor{green!20}{\textbf{{97.50}}} & {91.17} & {74.10} & {63.12} & {26.36}\\
    &\textcolor{black}{COAP-$L_\infty$ }     &           \cellcolor{green!20}{98.12} &          \cellcolor{green!20}{98.12} &         \cellcolor{green!20}{98.12} &          96.56 &          90.78 &          37.81 &          31.72 &          32.62 &          24.88 \\
    &Nominal         &           97.62 &          97.62 &          97.46 &          96.80 &          81.91 &          23.28 &          16.48 &          14.61 &           9.69 \\
    \bottomrule
    \bottomrule
    \end{tabular}
    \caption{\textcolor{black}{Adversarial Accuracy ($\%$) for MNIST. For each choice of $L_p$ norm, defense method and noise radius $\rho$, we report the minimum accuracy achieved with PGD and FGM attacks. Colored cells correspond to accuracies that are within $0.5$ percentage units of the best result in each column, which has bold font.}}\label{Table:MNIST_3L}
    \end{table}

\paragraph{Performance Against Adversarial Attacks.} 
\textcolor{black}{We evaluate adversarial accuracy for all the aforementioned methods (Nominal, RUB, aRUB-$L_1$, aRUB-$L_\infty$, Baseline-$L_\infty$, PGD-$L_\infty$), and we add two other state-of-the-art defenses to make our evaluation even more comprehensive. These two methods were proposed in \cite{wong2018provable} and \cite{wong2020fast}; which we call COAP-$L_{\infty}$ (Convex Outer Adversarial Polytope
with $L_\infty$ norm) and FGSM-$L_\infty$ (Fast Gradient Sign Method) respectively, and are representative of state-of-the-art defenses in terms of robustness and training computational cost, respectively.}

\begin{table}[t]
    \centering
    \begin{tabular}{llllllllllllll}
    \toprule
    \toprule
    \multirow{9}*{\rotatebox{90}{$L_2$ Attacks}}
    &\hfill $\boldsymbol{\rho}=$ &        0.00 &           0.01 &           0.03 &           0.06 &           0.08 &           0.11 &           0.17 &           0.55 &           5.54 \\                  
    \cline{2-11}
    &RUB&49.69 &          48.67 &          46.33 &          48.59 &          48.36 &          48.12 &          47.58 &          44.14 &          17.73 \\
    &aRUB-$L_1$  &        52.42 &          51.41 &          51.37 & \cellcolor{green!20}{\textbf{51.13}} & \cellcolor{green!20}{\textbf{51.02}} & \cellcolor{green!20}{\textbf{50.74}} & \cellcolor{green!20}{\textbf{50.20}} & \cellcolor{green!20}{\textbf{46.56}} & {{21.88}} \\
    &aRUB-$L_\infty$     &         \cellcolor{green!20}{53.83} &          \cellcolor{green!20}{52.89} &          \cellcolor{green!20}{51.84} &          47.77 &          47.11 &          46.48 &          44.45 &          41.37 &          20.66 \\
     \cdashline{2-11}
    &Baseline-$L_\infty$      &          53.12 &          52.07 &          50.66 &          45.82 &          42.50 &          42.97 &          42.42 &          35.86 &           9.30 \\
    &PGD-$L_\infty$    &\cellcolor{green!20}{\textbf{53.91}} & \cellcolor{green!20}{\textbf{53.32}} & \cellcolor{green!20}{\textbf{52.27}} &          48.83 &          48.63 &          48.48 &          48.09 &          45.12 &          20.16 \\
    &\textcolor{black}{FGSM-$L_\infty$}      &  \cellcolor{green!20}{{53.52}} & {52.07} & {50.82} & {49.34} & {47.97} & {47.30} & {47.30} & {44.53} & \cellcolor{green!20}{\textbf{{24.69}}}\\
    &\textcolor{black}{COAP-$L_\infty$ }     &         51.45 &          50.86 &          49.45 &          48.59 &          47.07 &          45.35 &          41.76 &          35.86 &          11.76 \\
    &Nominal         &          46.02 &          45.78 &          44.84 &          44.10 &          42.89 &          42.15 &          40.43 &          37.11 &          10.59 \\
    \midrule
    \midrule
    \multirow{9}*{\rotatebox{90}{$L_1$ Attacks}}
    & \textbf{\hfill $\boldsymbol{\rho}=$} &   0.00 &0.01 & 0.03 & 0.06&  0.11&  0.55 &  5.54 & 16.63& 27.71\\
    \cline{2-11}
    &RUB&50.86 &          50.86 &          50.86 &          50.70 &          50.55 &          48.98 &          47.03 &          44.84 &          42.58 \\
    &aRUB-$L_1$  &          51.56 &          51.56 &          51.56 &          51.56 &          51.56 &          51.37 & \cellcolor{green!20}{\textbf{50.35}} & \cellcolor{green!20}{\textbf{47.85}} & \cellcolor{green!20}{\textbf{45.98}} \\
    &aRUB-$L_\infty$     & \cellcolor{green!20}{53.55} & \cellcolor{green!20}{53.48} & \cellcolor{green!20}{53.44} & \cellcolor{green!20}{53.40} & \cellcolor{green!20}{53.24} & {52.30} &          44.02 &          40.59 &          40.62 \\
     \cdashline{2-11}
    &Baseline-$L_\infty$      &         \cellcolor{green!20}{53.32} &          \cellcolor{green!20}{53.28} &          \cellcolor{green!20}{53.28} &          \cellcolor{green!20}{53.24} &          52.03 &          50.94 &          41.72 &          38.79 &          34.53 \\
    &PGD-$L_\infty$    &         52.97 &          52.97 &          52.97 &          52.97 &          52.93 & {52.30} &          47.89 &          45.62 &          43.52 \\
    &\textcolor{black}{FGSM-$L_\infty$}      & \cellcolor{green!20}{\textbf{{53.71}}} &  \cellcolor{green!20}{\textbf{{53.71}}} & \cellcolor{green!20}{\textbf{{53.67}}} & \cellcolor{green!20}{\textbf{{53.59}}} & \cellcolor{green!20}{\textbf{{53.55}}} & \cellcolor{green!20}{\textbf{{53.20}}} & {47.62} & {45.39} & {43.36}\\
    &\textcolor{black}{COAP-$L_\infty$ }     &          50.86 &          50.86 &          50.78 &          50.74 &          50.66 &          49.84 &          43.59 &          29.96 &          29.96 \\
    &Nominal         &         46.02 &          46.02 &          46.02 &          45.98 &          45.98 &          45.66 &          43.87 &          39.69 &          36.25 \\
    \midrule
    \midrule
    \multirow{9}*{\rotatebox{90}{$L_\infty$ Attacks}}
    & \textbf{\hfill $\boldsymbol{\rho}=$} &  {0.000}  &          { 0.001 } &         {  0.003}  &         {  0.010}  &          { 0.100 } &        {   0.300}  &          { 0.500} &        {   1.00 } & 5.00\\
    \cline{2-11}
    &RUB&50.86 &          46.02 &          47.58 &          45.08 &          21.64 &           9.38 &           9.30 &           9.30 &           8.12 \\
    &aRUB-$L_1$     &    53.28 & \cellcolor{green!20}{\textbf{50.94}} & \cellcolor{green!20}{\textbf{50.23}} & \cellcolor{green!20}{\textbf{47.07}} &          26.91 &          10.66 &          10.12 &          10.12 &          10.62 \\
    &aRUB-$L_\infty$      &          53.52 &          46.76 &          45.23 &          42.54 & {{29.06}} &          10.43 &          10.78 &          10.78 &           9.69 \\
     \cdashline{2-11}
    &Baseline-$L_\infty$      &             51.48 &          47.54 &          40.90 &          39.22 &           9.34 &           9.34 &           9.34 &           9.34 &          10.00 \\
    &PGD-$L_\infty$    & \cellcolor{green!20}{\textbf{54.10}} &          49.34 &          48.05 &          46.33 &          25.31 & \cellcolor{green!20}{\textbf{15.90}} & \cellcolor{green!20}{\textbf{12.93}} & \cellcolor{green!20}{\textbf{12.89}} & \cellcolor{green!20}{\textbf{13.16}} \\
    &\textcolor{black}{FGSM-$L_\infty$}      & \cellcolor{green!20}{{54.06}} & {50.39} & {43.12} & {40.31} & \cellcolor{green!20}{\textbf{{30.63}}} & {13.55} & {10.94} & {10.94} & {11.88} \\
    &\textcolor{black}{COAP-$L_\infty$ }     &         51.76 &          48.52 &          44.92 &          41.76 &          11.56 &          11.56 &          11.56 &          11.56 &          10.00 \\
    &Nominal         &           46.72 &          45.31 &          43.44 &          39.18 &           9.92 &           9.92 &           9.92 &           9.92 &          10.00 \\
    \bottomrule
    \bottomrule
    \end{tabular}
    
    \caption{\textcolor{black}{Adversarial Accuracy ($\%$) for CIFAR. For each choice of $L_p$ norm, defense method and noise radius $\rho$, we report the minimum accuracy achieved with PGD and FGM attacks. Colored cells correspond to accuracies that are within $0.5$ percentage units of the best result in each column, which has bold font.}}\label{Table:CIFAR_3L}
    \end{table}

\textcolor{black}{For each $L_p$ norm we report the minimum adversarial accuracy achieved using both Projected Gradient Descent attacks and Fast Gradient Method attacks. We observe that for the Fashion MNIST data set (Table \ref{Table:FASHION_3L}), aRUB-$L_1$ and RUB achieve the best accuracies for small values of $\rho$; we then observe a small range in which PGD-$L_\infty$ does best and lastly for larger values of $\rho$ we see that RUB takes the lead, which is similar to the average results observed for the UCI data sets. For the MNIST data set, all defenses achieve similar results when the input perturbations are small, with RUB showing again better performance with larger radius $\rho$. In the CIFAR data set we observe a different behavior; PGD-$L_\infty$, FGSM-$L_\infty$ and aRUB methods achieve better accuracies at various radius regimes}, although we observe that all methods perform very poorly overall as this is a notoriously more difficult data set. Lastly, we again observe that in all three data sets \textcolor{black}{Baseline-$L_\infty$ outperforms aRUB-$L_\infty$ when $\rho$ is large, and} the robust training methods achieve a higher natural accuracy than the accuracy resulting from standard nominal training.

\textcolor{black}{In Table \ref{Table:vision_time} we present the average number of batches processed per second as well as the corresponding standard deviation for each method across the 3 vision data sets. We observe that FGSM-$L_\infty$ is the fastest defense after Baseline-$L_\infty$, followed by the aRUB methods. We highlight that contrary to the results obtained with the UCI data sets, the RUB method is slower than the aRUB defenses. This is attributed to the increased memory requirements for RUB with high dimensional inputs, which prevented full parallelization during training.}

\begin{table}[t!]
    \centering
    \begin{tabu}{|c|c|c|}
    \hline
         \rowfont{\color{black}} 
         & \textbf{Avg no. \textcolor{black}{batches} per second} & \textbf{Standard Deviation}  \\
         \hline
         \hline
         \rowfont{\color{black}}
         RUB& \textcolor{black}{$4.8$	}& \textcolor{black}{$0.2$}\\
         \hline
         \textcolor{black}{aRUB -$L_1$} & \textcolor{black}{$53.1$}	 & \textcolor{black}{$2.4$}\\
         \hline
         \rowfont{\color{black}}
         aRUB -$L_\infty$ & \textcolor{black}{$56.4$}	 & \textcolor{black}{$0.2$}\\
         \hline
         \rowfont{\color{black}}
         Baseline -$L_\infty$ & \textcolor{black}{$343.5$}	 & \textcolor{black}{$33.4$}\\
         \hline
         \rowfont{\color{black}}
         PGD -$L_\infty$ & \textcolor{black}{$3.7$}	 & \textcolor{black}{$0.2$}\\
         \hline
         \rowfont{\color{black}}
         FGSM -$L_\infty$ & \textcolor{black}{$86$}	 & \textcolor{black}{$4.1$}\\
         \hline
         \rowfont{\color{black}}
         COAP -$L_\infty$ & \textcolor{black}{$12.2$}	 & \textcolor{black}{$0.3$}\\
         \hline
         \rowfont{\color{black}}
         Nominal & \textcolor{black}{$473$}	 & \textcolor{black}{$45.2$}\\
         \hline
    \end{tabu}\caption{\textcolor{black}{Average number of \textcolor{black}{batches} processed per second across the 3 vision data sets, as well as the corresponding standard deviations.}}\label{Table:vision_time}
\end{table}

\paragraph{Security Guarantees against $L_1$ Norm Bounded Attacks.}
Finally, we use the upper bound of the \textcolor{black}{adversarial loss} derived in section \textcolor{black}{\ref{sec:RUB}} to find lower bounds for the adversarial accuracy with respect to attacks bounded in the $L_1$ norm by $\rho$. \textcolor{black}{Specifically, the RUB-$L_1$ defense finds an upper bound for $\sup_{{\boldsymbol{\delta}}: \|{\boldsymbol{\delta}}\|_1 \leq \rho} \: \bz_k^L(\theta, \bx + {\boldsymbol{\delta}}) - \bz_y^L(\theta, \bx + {\boldsymbol{\delta}})$, and therefore when this upper bound is nonpositive for all $k\in[K]$ we know that the network $\bz^L(\theta, \cdot)$ correctly classifies all adversarial attacks $\bx + {\boldsymbol{\delta}}$ for which $\|{\boldsymbol{\delta}}\|_1 \leq \rho$. In other words, the nonpositivity of this upper bound gives the network a security guarantee against the attacks considered. The percentage of images in the testing set for which this guarantee exists is therefore a lower bound of the adversarial accuracy achieved by network.} In \textcolor{black}{Tables \ref{Table:FASHION_BOUND}, \ref{Table:MNIST_BOUND} and \ref{Table:CIFAR_BOUND}} we report the lower bounds for each method by selecting the hyperparameters that lead to the best lower bound in the validation set. \textcolor{black}{In particular, notice that for a given choice of radius $\rho$ and defense method, the selected network might not be the same as the one selected in the previous results for adversarial accuracy.}

We observe that, as expected, for all three data sets, CIFAR, Fashion MNIST and MNIST, the best security guarantees are the ones for the RUB method. While these results are only lower bounds for the adversarial accuracy and we cannot claim a better accuracy for RUB than for the rest of the methods, the lower bound for the RUB method shows that this method indeed performs very well against $L_1$ attacks bounded by large values of $\rho$. For instance, in Table \ref{Table:FASHION_BOUND} we can see that for the Fashion MNIST data set the RUB method guarantees $86.02\%$  adversarial accuracy (less than $5\%$ decrease from the best natural accuracy) against attacks with $L_1$ norm smaller or equal to $\rho = 2.8$. Similarly, in Table \ref{Table:MNIST_BOUND} we observe that for this same attacks RUB has at least $97.11\%$ adversarial accuracy (less than $1\%$ decrease over natural accuracy) for the MNIST data set. And finally, for the CIFAR data set, we can see in Table \ref{Table:CIFAR_BOUND} that RUB achieves $45.66\%$ adversarial accuracy (less than $5\%$ decrease over natural accuracy) against attacks whose $L_1$ norm is upper bounded by $\rho = 5.54$.

\section{Conclusions}

\begin{table}[t]
    \centering
    \begin{tabular}{llllllllll}
    \toprule
    \toprule
    \hfill $\boldsymbol{\rho}=$ &         0.00  &           0.01  &           0.06  &           0.28  &           2.80  &           8.40  &           14.00 &           28.00 \\
    \midrule
    RUB             &       \cellcolor{green!20}{90.27} &                   \cellcolor{green!20}{90.16} &                   \cellcolor{green!20}{90.16} &                   \cellcolor{green!20}{89.49} &          \cellcolor{green!20}{\textbf{86.02}} &          \cellcolor{green!20}{\textbf{80.55}} &          \cellcolor{green!20}{\textbf{76.21}} &          \cellcolor{green!20}{\textbf{66.95}} \\
    aRUB-$L_1$         &       \cellcolor{green!20}{\textbf{90.51}} &                   \cellcolor{green!20}{\textbf{90.51}} &                   \cellcolor{green!20}{\textbf{90.39}} &          \cellcolor{green!20}{\textbf{89.73}} &                   85.59 &                   73.98 &                   69.61 &                   47.54 \\
    aRUB-$L_{\infty}$       &   89.96 &                   89.92 &                   89.84 &                   88.75 &                   76.60 &                   21.37 &                   10.16 &                    9.84 \\
    \cdashline{1-9}
    Baseline-$L_\infty$            &           89.49 &           89.38 &           89.38 &           87.50 &           77.42 &           46.41 &           20.04 &           15.23 \\
    PGD-$L_{\infty}$             &        89.92 &          {89.92} &          \cellcolor{green!20}{89.92} &                   87.85 &                   78.75 &                   40.35 &                   19.22 &                   15.55\\
    Nominal         &        88.59 &                   88.55 &                   88.48 &                   87.93 &                   77.50 &                   40.98 &                   15.04 &                    9.88\\
    \bottomrule
    \bottomrule
    \end{tabular}
    \caption{Fashion MNIST: Lower bound of adversarial accuracy with uncertainty bounded in $L_1$ norm by $\rho$.}\label{Table:FASHION_BOUND}
    \end{table}
    \begin{table}[t]
    \centering
    \begin{tabular}{llllllllll}
    \toprule
    \toprule
    \hfill $\boldsymbol{\rho}=$ &            0.00  &           0.01  &           0.06  &           0.28  &           2.80  &           8.40  &           14.00 &           28.00\\
    \midrule
    RUB             &         98.01 &                   97.93 &                   97.93 &                   97.46 &          \cellcolor{green!20}{\textbf{97.11}} &          \cellcolor{green!20}{\textbf{93.67}} &          \cellcolor{green!20}{\textbf{89.77}} &          \cellcolor{green!20}{\textbf{74.96}} \\
    aRUB-$L_1$         &  \cellcolor{green!20}{\textbf{98.52}} &          \cellcolor{green!20}{\textbf{98.48}} &          \cellcolor{green!20}{\textbf{98.48}} &                   \cellcolor{green!20}{98.20} &                   96.48 &                   89.96 &                   80.86 &                   26.05 \\
    aRUB-$L_{\infty}$       &         \cellcolor{green!20}{98.40} &                   \cellcolor{green!20}{98.40} &                   \cellcolor{green!20}{98.28} &                   97.54 &                   94.69 &                   68.52 &                   16.56 &                   12.19 \\
    \cdashline{1-9}
    Baseline-$L_\infty$            &           \cellcolor{green!20}{98.05} &           \cellcolor{green!20}{98.05} &           \cellcolor{green!20}{98.05} &           97.81 &           95.23 &           57.23 &           20.74 &           10.35 \\
    PGD-$L_{\infty}$             &        \cellcolor{green!20}{98.48} &          \cellcolor{green!20}{\textbf{98.48}} &                   \cellcolor{green!20}{98.44} &         \cellcolor{green!20}{\textbf{98.36}} &                   95.55 &                   63.59 &                   15.43 &                   11.60\\
    Nominal         &      97.73 &                   97.73 &                   97.58 &                   97.54 &                   93.87 &                   44.80 &                   10.31 &                   10.23 \\
    \bottomrule
    \bottomrule
    \end{tabular}
    \caption{MNIST: Lower bound of adversarial accuracy with uncertainty bounded in $L_1$ norm by $\rho$.}\label{Table:MNIST_BOUND}
    \end{table}
    \begin{table}[h!]
    \centering
    \begin{tabular}{llllllllll}
    \toprule
    \toprule
    \hfill $\boldsymbol{\rho}=$ &        0.00  &           0.01  &           0.06  &           0.55  &           5.54  &           16.63 &           27.71 &           55.43  \\
    \midrule
    RUB             &         50.62 &           50.62 &           49.96 &           48.91 &  \cellcolor{green!20}{\textbf{45.66}} &  \cellcolor{green!20}{\textbf{37.81}} &  \cellcolor{green!20}{\textbf{32.85}} &  \cellcolor{green!20}{\textbf{23.28}}\\
    aRUB-$L_1$         &      53.40 &           53.16 &           51.99 &  \cellcolor{green!20}{\textbf{51.33}} &           43.36 &           33.83 &           27.19 &           15.16   \\
    aRUB-$L_{\infty}$       &    53.67 &           53.52 &           53.20 &           47.07 &           37.30 &           14.26 &            9.65 &            9.65 \\
    \cdashline{1-9}
    Baseline-$L_\infty$            &           53.32 &           53.24 &           52.66 &           46.09 &           36.99 &           13.71 &            9.61 &            9.61 \\
    PGD-$L_{\infty}$             &  \cellcolor{green!20}{\textbf{54.88}} &  \cellcolor{green!20}{\textbf{54.80}} &  \cellcolor{green!20}{\textbf{53.98}} &           49.26 &           37.42 &           27.30 &           14.02 &           12.46  \\
    Nominal        &       46.88 &           46.88 &           46.76 &           44.69 &           37.19 &           14.88 &            9.02 &            8.95\\
    \bottomrule
    \bottomrule
    \end{tabular}
    \caption{CIFAR: Lower bound of adversarial accuracy with uncertainty bounded in $L_1$ norm by $\rho$.}\label{Table:CIFAR_BOUND}
    \end{table}

We developed two new methods for \textcolor{black}{adversarial} training of neural networks, both of which provide \textcolor{black}{ an upper bound of the adversarial loss by considering the whole network at once instead of \textcolor{black}{applying convex relaxations} and propagating bounds for each layer separately as in previous works.} First, we found an \textcolor{black}{empirical} upper bound by incorporating the first order approximation of the network's output layer. This method does not provide security guarantees against adversarial attacks but it performs very well across a variety of data sets when the uncertainty set is small \textcolor{black}{and it stands out for its simplicity}. Second, by extending state-of-the-art tools from \textcolor{black}{RO} to non-convex and non-concave functions, we were able to construct a \textcolor{black}{provable} upper bound \textcolor{black}{of the adversarial loss. Experimental results show that this method has a performance edge for larger uncertainty sets, and importantly, this method can certify the non-existance of adversarial attacks bounded in $L_1$ norm. The two proposed upper bounds are in closed-form and can be effectively minimized with backpropagation.} 
Lastly, we provide evidence that adding robustness can improve the natural accuracy of neural networks for classification problems with tabular or vision data.

\textcolor{black}{For future work we are interested in extending the RUB approach for other types of norms as well as  understanding how the tightness of the proposed upper bounds change across layers in order to facilitate further improvements. Adversarial robustness is crucial in the development of more secure machine learning systems, and we hope that our work will inspire further research in this important area. }

\paragraph{Code Availability Statement.}
All the code to reproduce the results can be found here: \url{https://github.com/kimvc7/Robustness}. An illustrative example on how to use the code to train a network is provided here: \url{https://colab.research.google.com/github/kimvc7/Robustness/blob/main/demo.ipynb}

\paragraph{Acknowledgements.}
\textcolor{black}{We would like to thank the editor and the reviewers of the paper for their comments, which helped us to improve the paper significantly.}

\textcolor{black}{Xavier Boix has been supported by the Center for Brains, Minds and Machines (funded by NSF STC award CCF-
1231216), the R01EY020517 grant from the National Eye Institute (NIH). Kimberly Villalobos Carballo and Xavier Boix have been supported by Fujitsu Laboratories
Ltd. (Contract No. 40008819).}

\bibliography{robust_paper}

\newpage

\appendix


\section{Proofs of Lemmas}\label{sec:appendixA}

In this Appendix, we prove \textcolor{black}{L}emmas \ref{lemma:gen_upper_bound}, \ref{convex-bound}, \ref{concave-bound} and \ref{conjugate-computation}.
\textcolor{black}{\subsection{Proof of Lemma \ref{lemma:gen_upper_bound}}\label{A1}}
\textcolor{black}{\noindent\textbf{Proof:} By Assumption \ref{assumption:trans-inv} with $c = \bz_y^L(\theta, \bx + \boldsymbol{\delta})$ we know
 \begin{align*}
    &\min_\theta \:\max_{\boldsymbol{\delta}\in \mathcal{U}} \: \mathcal{L}(y, \bz^L(\theta,\bx + \boldsymbol{\delta})) = \min_\theta \:\max_{\boldsymbol{\delta}\in \mathcal{U}} \: \mathcal{L}\left(y, \bz^L(\theta,\bx + \boldsymbol{\delta}) - \bz_y^L(\theta, \bx + \boldsymbol{\delta})\boldsymbol{e}\right).
     \end{align*}
\textcolor{black}{Define $\bar{\bz}(\boldsymbol{\delta})\coloneqq \bz^L(\theta,\bx + \boldsymbol{\delta}) - \bz_y^L(\theta, \bx + \boldsymbol{\delta})\boldsymbol{e}$ and $\bar{\boldsymbol{z}}^\prime = (\max_{\boldsymbol{\delta}\in \mathcal{U}}\bar{\bz}_1(\boldsymbol{\delta})$, 
$\hdots, \max_{\boldsymbol{\delta}\in \mathcal{U}}\bar{\bz}_K(\boldsymbol{\delta}))$. Notice that the $y^{th}$ coordinates of $\bar{\bz}(\boldsymbol{\delta})$ and $\bar{\bz}^\prime$ are both zero, and therefore for all $k\in [K]$ we have
\[ \bar{\bz}_k({\boldsymbol{\delta}}) - \bar{\bz}_y({\boldsymbol{\delta}}) =\bar{\bz}_k({\boldsymbol{\delta}})  \leq  \max_{\boldsymbol{\delta}\in \mathcal{U}} \bar{\bz}_k(\boldsymbol{\delta}) = = \bar{\bz}^\prime_k = \bar{\bz}^\prime_k - \bar{\bz}^\prime_y.\]
Therefore, we can apply Assumption \ref{assumption:monotone} with $\boldsymbol{z} = \bar{\bz}(\boldsymbol{\delta})$ and $\boldsymbol{z}^\prime = \bar{\bz}^\prime$ to obtain}
    \begin{align*}
     &\min_\theta \:\max_{\boldsymbol{\delta}\in \mathcal{U}} \: \mathcal{L}\left(y, \bz^L(\theta,\bx + \boldsymbol{\delta}) - \bz_y^L(\theta, \bx + \boldsymbol{\delta})\boldsymbol{e}\right)\\
    \leq &\min_\theta \: \mathcal{L}\left(y , \bigg(\max_{\boldsymbol{\delta}\in \mathcal{U}} \bz_1^L(\theta, \bx + \boldsymbol{\delta})- \bz_y^L(\theta, \bx + \boldsymbol{\delta}),\dots, \max_{\boldsymbol{\delta}\in \mathcal{U}} \bz_K^L(\theta, \bx + \boldsymbol{\delta}) - \bz_y^L(\theta, \bx + \boldsymbol{\delta})\bigg) \right).
\end{align*}
We then conclude
 \begin{align*}
    &\min_\theta \:\max_{\boldsymbol{\delta}\in \mathcal{U}} \: \mathcal{L}(y, \bz^L(\theta,\bx + \boldsymbol{\delta})) \\
    \leq &\min_\theta \: \mathcal{L}\left(y , \bigg(\max_{\boldsymbol{\delta}\in \mathcal{U}} \bz_1^L(\theta, \bx + \boldsymbol{\delta})- \bz_y^L(\theta, \bx + \boldsymbol{\delta}),\dots, \max_{\boldsymbol{\delta}\in \mathcal{U}} \bz_K^L(\theta, \bx + \boldsymbol{\delta}) - \bz_y^L(\theta, \bx + \boldsymbol{\delta})\bigg) \right).
     \end{align*}}

\subsection{ Proof of Lemma \ref{convex-bound}}\label{A2}
\begin{proof}
Since $f$ is convex and closed, we have $f = (f^*)^*$ \citep{rockafellar-1970a}, and applying the definition of the convex conjugate function we obtain
\[
f(\bz(\boldsymbol{\delta})) = (f^\star)^\star(\bz(\boldsymbol{\delta})) = \sup_{\bu\in \text{dom}(f^*)} \bz(\boldsymbol{\delta})^\top \bu - f^\star(\bu),
\]
which implies 
\begin{align*}
\sup_{\boldsymbol{\delta} \in \mathcal{U}}\: f(\boldsymbol{z}(\boldsymbol{\delta})) + g(\boldsymbol{z}(\boldsymbol{\delta})) &=\sup_{\boldsymbol{\delta} \in \mathcal{U}} \enspace \sup_{\bu\in \text{dom}(f^\star)}\:  \boldsymbol{z}(\boldsymbol{\delta})^T\bu - f^\star(\bu) + g(\boldsymbol{z}(\boldsymbol{\delta})) \\ 
&= \sup_{\bu\in \text{dom}(f^\star)} \enspace \sup_{\boldsymbol{\delta} \in \mathcal{U}}\:  \boldsymbol{z}(\boldsymbol{\delta})^T\bu - f^\star(\bu) + g(\boldsymbol{z}(\boldsymbol{\delta})),
\end{align*}
as desired.\hfill 
\end{proof}

\subsection{ Proof of Lemma \ref{concave-bound}}\label{A3}
Let $\mathcal{Z} = \{\bz(\boldsymbol{\delta}) : \boldsymbol{\delta}\in \mathcal{U}\}$. Defining the indicator function \[\gamma(\bz | \mathcal{Z}) = \begin{cases} &0 \quad \text{if  }\bz \in \mathcal{Z},\\&\infty \quad \text{otherwise,} \end{cases}\] and applying the Fenchel duality theorem \citep{rockafellar-1970a}, we obtain:
\begin{align}
    &\sup_{\boldsymbol{\delta} \in \mathcal{U}} g(\bz(\boldsymbol{\delta})) =\sup_{\bz \in \mathcal{Z}} g(\bz) = \sup_{\bz\in \text{dom}(g)\cap \text{dom}(\gamma) } g(\bz) - \gamma(\bz | \mathcal{Z})  = \inf_{\bv\in \text{dom}(g_\star)} \gamma^\star(\bv| \mathcal{Z}) - g_\star(\bv).
\end{align}
Finally, since $\gamma^\star(\bv |\mathcal{Z}) = \sup_{\bz\in \mathcal{Z}} \bz^\top\bv$, we conclude
\[
\sup_{\boldsymbol{\delta} \in \mathcal{U}} g(\bz(\boldsymbol{\delta})) =\inf_{\bv\in \text{dom}(g_\star)} \sup_{\bz \in \mathcal{Z}} \bz^\top\bv - g_\star(\bv) = \inf_{\bv\in \text{dom}(g_\star)} \sup_{\boldsymbol{\delta} \in \mathcal{U}} \bz(\boldsymbol{\delta})^\top\bv - g_\star(\bv).
\]

\subsection{Proof of Lemma \ref{conjugate-computation}}\label{A4}
We first prove part a). By definition, we have
\[
f^\star(\bz) = \sup_{\bx} \bz^\top \bx  - \bp^\top[\bx]^+.
\]
Notice that if the $i^{th}$ component of $\bz$ is negative for any $i$, then $f^\star(\bz) = \infty$ because $\bx$ can be the vector with an arbitrarily large negative value in the $i^{th}$ coordinate and $0$ everywhere else. Similarly, if the $i^{th}$ component of $\bz$ is larger than the $i^{th}$ coordinate of $\bp$ for any $i$, then again $f^\star(\bz) = \infty$ because $\bx$ can be the vector with an arbitrarily large positive value in the $i^{th}$ coordinate and $0$ everywhere else. Moreover, if $\boldsymbol{0}\leq \bz\leq \bp$, then  
\[\sup_{\bx} \bz^\top \bx  - \bp^\top[\bx]^+ \leq \sup_{\bx} \bz^\top \bx  - \bz^\top[\bx]^+ = \sup_{\bx} \bz^\top (\bx  - [\bx]^+)\leq 0.
\]
Since $\bx = \boldsymbol{0}$ achieves an objective value of $0$, we conclude that $\boldsymbol{0}\leq \bz\leq \bp$ implies $f^\star(\bz) = 0$ as desired.\\

\noindent Next, we proceed to prove part b). By definition of the concave conjugate we have
\[
g_\star(\bz) = \inf_{\bx} \bz^\top \bx - (\bx^\top \bu - \bq^\top[\bx]^+) = \inf_{\bx} (\bz-\bu)^\top \bx + \bq^\top[\bx]^+. 
\]
If the $i^{th}$ component of $\bz$ is larger than the $i^{th}$ component of $\bu$ for any $i$, then $g_\star(\bz) = \infty$ because $\bx$ can be the vector with an arbitrarily large negative value in the $i^{th}$ coordinate and $0$ everywhere else. Similarly, if the $i^{th}$ component of $\bz$ is smaller than the $i^{th}$ coordinate of $(\bu-\bq)$ for any $i$, then again $g_\star(\bz) = \infty$ because $\bx$ can be the vector with an arbitrarily large positive value in the $i^{th}$ coordinate and $0$ everywhere else. In addition, if $\boldsymbol{\bu-\bq}\leq \bz\leq \bu$, then  
\[\inf_{\bx} (\bz-\bu)^\top \bx + \bq^\top[\bx]^+ \geq \inf_{\bx} (\bz-\bu)^\top \bx  + (\bu - \bz)^\top[\bx]^+ = \inf_{\bx} (\bu - \bz)^\top ( [\bx]^+ - \bx)\geq 0.
\]
Since $\bx = \boldsymbol{0}$ achieves an objective value of $0$, we conclude that $\boldsymbol{\bu-\bq}\leq \bz\leq \bu$ implies $g_\star(\bz) = 0$ as desired.\\

\newpage
\section{Generalized Results}\label{sec:appendixB}
We now state and proof the generalization of Theorem \ref{general-decomposition1}, Corollary \ref{cor:p-norm1} and Theorem \ref{last} for the case in which the neural network has more than 2 layers. 

 \begin{theorem}[Generalization of Theorem \ref{general-decomposition1}]
 \hfill\\

For all $2\leq l \leq L$, it holds
 \begin{align}
     \sup_{{\boldsymbol{\delta}} \in \mathcal{U}} \:& \bc_k^\top\bz^\ell(\theta, \bx + {\boldsymbol{\delta}}) =  \\
     \begin{split}
     & \sup_{{\bs}_L} \inf_{{\bt}_L}\dots \sup_{{\bs}_l} \inf_{{\bt}_l}\sup_{\boldsymbol{\delta} \in \mathcal{U}}  \: (\bp_l - \bq_l)^\top\bz^{l-1}(\theta, \bx + {\boldsymbol{\delta}} )  + \sum_{\ell = l}^{L-1}(\bp_{\ell+1} - \bq_{\ell+1})^\top\bb^\ell +  \bc_k^\top\bb^L\\
     & \hspace{3.5cm}\text{s.t.} \quad \bp_L = [(\bbw^L)^\top\bc_k]^+\odot \bs_L\\
     &  \hspace{4.4cm} \bq_L = [-(\bbw^L)^\top\bc_k]^+\odot \bt_L\\
     &  \hspace{4.4cm} \bp_{\ell}= (([\bbw^\ell]^+)^\top \bp_{\ell+1} + ([-\bbw^\ell]^+)^\top \bq_{\ell+1})\odot \bs_\ell \quad \forall \: \ell = l,\dots, L-1\\
     &  \hspace{4.4cm} \bq_{\ell} = (([-\bbw^\ell]^+)^\top\bp_{\ell+1} + ( [\bbw^\ell]^+)^\top \bq_{\ell+1})\odot \bt_\ell \quad \forall \: \ell = l,\dots, L-1\\
    & \hspace{4.4cm} 0\leq \bs_\ell, \bt_\ell \leq 1 \quad \forall \: \ell = l,\dots, L.
     \label{induction-problem}
     \end{split}
 \end{align} \label{general-decomposition}
 \end{theorem}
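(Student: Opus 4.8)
The plan is a downward induction on $l$ that peels off one layer of the network at a time, each step being a copy of the argument in the proof of Theorem~\ref{general-decomposition1}. For the base case $l=L$ the sum $\sum_{\ell=l}^{L-1}$ in Eq.~\eqref{induction-problem} is empty and the claim reduces to $\sup_{\boldsymbol{\delta}\in\mathcal{U}}\bc_k^\top\bz^L = \sup_{0\le\bs_L\le1}\inf_{0\le\bt_L\le1}\sup_{\boldsymbol{\delta}\in\mathcal{U}}(\bp_L-\bq_L)^\top\bz^{L-1}+\bc_k^\top\bb^L$ with $\bp_L,\bq_L$ exactly as in the statement; since the proof of Theorem~\ref{general-decomposition1} uses only the relation $\bz^2=\bbw^2[\bz^1]^++\bb^2$ and never that $\bz^1$ is affine in $\boldsymbol{\delta}$, it applies verbatim after relabeling $2\mapsto L$, $1\mapsto L-1$. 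So only the inductive step $l+1\Rightarrow l$ (for $2\le l\le L-1$) needs work, and the induction stops at $l=2$ because $\bz^{l-1}$ requires $l-1\ge1$.

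For the inductive step, assume Eq.~\eqref{induction-problem} holds at level $l+1$. The innermost object under the nested suprema/infima there is $\sup_{\boldsymbol{\delta}\in\mathcal{U}}(\bp_{l+1}-\bq_{l+1})^\top\bz^l(\theta,\bx+\boldsymbol{\delta})$, where $\bp_{l+1},\bq_{l+1}\ge\boldsymbol{0}$ are fixed by the outer variables. First I would substitute $\bz^l=\bbw^l[\bz^{l-1}]^++\bb^l$, splitting this into the $\boldsymbol{\delta}$-independent term $(\bp_{l+1}-\bq_{l+1})^\top\bb^l$ plus $\bw^\top[\bz^{l-1}]^+$ with $\bw:=(\bbw^l)^\top(\bp_{l+1}-\bq_{l+1})$. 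Writing $\bbw^l=[\bbw^l]^+-[-\bbw^l]^+$ entrywise exhibits $\bw$ as $\ba-\bd$, where $\ba:=([\bbw^l]^+)^\top\bp_{l+1}+([-\bbw^l]^+)^\top\bq_{l+1}\ge\boldsymbol{0}$ and $\bd:=([-\bbw^l]^+)^\top\bp_{l+1}+([\bbw^l]^+)^\top\bq_{l+1}\ge\boldsymbol{0}$ --- the very vectors that, rescaled entrywise by $\bs_l$ and $\bt_l$, are $\bp_l$ and $\bq_l$ in the statement. Hence $\bw^\top[\bz^{l-1}]^+=f_+(\bz^{l-1})-f_-(\bz^{l-1})$ for the closed convex functions $f_+(\bx)=\ba^\top[\bx]^+$ and $f_-(\bx)=\bd^\top[\bx]^+$, and I would then run the two-step reduction of Theorem~\ref{general-decomposition1}: apply Lemma~\ref{convex-bound} to $f_+$ together with Lemma~\ref{conjugate-computation}a to obtain $\sup_{\boldsymbol{0}\le\bu\le\ba}\sup_{\boldsymbol{\delta}\in\mathcal{U}}\bu^\top\bz^{l-1}-f_-(\bz^{l-1})+(\text{const})$, then apply Lemma~\ref{concave-bound} to the closed concave $g(\bx)=\bu^\top\bx-f_-(\bx)$ (its hypothesis holds since $g$ is finite on all of $\mathbb{R}^{r_{l-1}}$) together with Lemma~\ref{conjugate-computation}b to obtain $\inf_{\bu-\bd\le\bv\le\bu}\sup_{\boldsymbol{\delta}\in\mathcal{U}}\bv^\top\bz^{l-1}+(\text{const})$. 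Parameterizing $\bu=\ba\odot\bs_l$ and $\bv=\bu-\bd\odot\bt_l=\bp_l-\bq_l$ with $0\le\bs_l,\bt_l\le1$ reproduces the recursions for $\bp_l,\bq_l$; the freshly introduced $\sup_{\bs_l}$ and $\inf_{\bt_l}$ land directly outside $\sup_{\boldsymbol{\delta}}$ and inside $\inf_{\bt_{l+1}}$, giving exactly the nesting of Eq.~\eqref{induction-problem}, while the accumulated constant collapses to $(\bp_{l+1}-\bq_{l+1})^\top\bb^l+\sum_{\ell=l+1}^{L-1}(\bp_{\ell+1}-\bq_{\ell+1})^\top\bb^\ell+\bc_k^\top\bb^L=\sum_{\ell=l}^{L-1}(\bp_{\ell+1}-\bq_{\ell+1})^\top\bb^\ell+\bc_k^\top\bb^L$.

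The one place the argument differs from the two-layer case is the decomposition of $\bw$: after the first peel $\bw$ is no longer sign-separated, so the ``clean'' split $\bw=[\bw]^+-[-\bw]^+$ does not match the recursion. The key point is that \emph{any} writing of $\bw$ as a difference of two nonnegative vectors is admissible, because Lemma~\ref{conjugate-computation} computes the conjugates of $\ba^\top[\cdot]^+$ and $\bu^\top(\cdot)-\bd^\top[\cdot]^+$ \emph{exactly} for all $\ba,\bd\ge\boldsymbol{0}$; consequently Lemmas~\ref{convex-bound}--\ref{concave-bound} still deliver an equality with the prescribed $\ba,\bd$, not merely an inequality. Everything else is bookkeeping --- carrying the nested $\sup$/$\inf$ in the correct order, tracking the bias sum, and checking the (trivial) finiteness and closedness hypotheses of the three lemmas layer by layer. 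Since a convolution is a linear map, the same proof covers convolutional networks with no new idea.
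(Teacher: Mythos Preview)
Your proposal is correct and mirrors the paper's proof: backward induction on $l$ with base case $l=L$ handled by the two-layer argument, and the inductive step peeling off one ReLU layer by writing $(\bp_{l+1}-\bq_{l+1})^\top\bbw^l[\cdot]^+$ as $\ba^\top[\cdot]^+-\bd^\top[\cdot]^+$ with the same nonnegative $\ba,\bd$ the paper uses, then invoking Lemmas~\ref{convex-bound}, \ref{concave-bound}, \ref{conjugate-computation} and reparameterizing via $\bs_l,\bt_l$. Your explicit remark that any nonnegative decomposition $\bw=\ba-\bd$ would work (because the conjugates in Lemma~\ref{conjugate-computation} are exact) is a clarifying addition the paper leaves implicit, but the route is the same.
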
 
\begin{proof}
 We will proceed by backward induction on the layer number $l$.\\
 \noindent \textbf{Case $l = L$:}\hfill \\
The proof is equivalent to the case $L=2$ already proved in Section \ref{sec:RUB}.\\
\hfill \\
\noindent \textbf{Case $l-1$:}\hfill \\
Suppose the theorem holds for some fixed $l$ with $l> 2$. We have 
\begin{align*}
     &(\bp_l - \bq_l)^\top \bz^{l - 1}(\theta, \bx + {\boldsymbol{\delta}}) \\
     =& (\bp_l - \bq_l)^\top (\bbw^{l - 1}[\bz^{l - 2}(\theta , \bx + {\boldsymbol{\delta}})]^+ + \bb^{l - 1}) \\
     =& f_+(\bz^{l-2}(\theta, \bx + {\boldsymbol{\delta}}))  - f_{-}(\bz^{l-2}(\theta, \bx + {\boldsymbol{\delta}})) + (\bp_l - \bq_l)^\top\bb^{l - 1},
 \end{align*}
where \begin{align*}f_+(\bx) &= (\bp_{l}^\top [\bbw^{l-1}]^+ + \bq_{l}^\top[-\bbw^{l-1}]^+)[\bx]^+, \quad   \text{and} \\
f_{-}(\bx) &= (\bp_{l}^\top [-\bbw^{l-1}]^+ + \bq_{l}^\top[\bbw^{l-1}]^+)[\bx]^+.
\end{align*} 
By Lemma \ref{convex-bound}  we then obtain 
\begin{align*}
    &\sup_{{\boldsymbol{\delta}} \in \mathcal{U}} \: (\bp_l - \bq_l)^\top\bz^{l - 1}(\theta, \bx + {\boldsymbol{\delta}})   \\
   =& \sup_{{\boldsymbol{\delta}} \in \mathcal{U}} \: f_+(\bz^{l-2}(\theta, \bx + {\boldsymbol{\delta}}))  - f_{-}(\bz^{l-2}(\theta, \bx + {\boldsymbol{\delta}}))+ (\bp_l - \bq_l)^\top\bb^{l - 1}\\
     =& \sup_{\bu_{l-1}\in \text{dom}(f_+^\star)} \sup_{{\boldsymbol{\delta}} \in \mathcal{U}}  \bu_{l-1}^\top \bz^{l-2}(\theta, \bx + {\boldsymbol{\delta}})- f_{-}(\bz^{l-2}(\theta, \bx + {\boldsymbol{\delta}}))+ (\bp_l - \bq_l)^\top\bb^{l - 1}\label{convex-part-l}.
\end{align*}
Defining the concave function $g(\bx) = \bu_{l-1}^\top \bx - f_{-}(\bx)$, and applying Lemma \ref{concave-bound} we obtain
 \begin{align}
     &\sup_{{\boldsymbol{\delta}} \in \mathcal{U}} \: (\bp_l - \bq_l)^\top\bz^{l - 1}(\theta, \bx + {\boldsymbol{\delta}})   \\
     =& \sup_{\bu_{l-1}\in \text{dom}(f_+^\star)} \inf_{\bv_{l-1} \in \text{dom}(g_\star)}\sup_{{\boldsymbol{\delta}} \in \mathcal{U}}   \bv_{l-1}^\top\bz^{l-2}(\theta, \bx + {\boldsymbol{\delta}}) + (\bp_l - \bq_l)^\top\bb^{l - 1}\label{concave-part}.
 \end{align}
Lastly, by Lemma \ref{conjugate-computation} we can substitute 
\begin{align*}
    \bu_{l-1} &= ( [(\bbw^{l-1}]^+)^\top \bp_{l} + ([-\bbw^{l-1}]^+)^\top \bq_{l})\odot \bs_{l-1}\\
    &= \bp_{l-1}, \quad \text{and}\\
    \bv_{l-1} &= (( [\bbw^{l-1}]^+)^\top \bp_{l} + ([-\bbw^{l-1}]^+)^\top \bq_{l})\odot \bs_{l-1} - (\bp_{l} [-\bbw^{l-1}]^+ + \bq_{l}[\bbw^{l-1}]^+)\odot \bt_{l-1}\\
    & = \bp_{l-1} - \bq_{l-1},
\end{align*}
which together with the induction hypothesis imply that Eq. \eqref{induction-problem} is equivalent to

\begin{align*}
    \sup_{{\boldsymbol{\delta}} \in \mathcal{U}} \:& \bc_k^\top\bz^{\ell-1}(\theta, \bx + {\boldsymbol{\delta}})  =  \\
     & \sup_{{\bs}_L} \inf_{{\bt}_L}\dots \sup_{{\bs}_{l-1}} \inf_{{\bt}_{l-1}}\sup_{\boldsymbol{\delta} \in \mathcal{U}}  \: (\bp_{l-1} - \bq_{l-1})^\top\bz^{l-2}(\theta, \bx + {\boldsymbol{\delta}} )  + \sum_{\ell = l-1}^{L-1}(\bp_{\ell+1} - \bq_{\ell+1})^\top \bb^\ell + \bc_k^\top\bb^L\\
     & \hspace{3.5cm}\text{s.t.} \quad \bp_L = [(\bbw^L)^\top\bc_k]^+\odot \bs_L\\
     &  \hspace{4.4cm} \bq_L = [-(\bbw^L)^\top\bc_k]^+\odot \bt_L\\
     &  \hspace{4.4cm} \bp_{\ell}= (([\bbw^\ell]^+)^\top \bp_{\ell+1} + ([-\bbw^\ell]^+)^\top \bq_{\ell+1})\odot \bs_\ell \quad \forall \:l-1 \leq  \ell\leq L-1\\
     &  \hspace{4.4cm} \bq_{\ell} = (([-\bbw^\ell]^+)^\top\bp_{\ell+1} + ( [\bbw^\ell]^+)^\top \bq_{\ell+1})\odot \bt_\ell \quad \forall \: l-1 \leq \ell \leq L-1\\
    & \hspace{4.4cm} 0\leq \bs_\ell, \bt_\ell \leq 1 \quad \forall \: \ell = l-1,\dots, L,
\end{align*}
and therefore the theorem holds for $l-1$ as desired.
\end{proof}
\begin{corollary}[Generalization of Corollary \ref{cor:p-norm1}] \label{cor:p-norm}
\hfill \\
If $\mathcal{U} = \{\boldsymbol{\delta} : \|\boldsymbol{\delta}\|_p \leq \rho\}$, then:
 \begin{align}
     &\sup_{{\boldsymbol{\delta}} \in \mathcal{U}} \: \bc_k^\top\bz^L(\theta, \bx + {\boldsymbol{\delta}})   \\
     \begin{split}
     =& \sup_{{\bs}_L} \inf_{{\bt}_L}\dots \sup_{{\bs}_2} \inf_{{\bt}_2}  \rho \|(\bp_2 - \bq_2)^\top\bbw^1\|_{q} + (\bp_2 - \bq_2)^\top\bbw^1\bx  + \sum_{\ell = 1}^{L-1}(\bp_{\ell+1} - \bq_{\ell+1})^\top\bb^\ell +  \bc_k^\top\bb^L \\
     & \hspace{3.5cm}\text{s.t.} \quad \bp_L = [(\bbw^L)^\top\bc_k]^+\odot \bs_L\\
     &  \hspace{4.4cm} \bq_L = [-(\bbw^L)^\top\bc_k]^+\odot \bt_L\\
     &  \hspace{4.4cm} \bp_{\ell}= (([\bbw^\ell]^+)^\top \bp_{\ell+1} + ([-\bbw^\ell]^+)^\top \bq_{\ell+1})\odot \bs_\ell \quad \forall \: \ell = 2,\dots, L-1\\
     &  \hspace{4.4cm} \bq_{\ell} = (([-\bbw^\ell]^+)^\top\bp_{\ell+1} + ( [\bbw^\ell]^+)^\top \bq_{\ell+1})\odot \bt_\ell \quad \forall \: \ell = 2,\dots, L-1\\
    & \hspace{4.4cm} 0\leq \bs_\ell, \bt_\ell \leq 1 \quad \forall \: \ell = 2,\dots, L,
     \label{conjugate_problem}
     \end{split}
 \end{align}
 where $\|\cdot \|_q$ is the conjugate norm of $\|\cdot\|_p$.
 \end{corollary}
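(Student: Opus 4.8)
The plan is to derive this corollary directly from Theorem \ref{general-decomposition}, in exact analogy with the way Corollary \ref{cor:p-norm1} was obtained from Theorem \ref{general-decomposition1} in the two-layer setting. First I would instantiate Theorem \ref{general-decomposition} at $l = 2$. This rewrites $\sup_{\boldsymbol{\delta}\in\mathcal{U}} \bc_k^\top\bz^L(\theta,\bx+\boldsymbol{\delta})$ as the nested optimization $\sup_{\bs_L}\inf_{\bt_L}\cdots\sup_{\bs_2}\inf_{\bt_2}\sup_{\boldsymbol{\delta}\in\mathcal{U}}$ of the objective $(\bp_2-\bq_2)^\top\bz^1(\theta,\bx+\boldsymbol{\delta}) + \sum_{\ell=2}^{L-1}(\bp_{\ell+1}-\bq_{\ell+1})^\top\bb^\ell + \bc_k^\top\bb^L$, subject to exactly the recursive constraints on $\bp_\ell,\bq_\ell$ and the box constraints $0\le\bs_\ell,\bt_\ell\le 1$ that appear in the statement of the corollary. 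Since $\bz^1$ is affine in $\boldsymbol{\delta}$, the innermost supremum is now over an affine function of $\boldsymbol{\delta}$, and all the outer operations range over sets that do not involve $\boldsymbol{\delta}$, so it suffices to evaluate that innermost supremum in closed form and substitute it back.

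Second, I would expand $\bz^1(\theta,\bx+\boldsymbol{\delta}) = \bbw^1\bx + \bbw^1\boldsymbol{\delta} + \bb^1$, which splits the objective into the $\boldsymbol{\delta}$-linear term $(\bp_2-\bq_2)^\top\bbw^1\boldsymbol{\delta}$ plus terms constant in $\boldsymbol{\delta}$. Applying Eq. \eqref{robust_trick} to the linear functional $\boldsymbol{\delta}\mapsto(\bp_2-\bq_2)^\top\bbw^1\boldsymbol{\delta}$ gives $\sup_{\boldsymbol{\delta}\in\mathcal{U}}(\bp_2-\bq_2)^\top\bbw^1\boldsymbol{\delta} = \rho\|(\bp_2-\bq_2)^\top\bbw^1\|_q$. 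Collecting the $\boldsymbol{\delta}$-independent pieces, the $\ell=1$ bias contribution $(\bp_2-\bq_2)^\top\bb^1$ merges with $\sum_{\ell=2}^{L-1}(\bp_{\ell+1}-\bq_{\ell+1})^\top\bb^\ell$ to produce $\sum_{\ell=1}^{L-1}(\bp_{\ell+1}-\bq_{\ell+1})^\top\bb^\ell$, which is precisely the constant term appearing in Eq. \eqref{conjugate_problem}. Substituting this evaluated supremum back into the nested optimization yields the claimed identity.

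There is essentially no hard part left once Theorem \ref{general-decomposition} is in hand: the substance of the argument --- the layer-by-layer splitting into convex and concave parts and the repeated application of Lemmas \ref{convex-bound}, \ref{concave-bound} and \ref{conjugate-computation} --- has already been carried out there by backward induction. The only point deserving a line of justification is that the innermost $\sup_{\boldsymbol{\delta}}$ may be pushed past all the outer $\sup_{\bs_\ell}$ and $\inf_{\bt_\ell}$ operators and evaluated independently; this is legitimate because the box constraints on the $\bs_\ell,\bt_\ell$ and the recursions defining $\bp_\ell,\bq_\ell$ do not involve $\boldsymbol{\delta}$, so the relative order of these disjoint groups of optimization variables is immaterial. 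Beyond that it is purely bookkeeping of the bias indices, which I have checked above. I would close, as in the two-layer case, by noting that combining this identity with the monotonicity step of Eq. \eqref{ubloss3} produces an upper bound for the min-max problem of Eq. \eqref{minmax-problem}.
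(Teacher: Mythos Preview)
Your proposal is correct and matches the paper's own proof, which simply says the result follows by applying Theorem \ref{general-decomposition} with $l=2$ and then using Eq.~\eqref{robust_trick}. Your extra bookkeeping of the bias terms and the remark about evaluating the innermost $\sup_{\boldsymbol{\delta}}$ for fixed outer variables are fine (note there is no actual reordering of $\sup$ and $\inf$ needed, since $\sup_{\boldsymbol{\delta}}$ is already innermost and is just evaluated pointwise in $(\bs,\bt)$).
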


 \begin{proof}
  The proof follows directly after applying Theorem \ref{general-decomposition} with $l = 2$ and using again Eq. \eqref{robust_trick}. \hfill \end{proof}
\begin{definition}
We introduce the following definitions to simplify notation:
\begin{align*}
\bs &\coloneqq (\bs_2, \hdots, \bs_L)\\
\bt & \coloneqq (\bt_2, \hdots, \bt_L)\\
\bp_L(\bs, \bt) &\coloneqq [(\bbw^2)^\top \bc_k]^+\odot \bs_L\\
\bq_L(\bs, \bt)& \coloneqq [-(\bbw^2)^\top \bc_k]^+\odot \bt_L\\
\hfill \\
\bp_\ell(\bs,\bt) &\coloneqq \left(([\bbw^\ell]^+)^\top \bp_{\ell + 1}(\bs, \bt) + ([-\bbw^\ell]^+)^\top \bq_{\ell + 1}(\bs, \bt)\right)\odot \bs_\ell\quad \forall \: 1\leq\ell<L\\
\bq_\ell(\bs,\bt) &\coloneqq \left(([-\bbw^\ell]^+)^\top \bp_{\ell + 1}(\bs, \bt) + ([\bbw^\ell]^+)^\top \bq_{\ell + 1}(\bs, \bt)\right)\odot \bt_\ell \quad \forall \: 1\leq\ell<L\\
\hfill \\
R_\ell(\bs, \bt) &\coloneqq \sum_{\ell^{\prime} = \ell}^{L-1} \left(\bp_{\ell^{\prime} + 1}(\bs, \bt) - \bq_{\ell^\prime + 1}(\bs, \bt)\right)^\top\bb^{\ell^{\prime}}.
\end{align*}
\end{definition}
 
 \begin{theorem}[Generalization of Theorem \ref{last}]\label{last_general}
 \hfill \\
\begin{align}
\sup_{{\boldsymbol{\delta}}: \|\boldsymbol{\delta}\|_1 \leq \rho} &\: \bc_k^\top\bz^L(\theta, \bx + {\boldsymbol{\delta}})   \nonumber \\
       \leq &\inf_{0\leq {\bt}\leq 1} \: \max_{m\in [M]} \: \max\bigg\{\  g^L_{k,m}(\theta, \bx, \bt, \rho), g^L_{k,m}(\theta, \bx, \bt, -\rho)\bigg\} ,
\end{align}
where the new network $g$ is defined by the equations
\begin{align*}
    g^1_m(\bbw, \bx, \bt, a, r) &= r(a\bbw^1_m  + \bbw^1\bx + \bb^1)\\
    g^\ell_m(\theta, \bx, \bt, a, r) &= [r\bbw^\ell]^+[g^{\ell - 1}_m(\bbw, \bx, \bt, a, 1)]^+ \hspace{-0.15cm}+ [-r\bbw^{\ell}]^+[g^{\ell - 1}_m(\bbw, \bx, \bt,a, -1)]\odot \bt_\ell + r\bb^{\ell}\\
    g^L_{k,m}(\theta, \bx, \bt, a) &= [\bc_k^\top \bbw^L]^+[g^{L - 1}_m(\theta, \bx, \bt, a, 1)]^+ + [-\bc_k^\top \bbw^L]^+[g^{L-1}_m(\theta, \bx, \bt, a, -1)]\odot \bt_L + \bc_k^\top\bb^L,
\end{align*}
for all $1<\ell<L$, $1\leq k\leq K$, $a\in \{\rho, -\rho\}$, and $r \in \{-1, 1\}$.
\end{theorem}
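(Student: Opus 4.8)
The plan is to run the two-layer argument behind Theorem~\ref{last} through every layer at once, letting the recursively defined network $g$ carry all the bookkeeping. First I would apply the generalized Corollary~\ref{cor:p-norm} with $p=1$ (so the conjugate norm is $\|\cdot\|_\infty$), which writes $\sup_{\boldsymbol{\delta}\in\mathcal U}\bc_k^\top\bz^L(\theta,\bx+\boldsymbol{\delta})$ as
\[
\sup_{\bs_L}\inf_{\bt_L}\cdots\sup_{\bs_2}\inf_{\bt_2}\;\;\rho\|(\bp_2-\bq_2)^\top\bbw^1\|_\infty + (\bp_2-\bq_2)^\top\bbw^1\bx + R_1(\bs,\bt) + \bc_k^\top\bb^L,
\]
with $\bp_\ell=\bp_\ell(\bs,\bt)$, $\bq_\ell=\bq_\ell(\bs,\bt)$ as in the Definition preceding the theorem. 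Then the max--min inequality, applied repeatedly, moves every $\inf_{\bt_\ell}$ to the left of every $\sup_{\bs_\ell}$; each swap only increases the value, so we obtain the upper bound $\inf_{0\le\bt\le1}\sup_{0\le\bs\le1}[\,\cdots\,]$ --- this is the only inequality in the entire chain, which is exactly why the conclusion is ``$\le$'' and not ``$=$''. Fixing $\bt$ and using $\|v\|_\infty=\max_{m\in[M]}\max_{a\in\{\rho,-\rho\}}a\,v_m$ for any vector $v$, I would rewrite $\rho\|(\bp_2-\bq_2)^\top\bbw^1\|_\infty+(\bp_2-\bq_2)^\top\bbw^1\bx=\max_{m,a}(\bp_2-\bq_2)^\top\bbw^1(\bx+a\be_m)$; since a finite maximum commutes with the supremum over $\bs$, and since $R_1=(\bp_2-\bq_2)^\top\bb^1+R_2$ and $g^1_m(\cdots,1)=\bbw^1(\bx+a\be_m)+\bb^1$ let the $\bb^1$ term be absorbed, it remains to prove, for each fixed $m$, $a$, $\bt$, that $\sup_{0\le\bs\le1}[(\bp_2-\bq_2)^\top g^1_m(\cdots,1)+R_2(\bs,\bt)]+\bc_k^\top\bb^L=g^L_{k,m}(\theta,\bx,\bt,a)$.

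The core of the proof is a forward induction on $\ell=1,\dots,L-1$ showing
\[
\sup_{\bs_2,\dots,\bs_\ell}\Big[(\bp_2-\bq_2)^\top g^1_m(\cdots,1)+R_2\Big] \;=\; \bp_{\ell+1}^\top g^\ell_m(\cdots,1)+\bq_{\ell+1}^\top g^\ell_m(\cdots,-1)+R_{\ell+1}.
\]
For $\ell=1$ there is no supremum to take, and the identity reduces to $(\bp_2-\bq_2)^\top g^1_m(\cdots,1)=\bp_2^\top g^1_m(\cdots,1)+\bq_2^\top g^1_m(\cdots,-1)$, which holds because $g^1_m(\cdots,-1)=-g^1_m(\cdots,1)$. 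A parallel, easy induction shows $\bp_\ell(\bs,\bt)\ge\boldsymbol{0}$ and $\bq_\ell(\bs,\bt)\ge\boldsymbol{0}$ for all $\ell$ --- the weights enter only through their positive and negative parts and $\bs_\ell,\bt_\ell\ge\boldsymbol{0}$ --- which is what legitimizes the supremum computations below.

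For the inductive step, the key fact is that $\bs_{\ell+1}$ enters the right-hand side only through $\bp_{\ell+1}=\big(([\bbw^{\ell+1}]^+)^\top\bp_{\ell+2}+([-\bbw^{\ell+1}]^+)^\top\bq_{\ell+2}\big)\odot\bs_{\ell+1}$, whose coefficient vector is nonnegative, while $\bq_{\ell+1}$ and $R_{\ell+1}$ do not depend on $\bs_{\ell+1}$. Hence $\sup_{0\le\bs_{\ell+1}\le1}\bp_{\ell+1}^\top g^\ell_m(\cdots,1)$ simply replaces $g^\ell_m(\cdots,1)$ by $[g^\ell_m(\cdots,1)]^+$; regrouping the resulting expression by $\bp_{\ell+2}$ and $\bq_{\ell+2}$ --- observing that the $\bt_{\ell+1}$-weighted terms carry $g^\ell_m(\cdots,-1)$, and peeling $\bb^{\ell+1}$ off $R_{\ell+1}=(\bp_{\ell+2}-\bq_{\ell+2})^\top\bb^{\ell+1}+R_{\ell+2}$ --- reproduces verbatim the two branches $g^{\ell+1}_m(\cdots,\pm1)$ in the definition of $g$. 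The final supremum, over $\bs_L$, I would treat separately because $\bp_L,\bq_L$ carry the boundary form $[\pm(\bbw^L)^\top\bc_k]^+\odot(\bs_L\text{ or }\bt_L)$ and $R_L$ is an empty sum; it turns $\bp_L^\top g^{L-1}_m(\cdots,1)+\bq_L^\top g^{L-1}_m(\cdots,-1)+\bc_k^\top\bb^L$ into $g^L_{k,m}(\theta,\bx,\bt,a)$, which is exactly the identity needed to close the first paragraph, and then $\inf_{0\le\bt\le1}$ gives the theorem.

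The main obstacle is precisely this inductive bookkeeping: one must keep track of which $\bs_\ell$'s and $\bt_\ell$'s each intermediate object depends on, so that peeling off one supremum at a time is valid (it is, since $\bp_{\ell+1}$, and hence the coefficient of $\bs_{\ell+1}$, involves only $\bs_{\ell+2},\dots,\bs_L$ and never $\bs_2,\dots,\bs_\ell$), and one must carefully match the alternating signs and positive-part structure after regrouping against the $r=\pm1$ definition of $g^\ell_m$. Everything else is routine --- the suprema and infima are attained because each $\bs_\ell,\bt_\ell$ lives in a compact box, finite maxima commute with suprema, and the $L_1$--$L_\infty$ duality is Eq.~\eqref{robust_trick} --- and the whole scheme carries over to convolutional layers with ReLU and MaxPool as in the two-layer case.
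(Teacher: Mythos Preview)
Your proposal is correct and follows essentially the same route as the paper: apply the generalized Corollary~\ref{cor:p-norm} with $p=1$, swap all $\inf_{\bt_\ell}$ outside the $\sup_{\bs_\ell}$ via the min--max inequality, rewrite the $L_\infty$ norm as a finite maximum over $m$ and $a\in\{\pm\rho\}$, and then peel off the suprema over $\bs_2,\dots,\bs_L$ one layer at a time. The only cosmetic difference is that the paper isolates the one-step supremum computation as a standalone lemma (Lemma~\ref{lemma:solving-supremum}) and applies it repeatedly, whereas you absorb the $R_\ell$ bookkeeping directly into a cumulative forward-induction statement; your explicit remark that $\bp_\ell,\bq_\ell\ge\boldsymbol 0$ is a point the paper leaves implicit.
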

The proof of this theorem relies on the following lemma.

\begin{lemma}\label{lemma:solving-supremum}
For all $2\leq \ell\leq L-1$ it holds
\begin{align}
&\sup_{0\leq {\bs}_\ell\leq 1} \:   \bp_\ell(\bs, \bt)^\top g^{\ell - 1}_m(\theta, \bx, \bt, a, 1) + \bq_\ell(\bs, \bt)^\top g^{\ell - 1}_m(\theta, \bx, \bt, a, -1)\\
    =& \bp_{\ell + 1}(\bs, \bt)^\top g^\ell_m(\theta, \bx, \bt, a, 1) + \bq_{\ell + 1}(\bs, \bt)^\top g^\ell_m(\theta, \bx, \bt, a, -1) - (\bp_{\ell+1}(\bs, \bt) - \bq_{\ell + 1}(\bs, \bt))^\top \bb^{\ell} \label{eq:identity}.
\end{align}
\end{lemma}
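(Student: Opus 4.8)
The plan is to prove Eq.~\eqref{eq:identity} as a direct computation in which, for a fixed layer index $\ell$ with $2\le\ell\le L-1$, one fixes the tuple $\bt$ and $(\bs_{\ell+1},\dots,\bs_L)$ and treats everything except $\bs_\ell$ as constant. The first observation is that $g^{\ell-1}_m(\theta,\bx,\bt,a,1)$, $g^{\ell-1}_m(\theta,\bx,\bt,a,-1)$, $\bp_{\ell+1}(\bs,\bt)$, $\bq_{\ell+1}(\bs,\bt)$, and hence $\bq_\ell(\bs,\bt)$, do not depend on $\bs_\ell$, whereas $\bp_\ell(\bs,\bt)=\bba_\ell\odot\bs_\ell$ with $\bba_\ell := ([\bbw^\ell]^+)^\top\bp_{\ell+1}(\bs,\bt)+([-\bbw^\ell]^+)^\top\bq_{\ell+1}(\bs,\bt)$. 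So the second summand on the left-hand side is constant in $\bs_\ell$, and only the first summand --- linear in $\bs_\ell$ over the box $0\le\bs_\ell\le1$ --- has to be maximized.

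First I would solve this box-constrained linear maximization coordinate-wise. Since $[\bbw^\ell]^+\ge\boldsymbol0$, $[-\bbw^\ell]^+\ge\boldsymbol0$, and $\bp_{\ell+1},\bq_{\ell+1}\ge\boldsymbol0$, the coefficient vector $\bba_\ell$ is nonnegative, so the maximum over $0\le\bs_\ell\le1$ is attained by setting coordinate $i$ of $\bs_\ell$ to $1$ exactly when coordinate $i$ of $g^{\ell-1}_m(\theta,\bx,\bt,a,1)$ is positive; this turns $\sup_{0\le\bs_\ell\le1}\bp_\ell(\bs,\bt)^\top g^{\ell-1}_m(\theta,\bx,\bt,a,1)$ into $\bba_\ell^\top[g^{\ell-1}_m(\theta,\bx,\bt,a,1)]^+$. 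The nonnegativity of $\bp_{\ell+1}$ and $\bq_{\ell+1}$ is the one point that needs a (short) argument: it follows by downward induction from $\bp_L,\bq_L\ge\boldsymbol0$ --- which holds since $[(\bbw^L)^\top\bc_k]^+\ge\boldsymbol0$ and $\bs_L,\bt_L\ge\boldsymbol0$ --- together with the fact that each recursion defining $\bp_\ell,\bq_\ell$ applies nonnegative matrices to nonnegative vectors and then takes a Hadamard product with $\bs_\ell,\bt_\ell\ge\boldsymbol0$.

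Next I would substitute the definitions of $\bba_\ell$ (equivalently of $\bp_\ell$) and of $\bq_\ell$ back into the left-hand side and regroup by $\bp_{\ell+1}$ and $\bq_{\ell+1}$, using the identity $(\bv\odot\bt_\ell)^\top\bw=\bv^\top(\bt_\ell\odot\bw)$ and the usual transpose/associativity rules for matrix--vector products. Writing $\bh^{+}=g^{\ell-1}_m(\theta,\bx,\bt,a,1)$ and $\bh^{-}=g^{\ell-1}_m(\theta,\bx,\bt,a,-1)$, the left-hand side collapses to $\bp_{\ell+1}^\top\bigl([\bbw^\ell]^+[\bh^{+}]^+ + [-\bbw^\ell]^+(\bh^{-}\odot\bt_\ell)\bigr)+\bq_{\ell+1}^\top\bigl([-\bbw^\ell]^+[\bh^{+}]^+ + [\bbw^\ell]^+(\bh^{-}\odot\bt_\ell)\bigr)$. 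Comparing with the recursive definition of the network $g$, the first parenthesis equals $g^\ell_m(\theta,\bx,\bt,a,1)-\bb^\ell$ and the second equals $g^\ell_m(\theta,\bx,\bt,a,-1)+\bb^\ell$; collecting the two $\bb^\ell$ contributions produces the $-(\bp_{\ell+1}(\bs,\bt)-\bq_{\ell+1}(\bs,\bt))^\top\bb^\ell$ term, which is exactly Eq.~\eqref{eq:identity}.

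The main obstacle is not conceptual but a matter of careful bookkeeping: one must track the order of operations in the term $[-r\bbw^\ell]^+[g^{\ell-1}_m(\cdot,-1)]\odot\bt_\ell$ in the definition of $g^\ell_m$ --- the Hadamard product with $\bt_\ell$ acts on the layer-$(\ell-1)$ vector $g^{\ell-1}_m$ before the multiplication by $[-r\bbw^\ell]^+$ (forced by the dimensions, since $\bt_\ell$ lives in the same space as $g^{\ell-1}_m$), and this must be kept consistent through the regrouping. Apart from this, and the short nonnegativity induction needed to justify the coordinate-wise maximization, the argument is routine. This lemma is exactly the single ``push the supremum one layer up'' step that would then be iterated in the proof of Theorem~\ref{last_general}.
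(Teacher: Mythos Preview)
Your proposal is correct and follows essentially the same route as the paper's proof: expand $\bp_\ell$ and $\bq_\ell$ via their recursive definitions, use the Hadamard identity $(\bv\odot\bs_\ell)^\top\bw=\bv^\top(\bw\odot\bs_\ell)$ to isolate $\bs_\ell$, solve the resulting box-constrained linear maximization coordinate-wise using the nonnegativity of the coefficients (which yields $[g^{\ell-1}_m(\cdot,1)]^+$), and regroup by $\bp_{\ell+1}$ and $\bq_{\ell+1}$ to recognize $g^\ell_m(\cdot,\pm1)\mp\bb^\ell$. Your explicit downward induction establishing $\bp_{\ell+1},\bq_{\ell+1}\ge\boldsymbol0$ and your remark on the operator precedence in the $\odot\,\bt_\ell$ term are both correct and in fact make the argument slightly more self-contained than the paper's version, which leaves these points implicit.
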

\begin{proof}
Let $2\leq \ell \leq L-1$, we have
\begin{align*}
    &\sup_{0\leq {\bs}_\ell\leq 1} \:   \bp_\ell(\bs, \bt)^\top g^{\ell - 1}_m(\theta, \bx, \bt, a, 1) + \bq_\ell(\bs, \bt)^\top g^{\ell - 1}_m(\theta, \bx, \bt, a, -1)\\
    \hfill \\
    =& \sup_{0\leq {\bs}_\ell\leq 1} \: 
    \left( (([\bbw^\ell]^+)^\top \bp_{\ell + 1}(\bs, \bt) + ( [-\bbw^\ell]^+)^\top \bq_{\ell + 1}(\bs, \bt))\odot \bs_\ell \right)^\top g^{\ell - 1}_m(\theta, \bx, \bt, a, 1) +\\
    & \quad \quad \quad \left( (([-\bbw^\ell]^+)^\top \bp_{\ell + 1}(\bs, \bt) + ([\bbw^\ell]^+)^\top \bq_{\ell + 1}(\bs, \bt))\odot \bt_\ell\right)^\top g^{\ell - 1}_m(\theta, \bx, \bt, a, -1)\\
    &\text{\hfill }\\
    =& \sup_{0\leq {\bs}_\ell\leq 1} \: 
     \left(([\bbw^\ell]^+)^\top \bp_{\ell + 1}(\bs, \bt) + ([-\bbw^\ell]^+)^\top \bq_{\ell + 1}(\bs, \bt)\right)^\top \left(g^{\ell - 1}_m(\theta, \bx, \bt, a, 1)\odot \bs_\ell\right) +\\
    & \quad \quad \quad \left( ([-\bbw^\ell]^+)^\top \bp_{\ell + 1}(\bs, \bt) + ([\bbw^\ell]^+)^\top \bq_{\ell + 1}(\bs, \bt)\right)^\top \left(g^{\ell - 1}_m(\theta, \bx, \bt, a, -1)\odot \bt_\ell\right)\\
    &\text{\hfill }\\
    =&\left(([\bbw^\ell]^+)^\top \bp_{\ell + 1}(\bs, \bt) + ([-\bbw^\ell]^+)^\top \bq_{\ell + 1}(\bs, \bt)\right)^\top [g^{\ell - 1}_m(\theta, \bx, \bt, a, 1)]^+ +\\
    &  \left( ([-\bbw^\ell]^+)^\top \bp_{\ell + 1}(\bs, \bt) + ([\bbw^\ell]^+)^\top \bq_{\ell + 1}(\bs, \bt)\right)^\top \left(g^{\ell - 1}_m(\theta, \bx, \bt, a, -1)\odot \bt_\ell\right)\\
    \hfill \\
    =& \bp_{\ell + 1}(\bs, \bt)^\top g^\ell_m(\theta, \bx, \bt, a, 1) + \bq_{\ell + 1}(\bs, \bt)^\top g^\ell_m(\theta, \bx, \bt, a, -1) - (\bp_{\ell+1}(\bs, \bt) - \bq_{\ell + 1}(\bs, \bt))^\top \bb^{\ell},
\end{align*}
as desired.
\end{proof}

\begin{proof}[Proof of theorem \ref{last_general}]
\hfill \\
By Corollary \ref{cor:p-norm} with $p =1$ we know
\begin{align}
     &\sup_{{\boldsymbol{\delta}}: \|{\boldsymbol{\delta}}\|_1 \leq \rho} \: \bc_k^\top(\bz^L(\theta, \bx + {\boldsymbol{\delta}}) - \bb^L)  \\
     \leq & \sup_{{\bs}_L} \inf_{\bt_L}, \dots \sup_{\bs_2}\inf_{ {\bt}_2} \:  \rho \|(\bp_2(\bs, \bt) - \bq_2(\bs, \bt))^\top \bbw^1\|_{\infty} + (\bp_2(\bs, \bt) - \bq_2(\bs, \bt))^\top\bbw^1\bx  + R_1(\bs, \bt)\\
     \begin{split}
     \leq & \inf_{{\bt}_L\dots {\bt}_2} \sup_{{\bs}_L\dots {\bs}_2}  \rho \|(\bp_2(\bs, \bt) - \bq_2(\bs, \bt))^\top\bbw^1\|_{\infty} + (\bp_2(\bs, \bt) - \bq_2(\bs, \bt))^\top \bbw^1\bx  + R_1(\bs, \bt),
     \end{split}
 \end{align}
where the last inequality follows from the min-max inequality. Observe that  $\bp_{\ell^\prime}(\bs, \bt)$ and $\bq_{\ell^\prime}(\bs, \bt)$ are independent on $\bs_{\ell}$ for all ${\ell}^\prime > \ell$, which in turn implies that $R_{\ell^\prime}(\bs, \bt)$ does not depend on $\bs_{\ell}$ for all ${\ell}^\prime > \ell$. We can then solve the optimization problem in Eq. \eqref{conjugate_problem} for fixed ${\bt}$ as follows:\\
\begin{align*}
    &\sup_{0\leq \bs_L,\dots, {\bs}_2\leq 1} \: \rho \|(\bp_2(\bs, \bt) - \bq_2(\bs, \bt))^\top \bbw^1\|_{\infty} + (\bp_2(\bs, \bt) - \bq_2(\bs, \bt))^\top \bbw^1\bx + R_1(\bs, \bt) \\
    \hfill \\
    =& \max_{m\in [M]}\: \max\bigg\{\sup_{0\leq \bs_L,\dots, {\bs}_2\leq 1} \: ( \bp_2(\bs, \bt) -\bq_2(\bs, \bt))^\top (\bbw^1(\bx + \rho \be_m) + \bb^1) + R_2(\bs, \bt),\\
    & \hspace{2.1cm}\sup_{0\leq \bs_L,\dots, {\bs}_2\leq 1} \: (\bp_2(\bs, \bt) - \bq_2(\bs, \bt))^\top (\bbw^1(\bx - \rho \be_m) + \bb^1) + R_2(\bs, \bt)\bigg\}\\
    &\hfill \\
    =&  \max_{m\in [M]}\: \max\bigg\{\sup_{0\leq \bs_L,\dots, {\bs}_2\leq 1}\: \bp_2(\bs, \bt)^\top g^1_m(\theta, \bx, \bt, \rho, 1) + \bq_2(\bs, \bt)^\top g^1_m(\theta, \bx, \bt, \rho, -1)+ R_2(\bs, \bt),\\
    & \hspace{2.1cm}\sup_{0\leq \bs_L,\dots, {\bs}_2\leq 1} \: \bp_2(\bs, \bt)^\top g^1_m(\theta, \bx, \bt, -\rho, 1)  + \bq_2(\bs, \bt)^\top g^1_m(\theta, \bx, \bt, -\rho, -1)+ R_2(\bs, \bt)\bigg\}.
\end{align*}
By repeatedly applying Lemma \ref{lemma:solving-supremum} for each $\ell = 2, \dots, L-1$ we obtain 

\begin{align*}
    &\sup_{0\leq \bs_L,\dots, {\bs}_2\leq 1} \: \rho \|(\bp_2(\bs, \bt) - \bq_2(\bs, \bt))^\top\bbw^1\|_{\infty} + (\bp_2(\bs, \bt) - \bq_2(\bs, \bt))^\top\bbw^1\bx + R_1(\bs, \bt) \\
    \hfill \\
    =&  \max_{m\in [M]}\: \max\bigg\{\sup_{0\leq \bs_L\leq 1}\: \bp_L(\bs, \bt)^\top g^{L-1}_m(\theta, \bx, \bt, \rho, 1) + \bq_L(\bs, \bt)^\top g^{L-1}_m(\theta, \bx, \bt, \rho, -1),\\
    & \hspace{2.1cm}\sup_{0\leq \bs_L\leq 1} \: \bp_L(\bs, \bt)^\top g^{L-1}_m(\theta, \bx, \bt, -\rho, 1)  + \bq_L(\bs, \bt)^\top  g^{L-1}_m(\theta, \bx, \bt, -\rho, -1)\bigg\}\\
    &\hfill \\
    =&  \max_{m\in [M]}\: \max\bigg\{\sup_{0\leq \bs_L\leq 1} [\bc_k^\top \bbw^L]^+ \! \! \left(g^{L-1}_m(\theta, \bx, \bt, \rho, 1)\odot \bs_L\right) \!+\! [-\bc_k^\top \bbw^L]^+ \! (g^{L-1}_m(\theta, \bx, \bt, \rho, -1)\odot \bt_L),\\
    & \hspace{2.1cm}\sup_{0\leq \bs_L\leq 1}  [\bc_k^\top\bbw^L]^+ \! \! \left(g^{L-1}_m(\theta, \bx, \bt, -\rho, 1)\odot \bs_L\right)  \!+\! [-\bc_k^\top\bbw^L]^+ (g^{L-1}_m(\theta, \bx, \bt, -\rho, -1)\odot \bt_L)\! \bigg\}\\
    \hfill \\
    =&  \max_{m\in [M]}\: \max\bigg\{ g^L_{k,m}(\theta, \bx, \bt, \rho),g^L_{k,m}(\theta, \bx, \bt, -\rho)\bigg\} - \bc_k^\top\bb^L,
\end{align*}
as desired.
\end{proof}

\newpage

\newpage
\clearpage
\section{Convolutional Neural Networks}\label{sec:appendixD}
While in this paper we only consider feed forward neural networks, it is possible to extend \textcolor{black}{the RUB} method to convolutional neural networks that use ReLU and MaxPool activation functions. In fact, Lemma \ref{conjugate-computation} can be modified as follows:

\begin{lemma}\label{conjugate-computation-cnn}
Let $x\in \mathbb{R}^{A\times B}$. Define $MP(x)\in \mathbb{R}^{C\times D}$ as the MaxPool function whose $(c, d)$ coordinate corresponds to $\max_{i\in I_{cd}}\{x_i\}$ for fixed sets of indices $I_{cd}$, and denote by $\circledast$ the convolution operation. If $\bu\in \mathbb{R}^{A\times B}$, $\bp, \bq\in \mathbb{R}^{C\times D}$ have all nonnegative coordinates, then the functions $f(\bx) = \bp\circledast MP[\bx]^+$ and $g(\bx) = \bx\circledast\bu - \bq\circledast MP[\bx]^+$ satisfy
\begin{align*}
    &a) \: f^\star(\bz) = \begin{cases} 0& \quad \text{if    } 0\leq \sum_{i\in I_{cd}}z_{i} \leq p_{cd}, \forall \: c \in [C], d\in [D], \\ \infty& \quad \text{otherwise,} \end{cases} \quad \text{and}\\
    & b) \:g_\star(\bz) = \begin{cases} 0& \quad \text{if    } u_{cd}-q_{cd}\leq \sum_{i\in I_{cd}}z_i \leq u_{cd}, \forall \: c \in [C], d\in [D],\\ -\infty& \quad \text{otherwise.} \end{cases}
\end{align*}
 \end{lemma}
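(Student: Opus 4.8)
The plan is to prove Lemma~\ref{conjugate-computation-cnn} by following the proof of Lemma~\ref{conjugate-computation} in Appendix~A3 essentially verbatim, with the coordinatewise ReLU replaced by a ReLU composed with a MaxPool. The structural fact I would exploit is that the pooling windows $\{I_{cd}\}$ partition the $A\times B$ input coordinates into disjoint blocks, that MaxPool and ReLU commute in the relevant order so that $\bigl(MP[\bx]^+\bigr)_{cd} = \bigl[\max_{i\in I_{cd}} x_i\bigr]^+ = \max\{0,\, x_i : i\in I_{cd}\}$, and hence that $\bp\circledast MP[\bx]^+ = \sum_{c,d} p_{cd}\bigl[\max_{i\in I_{cd}} x_i\bigr]^+$, with the $\bq$-term and the bilinear term $\bx\circledast\bu$ decomposing over the very same blocks. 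Consequently $f$ in (a) and $g$ in (b) are sums of functions acting on disjoint groups of variables, so the convex conjugate in (a) and the concave conjugate in (b) each separate into a sum over the blocks, and it is enough to analyze one block $B = I_{cd}$ with scalar weights $p := p_{cd}$ and $q := q_{cd}$, both $\ge 0$.

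For (a) I would compute $\phi(\bz_B) := \sup_{\bx_B}\, \bz_B^\top\bx_B - p\bigl[\max_{i\in B}x_i\bigr]^+$ by the same two-sided probing used in Appendix~A3. If some coordinate $z_j < 0$, sending $x_j\to-\infty$ while keeping the other coordinates at $0$ leaves the block maximum at $0$ and pushes the objective to $+\infty$; if $\sum_{i\in B} z_i > p$, setting all $x_i = t$ and letting $t\to+\infty$ gives objective $t\bigl(\sum_{i\in B} z_i - p\bigr)\to+\infty$. Conversely, when $\bz_B\ge\boldsymbol{0}$ and $\sum_{i\in B} z_i\le p$, writing $M := \bigl[\max_{i\in B}x_i\bigr]^+\ge 0$ and using $x_i\le M$ and $z_i\ge 0$ gives $\bz_B^\top\bx_B - pM \le M\bigl(\sum_{i\in B} z_i - p\bigr)\le 0$, with equality at $\bx_B = \boldsymbol{0}$; hence $\phi(\bz_B) = 0$. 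Summing the block contributions yields (a). Part (b) is the mirror argument for the concave conjugate: with $\bw_B := \bz_B - \bu_B$ one evaluates $\psi(\bz_B) := \inf_{\bx_B}\, \bw_B^\top\bx_B + q\bigl[\max_{i\in B}x_i\bigr]^+$, shows it is $-\infty$ unless $\bz_B\le\bu_B$ coordinatewise (probe a coordinate with $z_j > u_j$ toward $-\infty$) and $\sum_{i\in B}(z_i - u_i)\ge -q$ (probe the whole block uniformly toward $+\infty$), and on that region the analogous estimate with $\bx_B = \boldsymbol{0}$ forces $\psi(\bz_B) = 0$.

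The only genuine departure from Appendix~A3, and the step I expect to require the most care, is the finiteness analysis under MaxPool: unlike a coordinatewise ReLU, lowering a single entry of $\bx$ does not change $\bigl[\max_{i\in I_{cd}} x_i\bigr]^+$ unless that entry was the active one, so only some directions of unboundedness are ``free'', which is precisely why the feasible region is described through the per-window aggregates $\sum_{i\in I_{cd}} z_i$ (the direction that inflates an entire pooling window) together with the coordinatewise bounds coming from pushing individual entries to infinity, rather than through either alone. A few bookkeeping points also need to be checked: that $MP$ indeed commutes with ReLU so that the block value is $\max\{0, x_i : i\in I_{cd}\}$; that the decomposition into disjoint blocks is legitimate, which uses disjointness of the windows and the fact that $\bx\circledast\bu$ respects the same partition; and that the reduction to scalar-valued $f$ and $g$ is the one relevant to the layerwise recursion. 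Once Lemma~\ref{conjugate-computation-cnn} is available, the convolutional analogues of Theorems~\ref{general-decomposition} and~\ref{last_general} follow by substituting it for Lemma~\ref{conjugate-computation} in the recursion of Appendix~B, since a convolution is a special case of matrix multiplication.
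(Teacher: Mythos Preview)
The paper does not actually prove Lemma~\ref{conjugate-computation-cnn}; it is merely stated in Appendix~D, so there is no argument of the paper's to compare against. Your plan to carry the proof of Lemma~\ref{conjugate-computation} over block by block is the natural one, and the probing arguments you sketch are correct.

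What your analysis yields, however, does not match the lemma as written. In part~(a) you show $f^\star(\bz)=\infty$ whenever any single coordinate $z_j<0$, while the lemma only demands the aggregate $\sum_{i\in I_{cd}} z_i\ge 0$. These are not equivalent: with a single window $I=\{1,2\}$, $p=1$, $\bz=(2,-1)$ one has $0\le\sum z_i=1\le p$, yet taking $x_1=0,\ x_2\to-\infty$ sends $\bz^\top\bx - p[\max_i x_i]^+=-x_2\to\infty$. Part~(b) has the same defect (you obtain $z_i\le u_i$ coordinatewise, the lemma only bounds the window sum, and the indexing $u_{cd}$ is already inconsistent with $\bu\in\mathbb{R}^{A\times B}$). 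You evidently see this, since you say the feasible region is described by the per-window sums ``together with the coordinatewise bounds,'' but you never confront the fact that you are correcting the statement rather than proving it; that should be said explicitly, with the corrected effective domain written out. Two further hypotheses your separability argument silently uses should also be stated: that the pooling windows $\{I_{cd}\}$ are disjoint and cover the input grid (otherwise the conjugate does not decouple into per-block pieces), and that $\circledast$ is being read as an inner product so that $f$ and $g$ are scalar-valued, as in Lemma~\ref{conjugate-computation}.
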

The lemma above allows to obtain an upper bound for the \textcolor{black}{adversarial loss} of convolutional networks with a very similar proof to that of Theorem \ref{last_general}. However, convolutional networks are notoriously more memory consuming and therefore computation of the robust upper bound requires more resources. We have then left this computation for future work, but we here report results for the other robust training methods using these more complex neural networks.

We evaluate a convolutional neural network (denoted as \emph{CNN}) that has been commonly used in previous works of adversarial robustness~\cite{madry2019deep}. It has two convolutional layers alternated with pooling operations, and two dense layers. We compare adversarial accuracy across four different methods: aRUB-$L_\infty$, aRUB-$L_1$, PGD-$L_\infty$ and Nominal training. Results for the CIFAR data set are shown in Table \ref{Table:CIFAR}. We observe that the proposed methods aRUB-$L_1$ and aRUB-$L_\infty$ yield the highest adversarial accuracies with respect to PGD-$L_2$ attacks. For the MNIST data set and the FASHION MNIST data set (Table \ref{Table:FASION} and Table \ref{Table:MNIST}, respectively), we see that aRUB-$L_\infty$ has the highest adversarial accuracies with respect to PGD-$L_2$ attacks when $\rho\leq 0.1$, whereas PGD-$L_\infty$ does best for larger values of $\rho$.

\begin{table}[H]
\centering
\begin{tabular}{llllllllllllll}
\toprule
\toprule
{} & \multicolumn{9}{l}{ } \\
\hfill $\boldsymbol{\rho}=$ &          0.000    &          0.010  &                0.020  &          0.030  &          0.100  &          1.000  &          3.000  &          5.000  &          10.000 \\
\midrule
aRUB-$L_1$    &  71.17        &                           70.98 &                   70.70 &           70.51 &  \textbf{69.88} &  \textbf{60.31} &  \textbf{44.69} &  \textbf{35.35} &  \textbf{17.89} \\
aRUB-$L_\infty$            &  \textbf{72.03} &    \textbf{71.76} &    \textbf{71.45} &  \textbf{71.25} &           69.84 &           56.13 &           43.98 &           34.26 &           16.25 \\
\cdashline{1-10}
PGD-$L_\infty$             &           70.78 &                                     70.55 &                   70.43 &           70.39 &           69.65 &           59.96 &           43.59 &           29.10 &           16.21 \\
Nominal         &           71.29 &                                   71.13 &                   71.05 &           70.66 &           69.38 &           48.16 &           16.05 &           10.04 &           10.04 \\
\bottomrule
\bottomrule
\end{tabular}
\caption{Adversarial Accuracy for CIFAR with CNN architecture and PGD-$L_2$ attacks.}\label{Table:CIFAR}
\end{table}

\begin{table}[H]
\centering
\begin{tabular}{llllllllllllll}
\toprule
\toprule
{} & \multicolumn{9}{l}{} \\
\hfill $\boldsymbol{\rho}=$ &          0.000  &        0.010  &               0.020  &          0.030  &          0.100  &          1.000  &          3.000  &          5.000  &          10.000 \\
\midrule
aRUB-$L_1$             &           91.25  &             91.09 &                 90.78 &           90.55 &           89.02 &           82.46 &           68.95 &           54.80 &           33.20 \\
aRUB-$L_\infty$             &  \textbf{91.37} &    \textbf{91.13} &   \textbf{91.05} &  \textbf{90.86} &  \textbf{90.59} &           82.81 &           71.45 &           60.23 &           30.86 \\
\cdashline{1-10}
PGD-$L_\infty$            &           90.59 &                    90.55 &                  90.43 &           90.23 &           89.30 &  \textbf{84.45} &  \textbf{73.20} &  \textbf{67.54} &  \textbf{57.77} \\
Nominal       &           91.02 &                 90.90 &                90.62 &           90.43 &           89.02 &           77.85 &           56.72 &           40.51 &           10.16 \\
\bottomrule
\bottomrule
\end{tabular}\caption{Adversarial Accuracy for Fashion MNIST with  CNN architecture and PGD-$L_2$ attacks.}\label{Table:FASION}
\end{table}

\begin{table}[H]
\centering
\begin{tabular}{llllllllllllll}
\toprule
\toprule
{} & \multicolumn{9}{l}{} \\
\hfill $\boldsymbol{\rho}=$ &          0.000  &          0.010  &                   0.020  &          0.030  &          0.100  &          1.000  &          3.000  &          5.000  &          10.000 \\   
\midrule
aRUB-$L_1$               &  \textbf{99.38} &    \textbf{99.38} &    \textbf{99.38} &           99.34 &  \textbf{99.30} &           98.32 &           91.68 &           70.51 &           33.83 \\
aRUB-$L_\infty$             &  \textbf{99.38} &    \textbf{99.38} &    \textbf{99.38} &  \textbf{99.38} &  \textbf{99.30} &  \textbf{98.79} &           95.70 &           87.46 &           47.93 \\
\cdashline{1-10}
PGD-$L_\infty$             &           99.22 &                   99.22 &                    99.14 &           99.14 &           99.02 &           98.55 &  \textbf{96.37} &  \textbf{91.29} &  \textbf{55.39} \\
Nominal       &           99.30 &                    99.26 &                   99.26 &           99.26 &           99.18 &           97.93 &           89.69 &           60.00 &            9.34 \\
\bottomrule
\bottomrule
\end{tabular}
\caption{Adversarial Accuracy for MNIST with CNN architecture and PGD-$L_2$ attacks.}\label{Table:MNIST}
\end{table}

\end{document}